\documentclass[12pt, letterpaper, twoside]{article}
\usepackage{cite}
\usepackage{amsmath,amssymb,amsfonts}
\usepackage{amsthm}
\usepackage{hyperref} 
\usepackage{algorithmic}
\usepackage{graphicx}
\usepackage{textcomp}
\usepackage[usenames,dvipsnames,table]{xcolor}
\usepackage{mathtools}
\usepackage{xspace}
\usepackage{microtype}
\usepackage{tkz-graph}
\usepackage[USenglish]{babel}
\usepackage[ruled,vlined,linesnumbered,noend,algo2e]{algorithm2e} 
\usepackage{caption}
\usepackage{subcaption} 
\usepackage{array}
\usepackage{tkz-graph}
\usepackage{authblk}
\usepackage{nag}       

\usepackage{booktabs}   
\usepackage{enumitem}

\newcommand{\edge}[2]{\{#1, #2\}}
\newcommand{\sset}[1]{\mathcal{#1}}

\DeclareMathOperator{\smt}{smt}
\DeclareMathOperator{\csmt}{csmt}
\DeclareMathOperator{\computeSMT}{retraceSMT}
\DeclareMathOperator{\retraceTree}{tree}

\newcommand{\Card}[1]{\left|#1\right|}

\newcommand{\SB}{\{}%
\newcommand{\SM}{\mid}%
\newcommand{\SE}{\}}%
\def\hy{\hbox{-}\nobreak\hskip0pt}

\newcommand{\eqdef}{\ensuremath{\,\mathrel{\mathop:}=}}

\newcommand{\defeq}{\vcentcolon=}
\newcommand{\astar}{\ensuremath{\text{A}\hspace{-0.125em}^*}\xspace}
\newcommand{\dsstar}{\ensuremath{\text{DS}\hspace{-0.125em}^*}\xspace}
\newcommand{\dsstarsolver}{\ensuremath{\text{DS}\hspace{-0.125em}^*\allowbreak{}Solve}\xspace}    
\newcommand{\longversion}[1]{}

\newcommand{\newacronym}[3]{}
\newacronym{sph}{SPH}{Shortest Path Heuristic}
\newacronym{rsph}{RSPH}{Repeated Shortest Path Heuristic}
\newacronym{sap}{SAP}{Steiner Arborescence Problem}
\newacronym{pace}{PACE}{Parameterized Algorithms and Computational Experiments Challenge}
\newacronym{ptm}{PTm}{Paths with many Terminals}
\newacronym{sdc}{SDC}{Steiner Distance Circuit}
\newacronym{ntdk}{$NTD_k$}{Non-Terminals of degree k}
\newacronym{ilp}{ILP}{Integer Linear Program}
\newacronym{lp}{LP}{Linear Program}
\newacronym{smt}{SMT}{Steiner minimal tree}
\newacronym{fpt}{FPT}{Fixed Parameter Tractable}
\newacronym{tsp}{TSP}{Traveling Salesman Problem}
\newacronym{mst}{MST}{Minimum Spanning Tree}
\newacronym{stp}{STP}{Steiner tree problem}
\newacronym{bb}{B\&B}{Branch and Bound}
\newacronym{bc}{B\&C}{Branch and Cut}
\newacronym{mcn}{MCNFP}{Minimum-cost Network Flow Problem}
\newacronym{nsv}{NSV}{Nearest Special Vertices}
\newacronym{dwa}{DWA}{Dreyfus-Wagner Algorithm}
\newacronym{dsa}{DSA}{Dijkstra-Steiner algorithm}
\newacronym{dsaa}{DSAA}{Dijkstra-Steiner Algorithm for Admissibility}
\newacronym{mips}{MIPS}{Mixed Integer Programming Solver}

\newtheorem{thm}{Theorem}
\newtheorem{Theorem}[thm]{Theorem}
\newtheorem{Example}[thm]{Example}
\newtheorem{Definition}[thm]{Definition}
\newtheorem{proposition}[thm]{Proposition}
\newtheorem{lemma}[thm]{Lemma}

\makeatletter
\newcommand{\algorithmfootnote}[2][\footnotesize]{
  \let\old@algocf@finish\@algocf@finish
  \def\@algocf@finish{\old@algocf@finish
    \leavevmode\rlap{\begin{minipage}{\linewidth}
    #1#2
    \end{minipage}}
  }
}
\makeatother

\newcommand{\footnoteitext}[1]{\stepcounter{footnote}
  \footnotetext[\thefootnote]{#1}}

\newcolumntype{H}{>{\setbox0=\hbox\bgroup}c<{\egroup}@{}}

\SetKwProg{Fn}{Function}{is}{}
\SetKwRepeat{Do}{do}{while}

\SetKwInput{KwData}{In}
\SetKwInput{KwResult}{Out}
\setlength{\textfloatsep}{1em}
\SetAlFnt{\small}
\SetAlCapFnt{\small}
\SetAlCapNameFnt{\small}
\SetAlCapHSkip{0pt}
\SetEndCharOfAlgoLine{}

\DontPrintSemicolon

\newenvironment{restatetheorem}[1][\unskip]{%
  \begingroup

}%
{%
  \endgroup
}%

\begin{document}

\title{Solving the Steiner Tree Problem\\ with few Terminals}

\author[1]{Fichte Johannes K.}
\author[2]{Hecher Markus\footnote{Hecher is also affiliated with the University of Potsdam, Germany.}}
\author[2]{Schidler Andr\'{e}}
\affil[1]{TU Dresden}
\affil[2]{TU Wien}

\maketitle

\begin{abstract}
The Steiner tree problem is a well-known problem in network design, routing, and VLSI
  design. Given a graph, edge costs, and a set of dedicated vertices (terminals), the Steiner tree problem asks to output a sub-graph that connects all  terminals at minimum cost.
%
%
%
%
A state-of-the-art algorithm to solve the Steiner tree problem by means of dynamic programming is the Dijkstra-Steiner algorithm.
The algorithm builds a Steiner tree of the entire instance by systematically searching for smaller instances, based on subsets of the terminals, and combining Steiner trees for these smaller instances. The search heavily relies on a guiding heuristic function in order to prune the search space.
However, to ensure correctness, this algorithm allows only for limited heuristic functions, namely, those that satisfy a so-called consistency condition.

In this paper, we enhance the Dijkstra-Steiner algorithm and establish a revisited algorithm, called \dsstar.
The \dsstar algorithm allows for arbitrary lower bounds as heuristics relaxing the previous condition on the heuristic function. 
Notably,  we can now use linear programming based lower bounds. 
Further, we capture new requirements for a heuristic function in a condition, which we call admissibility. We show that admissibility is indeed weaker than  consistency and establish correctness of the \dsstar algorithm when using an admissible heuristic function. 
We implement \dsstar and combine it with modern preprocessing, resulting in an open-source solver (\dsstarsolver). Finally, we compare its performance on standard benchmarks and observe a competitive behavior.
\end{abstract}


\section{Introduction}


The term \emph{Steiner tree problem on graphs} encompasses a class of graph problems that
all ask to connect specific vertices of a graph, so-called
\emph{terminals}, at minimum cost.  Connecting
the vertices usually requires to select edges of a given graph that form 
a tree and the resulting solution is called a \emph{Steiner tree}.
While numerous different definitions
exist~\cite{HauptmannKarpinski15a}, we consider the \emph{minimum Steiner tree problem (STP)}, where costs are defined by positive integers given at the edges. The total cost  is simply the sum of costs of selected edges and one aims to minimize the total  cost~\cite{HwangRichardsWinter92a}.

STP is long known to be computationally hard,~i.e., NP-hard~\cite{Karp72}. 
Despite the longevity and the large body of existing research, solving STP is still an active research topic~\cite{FischettiEtAl17a,HougardySilvanusVygen16a,IwataShigemura19a,Nederlof13a,
PajorUchoaWerneck18a}. 
Interest stems from applications in various fields such as the
construction of evolutionary trees in phylogeny~\cite{Gusfield97a}, 
network design~\cite{MagnantiWong84a}, routing
problems~\cite{Gilbert89a}, and VLSI
design~\cite{GrotschelMartinWeismantel97a}.
Over time, many heuristic and exact solving algorithms and techniques
have been developed~\cite{HwangRichardsWinter92a}. 
One of the longest known algorithms for STP is the
Dreyfus-Wagner algorithm~\cite{DreyfusWagner72a}. 
This
algorithm implements dynamic programming with runtime 
bounded exponentially in the number
of terminals and polynomially in the number of vertices.
%
%
%
In theory, the Dreyfus-Wagner algorithm is fast as long as the
number of terminals remains small.
However, in practice, the involved constants are far from optimal and
the runtime of the algorithm increases too~quickly
%
%
as the algorithm exhaustively enumerates all sub-solutions. 
This can be avoided and the runtime improved by replacing 
enumeration with graph search, resulting in the Dijkstra-Steiner
algorithm~\cite{HougardySilvanusVygen16a}.
Its underlying idea is similar to the well-known \astar algorithm~\cite{HartNilssonRaphael68a}. One uses the Dijkstra's algorithm to navigate the search space and guide the search using a heuristic function.
In practice, the heuristic function is crucial for the performance of the algorithm.
Unfortunately, the heuristic needs to satisfy a strong condition, namely, providing consistent lower bounds, as otherwise correctness cannot be assured. Consistency guarantees that the lower bound is monotonous and the estimate will not decrease in later iterations of the Dijkstra-Steiner algorithm.

\medskip\noindent\textbf{Contributions.}  In this paper, we revisit
the Dijkstra-Steiner algorithm and enable heuristic functions for
general lower bounds relaxing the previously known consistency condition.  
We formalize
new requirements in a condition called \emph{admissibility}.  We prove
that admissibility is weaker than consistency and allows for better
heuristic functions. 
In particular, we can now employ linear programming based approximations as heuristics.
We show that our approach still ensures correctness.
We implement the new algorithm into a fully fledged solver,
\emph{\dsstarsolver}, for STP.
Our solver complements the solving algorithm with modern preprocessing, upper bound heuristics, and local optimization.
Finally, we present experimental results where we compare our solver
 to state-of-the-art STP solvers.
Our experiments show that our revisited algorithm with the new heuristic
significantly improves the runtime.
%

\medskip\noindent\textbf{Algorithms and latest implementations.}  The
Dreyfus-Wagner algorithm~\cite{DreyfusWagner72a} was generalized by
Erickson, Monma, and Veinott~\cite{EricksonMonmaVeinott87a}.
While the algorithm is exponential in the number of terminals with a
basis of~3, it was very recently implemented into the solver Pruned
showing good performance on certain
benchmarks~\cite{IwataShigemura19a}.
%
The Dijkstra-Steiner algorithm~\cite{HougardySilvanusVygen16a}  is implemented in  the solvers  Jagiellonian~\cite{MaziarzPolak18a} and HSV~\cite{HougardySilvanusVygen16a}. We also implemented it  into our solver to measure the difference with our revisited algorithm and heuristic.
%
%
While our approach tackles instances with a limited number of
terminals, there are also dynamic programming algorithms that employ
low treewidth or rank~\cite{CyganEtAl15,BodlaenderEtAl15a}. The
solvers wata\_sigma~\cite{Iwata18a}, Tom~\cite{Zanden18a},
and~FIT~CTU~\cite{MituraSuchy18a} implement such algorithms and showed
successful results in the PACE~2018 challenge.
Another approach is branch-and-cut, which is based on
linear programming using existing solvers for (mixed) integer linear
programs (MILP).
%
Various solvers are available that perform very well on arbitrary STP
instances and can mainly be distinguished by the used linear
programming model.
Two notable solvers are mozartballs~\cite{FischettiEtAl17a} and
SCIP-Jack~\cite{GamrathEtAl17a}.

\noindent\textbf{Challenges.}
In 2014, the 11th DIMACS implementation challenge was dedicated to different variants of the Steiner tree problem~\cite{JohnsonEtAl14a}. The 3rd
Parameterized Algorithms and Computational Experiments challenge (PACE
2018) addressed the minimum Steiner tree problem and featured three
tracks, namely, a) where instances were limited in the number
of terminals, b) where instances were limited in the treewidth, and c)
that allowed to submit heuristics~\cite{BonnetSikora19a}. Solvers in these competitions mainly used  two different techniques: \emph{dynamic programming} and
\emph{branch-and-cut}.
%

%




%

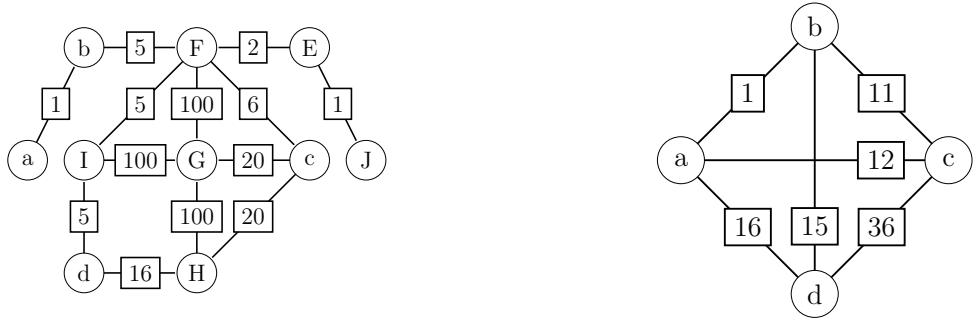
\begin{figure*}[t]
\centering
\begin{subfigure}{0.5\columnwidth}
\centering
\resizebox{.75\columnwidth}{!} {%
	\begin{tikzpicture}[shorten >=1pt]
	\definecolor{clr1}{RGB}{166,206,227}
	\definecolor{clr2}{RGB}{31,120,180}
	\definecolor{clr3}{RGB}{178,223,138}
    \definecolor{clr4}{RGB}{51,160,44}
     \SetGraphUnit{2}
     \GraphInit[vstyle=normal]
     \tikzset{LabelStyle/.style =   {draw,
                                     }}
     \tikzset{VertexStyle/.style = {shape          = circle,
     							draw,
                                 minimum size   = 20 pt,
                                 inner sep      = 2pt,
                                 outer sep      = 0pt
                                 }}
    \Vertex{F}
    \SOWE(F){I} 
    \EA(I){G}
    \SO(I){d}
    \EA(d){H}
    \EA(G){c}
    \WE(F){b}
    \EA(F){E}
    {
        \SetGraphUnit{1}
        \WE(I){a}
        \EA(c){J}
    }
         
    \Edge[label=1](a)(b)
    \Edge[label=2](E)(F)
    \Edge[label=5](b)(F)
    \Edge[label=6](F)(c)
    \Edge[label=100](F)(G)
    \Edge[label=100](I)(G)
    \Edge[label=100](H)(G)
    \Edge[label=20](c)(G)
    \Edge[label=20](c)(H)
    \Edge[label=16](d)(H)
    \Edge[label=5](d)(I)
    \Edge[label=5](F)(I)
    \Edge[label=1](E)(J)
\end{tikzpicture}%
}%
\end{subfigure}%
\begin{subfigure}{0.7\columnwidth}
\centering
\resizebox{.45\columnwidth}{!} {%
\begin{tikzpicture}
     \SetGraphUnit{2}
     \GraphInit[vstyle=normal]
     \tikzset{LabelStyle/.style =   {draw,
              	                      }}
      \tikzset{VertexStyle/.style = {shape          = circle,
     							draw,
                                 minimum size   = 20 pt,
                                 inner sep      = 2pt,
                                 outer sep      = 0pt
                                 }}
     \Vertex{b}
     \SOEA(b){c}
     \SOWE(b){a}
     \SOEA(a){d}

     \Edge[label=1](a)(b)     
     \Edge[label=16](a)(d)
     \Edge[label=11](b)(c)
     \Edge[label=36](c)(d)   
     \tikzset{LabelStyle/.style =   {draw, pos=0.8}}
     \Edge[label=12](a)(c)
     \Edge[label=15](b)(d)
\end{tikzpicture}%
}%
\end{subfigure}%
\begin{subfigure}{0.7\columnwidth}
\centering
\resizebox{.54\columnwidth}{!} {%
    \begin{tikzpicture}
     \SetGraphUnit{2}
     \GraphInit[vstyle=normal]
     \tikzset{LabelStyle/.style =   {draw,
                                     }}
      \tikzset{VertexStyle/.style = {shape          = circle,
     							draw,
                                 minimum size   = 20 pt,
                                 inner sep      = 2pt,
                                 outer sep      = 0pt
                                 }}
     \Vertex{I}
     \NOEA(I){F}
     \SO(I){d}
     \SOEA(F){c}
     \WE(F){b}
    {
     \SetGraphUnit{1}
     \WE(I){a}
    }
         
     \Edge[label=1](a)(b)
     \Edge[label=5](b)(F)
     \Edge[label=6](F)(c)   
     \Edge[label=5](d)(I)
     \Edge[label=5](F)(I)
\end{tikzpicture}%
}
\end{subfigure}%
%
\caption{Example of a network~$N$ (left), distance network~$D_N(\{a,b,c,d\})$ (middle), and an SMT~$(N,\{a,b,c,d\})$ (right).}
\label{fig:network}
\end{figure*}

\section{Preliminaries}\label{chap:prelims}
\longversion{%
}


We assume familiarity with standard notions in computational
complexity~\cite{Papadimitriou94} and let 
$\mathbb{N}$ be the set of positive integers.

\longversion{
We use $\mathcal{O}(f(n))$ to classify the runtime of algorithms. Given a measurement $n$ for the size of the input, e.g. the number of vertices in a graph or the number of elements to sort, we define $\mathcal{O}$ as follows \cite{bachmann_analytische_1894, landau_handbuch_1909}:
\[
	\mathcal{O}(g(n)) = \{f(n) \mid (\exists c, n_0 > 0), (\forall n \geq n_0): 0 \leq f(n) \leq c \cdot g(n) \}
\]




We can divide problems into two basic groups. \emph{Tractable} and \emph{intractable} problems. A tractable problem is a problem that can be solved by an algorithm in time $\mathcal{O}(f(n))$, where $f(n)$ is a polynomial \cite{papadimitriou_computational_1995}. We also say the problem is solvable in \emph{polynomial time}.

An important class of problems are $\mathcal{NP}$-complete problems. These are problems that can only be solved in polynomial time by non-deterministic algorithms. For an exact definition we refer the reader to the literature \cite{papadimitriou_computational_1995}. We limit this discussion to some important properties. It is an open problem if $\mathcal{NP}$-complete problems are tractable. As of now no deterministic polynomial-time algorithms are known that solve such problems. $\mathcal{NP}$-hard problems are at least as hard to solve as $\mathcal{NP}$-complete problems.

As we cannot generally solve intractable problems efficiently, it is of interest to identify tractable classes, whose instances can be solved in polynomial time. \emph{Parameterized complexity} is one approach for this. Given a parameter $k$ of the problem, we can define the complexity of the problem under the assumption that the parameter is constant. This gives us a bound on those classes, where we know a bound for $k$.  Of particular importance for this thesis is the class of problems that are \emph{\gls{fpt}}. Let $I$ denote an instance, $k_I$ the value of the parameter $k$ for the instance and $|I|$ the size of the instance. A problem with parameter $k$ is in \gls{fpt} if there exists an algorithm, that solves every instance $I$ in time $\mathcal{O}(f(k_I) |I|^c)$ for a computable function $f$ and a constant $c$ \cite{cygan_parameterized_2015}.}

\medskip\noindent\textbf{Graphs, Networks, and Steiner Trees.}\label{chap:prelims-graphs}
\label{chap:prelims-st}
For basic terminology on graphs, we refer to the literature~\cite{Diestel12}. 
In particular, we use $G = (V, E)$ to denote an \emph{undirected, connected graph}, or graph for short,
where~$V$ is a set of \emph{vertices} and~$E\subseteq \{\{u,v\} \mid u,v\in V\}$ is a 
set of \emph{edges}. 
Given graphs~$G_1=(V_1,E_1)$ and $G_2=(V_2,E_2)$, $G_1$ is a \emph{sub-graph} of~$G_2$ if~$V_1\subseteq V_2$ and $E_1\subseteq E_2$.
Given a graph~$G=(V,E)$ and a vertex~$u\in V$.
Then, the set~$\delta_G(u)$ of \emph{incident edges} is given by $\delta_G(u)\eqdef \{\{u,v\} \mid \{u,v\} \in E\}$ and~$\Card{\delta_G(u)}$ refers to the \emph{degree of~$u$}.
%
%
%
%
A \emph{tree} is a graph that is acyclic and connected, i.e., for every pair of vertices of the graph there is a path between them.
%
%
%
Further, we define an \emph{undirected network}, or network for short, by~$N\eqdef(V, E, \sigma)$, 
	where graph~$(V,E)$ is connected and
$\sigma: E \rightarrow \mathbb{N}$ is a total mapping, called
\emph{edge costs}, which assigns to each edge $e$ some integer, called
\emph{cost} of $e$.
Let $N=(V,E,\sigma)$ be a network and let
%
%
%
$G=(V',E')$ be a sub-graph of~$(V,E)$.
We let the \emph{costs} of $G$ in $N$ be 
$c_N(G) \eqdef \sum_{e \in E'}\sigma(e)$.
Then, we denote by $d_N(u, v)$ the
\emph{distance} between two vertices~$u \in V$ and $v \in V$ in
$N$,~i.e.,
$d_N(u,v)$ is the length of the shortest path from~$u$ to $v$.
Formally, $d_N(u,v)\eqdef \min \SB c_N(P) \SM P\text{ is a connected sub-graph of~$G$ that contains } u \text{ and } v\SE$.
Further, we define the \emph{distance network} of~$V'$ for~$N$ by
$D_N(V')\eqdef(V', \SB \{u,v\} \SM u,v \in V'\SE, \sigma')$,
where~$\sigma'(\{u,v\})\eqdef d_N(u,v)$ for $u,v\in V'$.


\begin{Example}\label{ex:running}
  Figure~\ref{fig:network} (left) 
  illustrates network~$N$, which we use throughout the paper. Circles
  represent vertices, lines are edges, and edge costs are given by the
  rectangles. Further, terminals are denoted by lowercase letters.
  Figure~\ref{fig:network} (middle) 
  provides the distance network $D_N(\{a, b, c, d\})$ for~$N$.
\hfill
\end{Example}

An instance~$\mathcal{I}$ of the \emph{minimum Steiner Tree Problem (STP)} is
of the form~$\mathcal{I}\eqdef(N,R)$, where~$N=(V,E,\sigma)$ is a
network and $R\subseteq V$ is a non-empty set of vertices,
called \emph{terminals}.
%
A \emph{Steiner tree} for~$\mathcal{I}$ is a rooted tree $T\eqdef(V', E', r)$,
where $V' \subseteq V$, $R \subseteq V'$, $E' \subseteq E$, and~$r\in R$. We call $r$ the \emph{root}.
$\mathcal{T}(\mathcal{I})$ refers to the \emph{set of Steiner trees} for~$\mathcal{I}$. 
The tree~$T$ is called an \emph{SMT (Steiner minimal tree)}, if $c_N(T)=\min_{T' \in \mathcal{T}(\mathcal{I})} c_N(T')$.
Let~$\smt(\mathcal{I})$ be the set of~SMTs for~$\mathcal{I}$
and let $\csmt(\mathcal{I})\eqdef c_N(T)$ for any~$T\in\smt(\mathcal{I})$, called the \emph{$\csmt$ (Steiner minimal tree costs)}.
%
The tree~$T$ is a \emph{minimum spanning tree} of network~$N$, 
if~$T$ is an SMT for~$(N,V)$.

\begin{Example}\label{ex:instance}
Figure~\ref{fig:network} (right) 
depicts the SMT for our instance~$\mathcal{I}=(N,R)$ 
from Example~\ref{ex:running}, 
where~$N$ is given (left). 
The set~$R$ of terminals consists of the vertices~$R=\{a,b,c,d\}$ 
in lower case letters (left). 
\end{Example}

\medskip\noindent\textbf{Dijkstra's algorithm and the $A^*$ algorithm.}\label{chap:dij-astar}
Given a network~$N=(V,E,\sigma)$.
\emph{Dijkstra's algorithm} is used to find the shortest path from vertex $u \in V$ to vertex~$v\in V$.
In the following, we briefly mention the ideas of this algorithm.
For details, we refer to the literature~\cite{dijkstra_note_1959}.
Dijkstra's algorithm maintains a queue $Q$ and for each vertex~$w\in V$ a distance $l(w)$ between vertices $u$ and $w$. 
Initially, $Q = \{u\}$ and the distances $l(\cdot)$ are assumed to be $\infty$\footnote{In this paper, we use $\infty$ as an abbreviation for~$\Sigma_{e\in E}\sigma(e)$.}, except for $l(u)$, where the distance is known to be $0$. 
In each iteration the algorithm removes the vertex~$w$ from~$Q$ that minimizes $l(w)$.
Then, the algorithm \emph{expands}~$w$: 
For each $\{w, x\} \in \delta_{(V,E)}(w)$, the value~$l(w) + \sigma(\{w,x\})$ is computed. 
Whenever $l(w) + \sigma(\{w,x\}) < l(x)$, vertex $x$ is added to $Q$ and the (smaller) distance~$l(x) \leftarrow l(w) + \sigma(\{w,x\})$ is kept for~$x$. 
Whenever a node $w$ is removed from $Q$, it holds that $l(w) = d_N(u,w)$. Therefore, the algorithm stops as soon as~$v$ is removed from $Q$ and finds the distance in time $\mathcal{O}(\Card{E}+\Card{V}\cdot\log(\Card{V}))$~\cite{dijkstra_note_1959}.

The \emph{$A^*$} algorithm  is an extension of Dijkstra's algorithm that uses a \emph{heuristic function} $h: V \rightarrow \mathbb{N}$ to speed up the search. Assuming $h(x)$ is an estimate of the distance between $u$ and $x$, in each iteration of $A^*$, instead of removing vertex~$w$ from~$Q$, where $l(w)$ is minimal as in Dijkstra's algorithm, 
the $A^*$ algorithm removes~$w$, where $l(w) + h(w)$ is minimal. 
%
We~call~$h$ 
\begin{itemize}
\item \emph{admissible}, if $h(w) \leq d_N(w, v)$ for all $w \in V$, and
\item \emph{consistent}, if $h(w) \leq \sigma(\{w,y\}) + h(y)$ for all $\{w,y\} \in E$.
\end{itemize}
Intuitively, admissibility of~$h$ implies that~$h$ does not ``over-approximate'',
i.e., it provides a lower bound,
and consistency of~$h$ additionally establishes a form of triangle inequality for~$h$.
While $A^*$ is correct if $h$ is admissible, polynomial runtime can only be guaranteed
if the heuristic is consistent~\cite{HartNilssonRaphael68a}.

\section{Solving the Steiner Tree Problem}
In this section, we describe our advancement to the Dijkstra-Steiner (DS) algorithm. DS combines ideas from the A* and Dreyfus-Wagner algorithms. We first discuss the Dreyfus-Wagner algorithm, followed by DS. Finally, we lift DS to more general heuristic functions, which provide a lower bound on the costs, 
and show correctness. For this section, we assume an STP instance~$\mathcal{I}=(N,R)$ with network~$N=(V,E,\sigma)$  and set~$R$ of terminals,
where~$r\in R$ is an arbitrary terminal. The vertex~$r$ is used  
as the root of the resulting SMT. 

\subsection{The Dreyfus-Wagner (DW) algorithm}\label{chap:theory-solver-dreyfus}
The 
algorithm is motivated by the fact that any SMT for instance~$\mathcal{I}$
is guaranteed to consist of so-called sub-SMTs~\cite{DreyfusWagner72a}. Given a vertex~$u \in V$ and a set $I \subseteq R$, we define a \emph{sub-SMT~$S$} for $(u, I)$ as an SMT for instance~$(N,I\cup\{u\})$,
i.e., $S\in\smt((N,I\cup\{u\}))$.
Further,
we denote by~$l^*(u,I)$ the \emph{sub-SMT costs}, $l^*(u,I) = \csmt(N, I\cup\{u\})$. The sub-SMTs of an SMT are vertex-disjoint apart from their corresponding roots.
In other words, any SMT consists of sub-SMTs
that are ``joined'' using their root vertices, which are referred to by \emph{join vertices}.
Then, intuitively, it suffices~\cite{DreyfusWagner72a} to incrementally compute 
sub-SMTs for parts of the instance and join them accordingly.
Thereby, we create larger sub-SMTs, until finally ending at an SMT for~$(N,R)$.

\begin{Example}
Recall instance~$\mathcal{I}=(N,R)$, where~$R=\{a,b,c,d\}$ of Example~\ref{ex:instance}. 
In Figure~\ref{fig:network} (right), 
vertex $F$ is a join vertex 
for the Steiner tree 
and vertex $F$ joins the sub-SMTs for instances $(N,\{a,b, F\})$, $(N,\{c,F\})$, and $(N,\{d,F\})$. \hfill
\end{Example}

\begin{algorithm2e}[t]
\KwData{Instance $\mathcal{I}=(N, R)$ of STP where $N=$ $(V,E,\sigma)$ and a root~$r \in R$}
\KwResult{The costs of an SMT for~$\mathcal{I}$}
$R' \gets R \setminus \{r\}$ \;
$l^*(u, \{v\}) \gets d_N(u,v)$ for all $v \in R$, $u \in V$\;\label{alg:dw-steiner:dist}
\For{$t=2$ \KwTo $|R'|$}{
	\ForEach{$I \subseteq R^\prime$ such that $|I| = t$} {
    	\lForEach{$u \in V$} {\
			$l(u,I) \gets \displaystyle\min_{\emptyset \subset J \subset I} (l^*(u, J) + l^*(u, I \setminus J))$ \label{alg:dw-steiner:comb}
         }
         \lForEach{$u \in V$} {\
         	$l^*(u, I) \gets \displaystyle\min_{v \in V}(d_N(u, v) + l(v, I))$ \label{alg:dw-steiner:prop}
         }
    }
}
\Return{$l^*(r, R^\prime)$}
\caption{Dreyfus-Wagner (DW) algorithm \cite[Ch.2]{DreyfusWagner72a}}\label{alg:dw-steiner}
\end{algorithm2e}

{Listing~\ref{alg:dw-steiner} shows DW.
The sub-SMT costs are computed for~$(u,I)$ with increasing cardinalities of sets~$I$. 
Sub-SMTs for~$(u,\{v\})$ (singleton of terminals) are computed using the distance between $u$ and~$v$, cf., Line~\ref{alg:dw-steiner:dist}. 
The remaining sub-SMTs are computed in two steps. 
First, in step (i), a \emph{tentative cost value~$l(u,I)$} is computed for every vertex~$u\in V$ and each set~$I$ of cardinality~$t$, by \emph{combining costs}~$l^*(u,J)$ of
sub-SMTs for~$J\subsetneq I$ accordingly, cf., Line~\ref{alg:dw-steiner:comb}.
Intuitively, this corresponds to joining two sub\hy SMTs at root~$u$.
Then, in Step~(ii), cf., Line~\ref{alg:dw-steiner:prop}, sub-SMT costs~$l^*(u,I)$ are computed for each vertex~$u\in V$, 
by \emph{propagating costs~$l(v,I)$} to all vertices~$u\in V$.
Intuitively, Line~\ref{alg:dw-steiner:prop} corresponds to connecting a vertex~$u$ to
the sub-SMT for~$(v,I)$ by a path between~$u$ and~$v$.
After the algorithm terminates, $l^*(r, R\setminus\{r\}) = \csmt(N,R)$. The SMT can then be found by retracing the steps of the algorithm.

\begin{proposition}[\hspace{-0.1pt}\cite{DreyfusWagner72a,HougardySilvanusVygen16a}]\label{prop:runtime}
Given an instance~$\mathcal{I}=(N,R)$ of STP, where~$N=(V,E,\sigma)$.
The DW algorithm runs in time~$\mathcal{O}(3^{\Card{R}})$ and space~$\mathcal{O}(2^{\Card{R}}\Card{E})$.
\end{proposition}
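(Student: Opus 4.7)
The plan is to analyze the nested loops of Listing~\ref{alg:dw-steiner} directly and bound the total work using the standard identity $\sum_{k=0}^{n} \binom{n}{k} 2^k = 3^n$. First I would observe that the initialization in Line~\ref{alg:dw-steiner:dist} requires the pairwise distances $d_N(u,v)$ for all $u \in V$ and $v \in R$, which can be obtained in polynomial time by running Dijkstra's algorithm from every terminal; these distances are also reused in the propagate step and therefore stored once. The dominant work then comes from the combine step in Line~\ref{alg:dw-steiner:comb}: for each $I \subseteq R'$ with $\Card{I} \geq 2$ and each $u \in V$, the inner loop iterates over all non-empty proper subsets $J \subsetneq I$.

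The key counting step bounds the number of triples $(I, J, u)$. Summing over cardinalities,
$$\sum_{k=2}^{\Card{R'}} \binom{\Card{R'}}{k}(2^k - 2) \ \leq \ \sum_{k=0}^{\Card{R'}} \binom{\Card{R'}}{k} 2^k \ = \ 3^{\Card{R'}},$$
by the binomial theorem; combinatorially, each terminal in $R'$ is independently assigned to one of the three classes ``outside $I$'', ``in $I\setminus J$'', or ``in $J$''. Multiplying by the outer loop over $u\in V$ yields $\mathcal{O}(\Card{V}\cdot 3^{\Card{R}})$ work for Line~\ref{alg:dw-steiner:comb}. The propagate step in Line~\ref{alg:dw-steiner:prop} performs $\mathcal{O}(\Card{V})$ work per pair $(u,I)$, so it sums to $\mathcal{O}(\Card{V}^2\cdot 2^{\Card{R}})$, which is strictly dominated by the combine step. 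Suppressing the polynomial factors in $\Card{V}$ and $\Card{E}$, as is customary for $\mathcal{O}^{\ast}$-style bounds, gives the claimed $\mathcal{O}(3^{\Card{R}})$ runtime.

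For the space bound, I would note that the algorithm keeps the tables $l^{\ast}(u,I)$ and $l(u,I)$ indexed by $u \in V$ and $I \subseteq R'$, together with the pairwise distances used in Lines~\ref{alg:dw-steiner:dist} and~\ref{alg:dw-steiner:prop}. This amounts to $\mathcal{O}(\Card{V}\cdot 2^{\Card{R}})$ entries, and since $N$ is connected we have $\Card{V}-1\leq\Card{E}$, so the bound simplifies to $\mathcal{O}(\Card{E}\cdot 2^{\Card{R}})$, matching the claim.

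The only genuine obstacle is the combinatorial counting for the combine step; everything else (the initialization via Dijkstra, the quadratic propagate step, and the table size) is routine once that identity is in place. I would present the three-way assignment of terminals both as the intuitive justification and as the generating-function shortcut behind $\sum_k\binom{n}{k}2^k = 3^n$, so that the $3^{\Card{R}}$ factor is clearly explained rather than merely stated.
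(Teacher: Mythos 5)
The paper does not prove this proposition itself—it is quoted from the cited works of Dreyfus--Wagner and Hougardy--Silvanus--Vygen—and your argument is precisely the standard analysis those sources use: the three-way partition identity $\sum_k \binom{n}{k}2^k = 3^n$ for the combine step, a polynomially bounded propagate step, and an $\mathcal{O}(\Card{V}\cdot 2^{\Card{R}}) \le \mathcal{O}(\Card{E}\cdot 2^{\Card{R}})$ table for the space bound. Your counting is correct (with the understood suppression of polynomial factors, which the proposition's statement itself implicitly assumes), so nothing is missing.
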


\subsection{The Dijkstra-Steiner (DS) algorithm}\label{chap:solver:dijstein}
In theory, the runtime of DW seems suitable for instances with a small number of terminals. 
In practice this algorithm consumes too much time and memory, 
even for about a few dozen of terminals. 
However, one can still lower the runtime by changing how the whole search space is explored.
To this end, we first define for given instance~$\mathcal{I}=(N,R)$ of STP the \emph{Steiner search network} $\mathcal{N}(\mathcal{I})\eqdef (V', E', \sigma')$, where the vertices are a set of pairs among $V\times 2^R$,~i.e., $V'\eqdef \{(u, I), u \in V, \emptyset \subsetneq I \subseteq R \setminus \{r\} \}$.
Then, there is an edge~$e=\{(u,I),(v,J)\}$ in $E'$ between 
any two distinct vertices $(u, I)$ and $(v, J)$, if we have either (1)~$\{u,v\} \in \delta_{(V,E)}(u)$ and~$I=J$; or
(2)~$u=v$ and $\emptyset \subsetneq J  \subsetneq I$.
The cost for each edge~$e\in E'$ are given as follows $\sigma'(e)\eqdef w$,
where $w=\sigma(\{u,v\})$ in Case~(1), and $w=l^*(v, I \setminus J)$ in Case~(2). 
Since~$\mathcal{N}(\mathcal{I})$ is a network, we can apply \emph{Dijkstra's algorithm} to~$\mathcal{N}(\mathcal{I})$. 
Since we cannot construct~$\mathcal{N}(\mathcal{I})$ as 
$l^*$ is not known in advance, the algorithm  runs on a partial network that is dynamically amended.
During expansion of a vertex (in Dijkstra's algorithm),
we consider only those neighboring tuples that are either adjacent in the network
according to Case~(1) or where $(u, I \setminus J)$ has been expanded before for Case~(2). 
The improved 
algorithm 
is 
called the Dijkstra-Steiner (DS) algorithm~\cite{HougardySilvanusVygen16a}. 
DS has a worst-case runtime that is similar to DW. But DS does not expand
vertices~$(u,I)$, where costs $l^*(u,I)>\csmt(N,R)$, 
which can cut down the runtime considerably in practice.

Another crucial tool to reduce the runtime is a guiding heuristic function and pruning the search space. 
A \emph{Steiner (guiding) heuristic function} $h^*: V \times 2^R \to \mathbb{N}$, provides a \emph{(cost) lower bound}\footnote{We 
provide a formal definition in the next subsection.} on $l^*(u, I)$.  Similar to the \astar algorithm, DS chooses in each iteration the tuple $(u, I)$ that minimizes $l^*(u, I) + h^*(u, R \setminus I)$. 
Thereby, the algorithm ignores tuples~$(u,I)$ with $l^*(u, I) + h^*(u, R \setminus I) > \csmt(N,R)$, which can further reduce the number of expanded tuples. 
 In addition to a heuristic function, DS uses \emph{pruning} to speed up the search. 
An 
 \emph{upper bound on costs} is used to ignore tuples that do not contribute to an SMT.
Given such a (cost) upper bound on any SMT for $(N, I)$, where $I \subseteq R$, 
we can ignore tuples~$(u,I)$, where $l^*(u, I)$ exceeds this bound. 
Intuitively, this cuts down the number of considered tuples. %
Pruning and the Steiner heuristic function add their respective runtimes. 
While this increases the worst-case, it significantly reduces the runtime for most instances.

\begin{algorithm2e}[t]
\KwData{An STP instance $\mathcal{I}=(N,R)$, where $N=$ $(V,E,\sigma)$, root $r \in R$, and a Steiner heuristic $h^*$}
\KwResult{Pair $\langle T, c_N(T) \rangle$, where~$T$ is an SMT for~$\mathcal{I}$}
$R^\prime \eqdef R \setminus \{r\}$ \;
$l(u, I) \gets \infty$ for all $(u, I) \in V \times 2^{R^\prime}$ \;
$l(u, \{u\}) \gets 0$ for all $u \in R^\prime$ \;
$l(u, \emptyset) \gets 0$ for all $u \in V$ \;
$b(u, I) \gets \emptyset$ for all $(u, I) \in V \times 2^{R^\prime}$ \label{alg:dijkstra:bt1}\;
$Q \gets \{(u, \{u\}) \mid u \in R^\prime \}$ \;
$P \gets \emptyset$ \;
\While{$(r, R^\prime) \notin P$} {\label{alg:dijksteiner-main-loop}
	$(u, I) \gets \min_{(u, I)\in Q} l(u, I) + h^*(u, R \setminus I)$ \label{alg:dijkstra:heuristic}\; 
    $Q \gets Q \setminus \{(u, I)\}$ \;\label{alg:dijksteiner:expand}
    $P \gets P \cup \{(u, I)\}$ \;
    
       \ForEach{$\emptyset \subset J \subseteq R^\prime \setminus I$ with $(u, J) \in P$\label{alg:dijkstra1:merge}} {\label{alg:dijksteiner-set-loop}
    	\If{$l(u, I) + l(u, J) < l(u, I \cup J)$} {\label{alg:dijksteiner-if2}
        	$l(u, I \cup J) \gets l(u, I) + l(v, J)$ \;
        	$b(u, I \cup J) \gets \{(u, I), (u, J)\}$ \label{alg:dijkstra:bt3}\;
            \lIf{not $prune\_combine(u, I, J)$ \label{alg:dijkstra:prune2}} {
            	$Q \gets Q \cup \{l(u, I \cup J)\}$ \label{alg:dijksteiner-add2}
            }
        }
    }
    
    \ForEach{$\{u,v\} \in \delta_{(V,E)}(u)$}{\label{alg:dijksteiner-nb-loop}
    	\If{$l(u, I) + \sigma(\{u,v\}) < l(v, I)$} {\label{alg:dijksteiner-if1}
        	$l(v, I) \gets l(u, I) + \sigma(\{u,v\})$ \;
            $b(v, I) \gets \{(u, I)\}$ \label{alg:dijkstra:bt2}\;
            \lIf{not $prune(u, I)$ \label{alg:dijkstra:prune1}} { 
            	$Q \gets Q \cup \{(v, I)\}$\hspace{-1em} \label{alg:dijksteiner-add1}
            }
        }
    }
}
\Return{$\langle (\bigcup_{e\in E'} e, E', r), l(r, R^\prime)\rangle$, where $E'{=}\computeSMT(r, R^\prime)\hspace{-1em}$}\;
%
%
\caption{The \dsstar algorithm extending  DS~\cite{HougardySilvanusVygen16a}}\label{alg:dijkstra:final}\algorithmfootnote{Given vertex~$u\in V$ and $I\in 2^R$. Then, $\computeSMT(u, I) \eqdef \bigcup_{(u^\prime, I^\prime) \in b(u,I)} \computeSMT(u^\prime, I^\prime) \cup \{\{u, u^\prime\} \mid u\neq u'\}$.}
\end{algorithm2e}

Listing~\ref{alg:dijkstra:final} presents \emph{\dsstar}, the modified variant of DS. The modifications will be the topic of the next section.
Since $l^*$ is not known in advance (as mentioned above),
\dsstar maintains \emph{tentative costs~$l(u,I)$} for every tuple~$(u,I)$,
and uses set~$P$ to track expanded tuples~$(u,I)$.
Note that, after termination of \dsstar, 
tentative costs~$l(r,R\setminus\{r\})$ 
for root vertex~$r$ are guaranteed to be the optimal 
costs of any SMT for the instance,
i.e., $l(r,R\setminus\{r\})=l^*(r,R\setminus\{r\})=\csmt(N,R)$.
Further, \dsstar keeps track of tuples~$(u,I)$ 
contributing to $l^*(r,R\setminus\{r\})$,
which is maintained in \emph{retrace set~$b(r,R\setminus\{r\})$}.
In the end, retrace sets are used for retracing a (corresponding) SMT for~$(N,R)$.
%

%
\dsstar as presented in Listing~\ref{alg:dijkstra:final} works similarly to DW.
In particular, the two main steps of DW, (i) combining costs (Line~\ref{alg:dw-steiner:comb}) and (ii) propagating costs (Line~\ref{alg:dw-steiner:prop}), are performed in the loops in Line~\ref{alg:dijkstra1:merge} and Line~\ref{alg:dijksteiner-nb-loop}, respectively. 
However, the main difference between DW and \dsstar is the sequence of operations. 
Namely, in contrast to DW, \dsstar does not process sets of terminals in order of increasing cardinality, but ordered by increasing estimated costs, 
as computed in Line~\ref{alg:dijkstra:heuristic}. 
Unique to \dsstar is the pruning in Lines~\ref{alg:dijkstra:prune2} and~\ref{alg:dijkstra:prune1}. 
Finally, after expansion of~$(r,R')$, $\computeSMT$ is used to combine sub-SMTs in order to construct an SMT. 
For this construction, we use retracing sets 
as collected in Lines~\ref{alg:dijkstra:bt1}, \ref{alg:dijkstra:bt3}, and~\ref{alg:dijkstra:bt2}. 

\begin{Example}
Consider again our instance~$\mathcal{I}=(N,R)$, where~$N=(V,E,\sigma)$ from Example~\ref{ex:instance}, and let~$r\eqdef c$ be the root node.
In the following, we assume a perfect heuristic~$h^*$, where 
$h^*(u, I) = \csmt(N, I \cup \{u\})$ for any vertex~$(u,I)$ of~$\mathcal{N}(\mathcal{I})$.
Then, algorithm \dsstar requires $10$ iterations. 
The necessary operations can be tracked using the SMT in Figure~\ref{fig:network} (right), 
First, the initial tuples~$(a,\{a\}), (b, \{b\})$ and $(d, \{d\})$, all with costs $0$, are expanded, cf., Line~\ref{alg:dijksteiner:expand} of Listing~\ref{alg:dijkstra:final}.
Next, the tuples $(b, \{a\})$ of cost~1 and $(I, \{d\})$ of cost~5 are expanded. 
Then, tuple $(b, \{a, b\})$ of cost~1 gets expanded. 
After expanding $(F, \{d\})$ of cost~10 and $(F, \{a,b\})$ cost~6, we have the tuples needed for the set~$R\setminus\{c\}$ of terminals. 
It remains to connect it to the root, which 
 is achieved by expanding $(F, \{a,b,d\})$ of cost~16 and finally $(c, \{a,b,d\})$ of cost 22 yielding $\csmt(N, \{a,b,c,d\})=22$. 
This run creates 20 tuples in $Q$.
Without the heuristic, i.e., assuming $h^*(u,I)=0$ for any vertex~$(u,I)$ of~$\mathcal{N}(\mathcal{I})$, every tuple~$(v,J)$ with costs smaller than 22 has to be expanded. 
This takes about 55 iterations and 90 entries in $Q$. 
Note that this is still considerably less than the absolute worst case of about 160 iterations as in Listing~\ref{alg:dw-steiner} (DW).\hfill
%
\end{Example}
\vspace{-.9em}

\subsection{Extended Dijkstra-Steiner \dsstar for admissibility}
In this section, we discuss our changes to DS. Since $\mathcal{N}(\mathcal{I})$ is a dynamic graph structure and DS contains adaptions to the \astar algorithm 
correctness is not immediately
 obvious~\cite{HougardySilvanusVygen16a}. We therefore also discuss correctness of our new algorithm \dsstar.

We establish the following definition to  lift the existing correctness result of DS for any Steiner heuristic function
that provides a cost lower bound.
\begin{Definition}
Given an instance~$\mathcal{I}=(N,R)$ of~STP, where $N=(V,E,\sigma)$ and $r\in R$,
and a Steiner heuristic function~$h^*: V \times 2^R \to \mathbb{N}$.
Then, we say~$h^*$ is
\begin{itemize}
\item \emph{admissible}, if 
$h^*(u, I) \leq l^*(u, I)$
for every $u \in V$ and $\{r\} \subseteq I \subseteq R
$; and 

\item 
\emph{consistent}, cf.~\cite{HougardySilvanusVygen16a}, if
$ h^*(u, I) \leq h^*(v, I^\prime) + l^*(v, (I \setminus I^\prime) \cup
\{u\})$ for every $u, v \in V$ and  $\{r\}
\subseteq I^\prime \subseteq I \subseteq R $.
\end{itemize}
\end{Definition}

\noindent Intuitively, with this definition we lift the concept of admissibility of heuristic functions~$h$ to Steiner heuristic functions~$h^*$
and establish the perspective of admissibility and consistency in the context of Steiner heuristic functions.
Similar to heuristic functions for~$A^*$, an admissible Steiner heuristic function~$h^*$ does not ``over-approximate'', i.e., it provides a lower bound,
which is a generalization of the stricter consistency notion~\cite{HougardySilvanusVygen16a} that was used before.
Note, 
that a vertex of the Steiner search network 
intuitively refers to several sub-graphs of potential Steiner trees
and not to a simple path as used in~$A^*$. 
As a result, a Steiner heuristic function goes beyond plain heuristic functions 
on top of the Steiner network.
Next, we show that consistency still implies admissibility.

\begin{proposition}
Given an instance~$(N,R)$ of STP, where~$r\in R$ and a consistent Steiner heuristic function~$h^*$. Then, $h^*$ is admissible.
\end{proposition}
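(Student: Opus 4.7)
The plan is to derive admissibility from consistency via a single, carefully chosen instantiation of the consistency inequality, rather than by induction on $|I|$.

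Fix an arbitrary $u \in V$ and $I$ with $\{r\} \subseteq I \subseteq R$. I would specialize consistency at $v \defeq r$ and $I' \defeq \{r\}$. The hypothesis $\{r\} \subseteq I' \subseteq I$ is trivially satisfied, and consistency then yields
\[
  h^*(u, I) \,\leq\, h^*(r, \{r\}) + l^*\!\bigl(r,\, (I \setminus \{r\}) \cup \{u\}\bigr).
\]
The key simplification is the identity
\[
  l^*\!\bigl(r,\, (I \setminus \{r\}) \cup \{u\}\bigr) \;=\; \csmt\!\bigl(N,\, I \cup \{u\}\bigr) \;=\; l^*(u, I),
\]
which I would justify from the definition $l^*(x, J) = \csmt(N, J \cup \{x\})$ by noting that the Steiner minimal tree cost depends only on the underlying terminal set and is invariant under which terminal is singled out as the root. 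Both expressions unwind to $\csmt(N, I \cup \{u\})$: on the left, taking the union with $\{r\}$ merely re-inserts $r$, and on the right, taking the union with $\{u\}$ is the definition. Substituting back gives $h^*(u, I) \leq h^*(r, \{r\}) + l^*(u, I)$.

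It then remains to discharge the boundary term $h^*(r, \{r\})$. Since $l^*(r, \{r\}) = \csmt(N, \{r\}) = 0$, the standard convention for Steiner heuristics is that $h^*(r, \{r\}) = 0$ (the heuristic vanishes on the ``goal'' configuration where the instance is already solved). With this, $h^*(u, I) \leq l^*(u, I)$ for every $u \in V$ and every $I$ with $\{r\} \subseteq I \subseteq R$, which is exactly admissibility.

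The main obstacle I anticipate is making the final step fully rigorous: consistency as stated only yields the tautology $h^*(r, \{r\}) \leq h^*(r, \{r\})$ when specialized at $u = r$, $I = I' = \{r\}$, so the vanishing of the boundary term does not drop out of consistency alone. I would therefore either explicitly assume (as an implicit part of the heuristic's definition, in line with the $A^*$ convention $h(\text{goal}) = 0$) that $h^*(r, \{r\}) = 0$, or equivalently read the codomain of $h^*$ as the non-negative integers and require the goal value to be zero; the rest of the argument is purely algebraic and requires no case analysis on whether $u \in I$, since the identity $l^*(r, (I \setminus \{r\}) \cup \{u\}) = l^*(u, I)$ holds uniformly.
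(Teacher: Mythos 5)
Your proof is correct (modulo the boundary value you flag) and takes a genuinely different, more direct route than the paper's. The paper argues by induction on $\Card{I}$: the base case $I=\{r\}$ is delegated to the classical \astar argument of Dechter--Pearl, and the induction step invokes the Dreyfus--Wagner decomposition of $l^*(u,I)$ (either $l^*(u,I)=l^*(u,J)+l^*(u,I\setminus J)$ for some $J\subsetneq I$, or $l^*(u,I)=l^*(v,I)+d_N(u,v)$ with $v$ falling under the first case), using consistency only in the ``local'' instantiations $v=u$ and $I'=I$. You instead instantiate consistency once, at $v=r$, $I'=\{r\}$ (legitimate since $\{r\}\subseteq\{r\}\subseteq I$), and exploit the identity $l^*(r,(I\setminus\{r\})\cup\{u\})=\csmt(N,I\cup\{u\})=l^*(u,I)$, which holds precisely because $r\in I$ and $\csmt$ depends only on the terminal set; this collapses the entire induction, avoids the DW structure theorem, and even subsumes the paper's base case (with $I=\{r\}$ it yields $h^*(u,\{r\})\le h^*(r,\{r\})+d_N(u,r)$). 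The price is the same in both proofs: one needs $h^*(r,\{r\})=0$. You are right that this does not follow from the displayed consistency inequality alone --- for instance $h^*\eqdef l^*+c$ with a constant $c>0$ satisfies that inequality yet is not admissible, so the proposition is literally false without a goal-value/nonnegativity convention --- and the paper's own proof relies on it just as much, hidden in the base case where the cited \astar result presupposes a zero heuristic value at the goal. Making that assumption explicit, as you do, is the honest way to present it; with it, your argument is complete, shorter, and more elementary than the paper's induction, whose main advantage is that it only uses the local instances of consistency that mirror how the condition is exploited in the algorithm's analysis.
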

\begin{proof}[Proof (Sketch)]
We show that~$h^*(u, I) \leq l^*(u, I)$ for every $u \in V, \{r\} \subseteq I \subseteq R$, by induction on the cardinality of $I$. 
The base case of $\Card{I} = \Card{\{r\}}=1$ is similar to finding the shortest path in~$N$ from~$u$ to~$r$, as used in A$^*$.
In particular, in this case consistency and admissibility of Steiner heuristic functions are
a special case of consistency and admissibility of heuristic functions, respectively. 
Therefore, the result follows by the same argument~\cite{dechter_1985}, as in the original work for~A$^*$.
In the induction step we show that~$h^*(u, I) \leq l^*(u, I)$ for every $u \in V, \{r\} \subseteq I \subseteq R$, assuming that~$h^*(v, J) \leq l^*(v, J)$ for every $v \in V, \{r\} \subseteq J \subsetneq I$.
Thereby, we use a result~\cite{DreyfusWagner72a}, which shows that for each 
vertex $u\in V$, either one of the cases holds:
\begin{enumerate}
	\item $l^*(u, I)$ = $l^*(u, J) + l^*(u, I \setminus J)$, with $J \subsetneq I$:
	By consistency of~$h^*$ and by the induction hypothesis, we have $h^*(u, I) \leq h^*(u, J) + l^*(u, (I \setminus J) \cup \{u\}) \leq l^*(u, J) + l^*(u, I \setminus J) \leq l^*(u, I)$.
	\item $l^*(u, I)$ = $l^*(v, I) + d_N(u, v)$, for $u \neq v$, and for $v$ we have Case~1. Then, we conclude by consistency of~$h^*$ and by Case~1: $h^*(u, I) \leq h^*(v, I) + l^*(v, (I \setminus I) \cup \{u\}) \leq l^*(v, I) + d_N(u, v) = l^*(u, I)$.
        \end{enumerate}%
\vspace{-1.5em}
\end{proof}
%

\vspace{-.5em}
\noindent However, consistency is indeed strictly stronger, i.e., one can easily construct examples of non-consistent, admissible Steiner heuristic functions. 

\begin{Example}
  Consider network~$N$ given in Figure~\ref{fig:network} (left). 
Let $r\eqdef c$ and $h^*$ be an admissible Steiner heuristic function. Take $h^*(d, \{c\}) \eqdef 16$ and $h^*(F, \{c\})\eqdef 1$. Observe that $h^*$ is admissible, but not consistent, as $h^*(d,\{d\}) > h^*(F,\{c\}) + l^*(F,\{d\})$, i.e., $16 > 1 + 10$.
  
  \hfill
\end{Example}

\vspace{-1em}
\noindent So far, correctness of DS is known for consistent Steiner heuristic functions~\cite{HougardySilvanusVygen16a}.
In this case, consistency guarantees optimality of expanded tuples, i.e., 
for every expanded tuple~$(u,I)$ (in $P$, cf., Listing~\ref{alg:dijkstra:final}), we have~$l(u,I)=l^*(u,I)$.
On the contrary, in case of mere {admissibility},
which, intuitively, ensures only a cost lower bound,
the algorithm might find a lower value for~$l(u,I)$ even after expansion of~$(u,I)$, i.e. it only holds that $l(u,I) \geq l^*(u,I)$. 
Recall Listing~\ref{alg:dijkstra:final},
which indeed presents the \dsstar algorithm.
Actually, only Lines~\ref{alg:dijksteiner-if1} and~\ref{alg:dijksteiner-if2} of Listing~\ref{alg:dijkstra:final} differ slightly from the original DS algorithm~\cite{HougardySilvanusVygen16a}. During expansion, \dsstar explicitly considers already expanded tuples in $P$ since the tentative cost value~$l(v,I)$ for an expanded 
tuple~$(v,I)\in P$ is not guaranteed to be optimal.

Although the changes are small, the effect is significant: new Steiner heuristic functions can be used with \dsstar and since the proof of correctness for~DS strongly depends on tentative costs of expanded tuples being optimal, we have to use a different approach for showing correctness of~\dsstar. \footnoteitext{\label{lab:fullproofs}Proofs of theorems marked with~``$\star$'' can be found in  the appendix.}%
\begin{Theorem}[$\star^{\ref{lab:fullproofs}}$, main result]\label{thm:main}
For an instance $(N,R)$ of STP where~$N=(V,E,\sigma)$ and a root~$r\in R$,
 the \dsstar Algorithm given in Listing~\ref{alg:dijkstra:final}, terminates if $h^*$ is admissible. 
Further, after termination, $l(r, R \setminus\{r\})$ is the cost of an SMT for the instance~$(N, R)$, i.e., $l(r, R \setminus\{r\})=l^*(r, R\setminus\{r\}=\csmt(N,R)$. 
\end{Theorem}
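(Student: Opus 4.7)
The plan is to prove termination first and then correctness. For \textbf{termination}, I would observe that the state space is finite: there are at most $|V|\cdot 2^{|R|}$ distinct tuples $(u,I)$ ever inserted into $Q$. Each insertion into $Q$ at lines \ref{alg:dijksteiner-add1} or \ref{alg:dijksteiner-add2} is guarded by a strict decrease of the integer-valued tentative cost $l(u,I)$ (the conditions in lines \ref{alg:dijksteiner-if1} and \ref{alg:dijksteiner-if2}), and $l(u,I) \geq 0$ is bounded below. Thus each tuple can be enqueued at most $\csmt(N,R)+1$ times, giving a finite total number of iterations.

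For \textbf{correctness}, I would split the target equation $l(r,R\setminus\{r\}) = \csmt(N,R)$ into two inequalities. The ``$\geq$'' direction follows from a structural invariant that I would prove by induction on the sequence of updates to $l$: whenever $l(u,I)<\infty$, this value is the cost $c_N(T_{u,I})$ of some connected subgraph $T_{u,I}$ of $N$ containing $u$ and every vertex in $I$. Initialisation trivially satisfies this with singleton or empty trees; the combine update at line~\ref{alg:dijksteiner-if2} glues two such subgraphs at their common root $u$ so the combined cost is still the cost of a valid sub-SMT witness; the neighbour update at line~\ref{alg:dijksteiner-if1} extends a witness tree by the edge $\{u,v\}$. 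Hence at termination $l(r,R\setminus\{r\}) \geq \csmt(N,R)$, and the retracing function $\computeSMT$ yields an actual Steiner tree of that cost.

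The ``$\leq$'' direction is the main obstacle, and it is where the differences from the consistency-based proof of~\cite{HougardySilvanusVygen16a} bite. Since admissibility no longer forces $l(u,I)=l^*(u,I)$ at the moment $(u,I)$ is moved into $P$, I cannot reuse the standard A$^*$ argument directly. My plan is to argue by contradiction: suppose $(r,R\setminus\{r\})$ is extracted from $Q$ with $l(r,R\setminus\{r\}) > \csmt(N,R)$. Fix an SMT $T^{\star}$ and its Dreyfus--Wagner decomposition into sub-SMT tuples, which forms a rooted tree of tuples whose leaves are $(v,\{v\})$ for $v\in R\setminus\{r\}$ and whose root is $(r,R\setminus\{r\})$. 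I would induct over this decomposition tree, showing that for every tuple $(v,J)$ occurring in it, at some step before $(r,R\setminus\{r\})$ is popped we have $l(v,J)\leq l^{*}(v,J)$. For the leaves this holds at initialisation. For an internal tuple obtained by combining two sub-SMTs (case~1 of Proposition~\ref{prop:runtime}'s underlying lemma used earlier) or by extending along an edge (case~2), the inductive hypothesis together with the fact that lines \ref{alg:dijksteiner-if1} and \ref{alg:dijksteiner-if2} also fire on already-expanded tuples gives the required bound after both predecessors have been expanded.

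Once this invariant is established, admissibility closes the argument: the tuple $(v,J)$ on the optimal decomposition closest to the root whose extraction from $Q$ would precede $(r,R\setminus\{r\})$ satisfies
\[
l(v,J) + h^{*}(v,R\setminus J) \;\leq\; l^{*}(v,J) + l^{*}(v,R\setminus J) \;\leq\; \csmt(N,R) \;<\; l(r,R\setminus\{r\}),
\]
so the priority in line~\ref{alg:dijkstra:heuristic} would have chosen $(v,J)$ over $(r,R\setminus\{r\})$, contradicting the assumption. The hard part is the bookkeeping in the induction: because $l$ can keep decreasing even after expansion, I have to pick the right ``moment'' to read off $l(v,J)$ and verify that the combine loop in line~\ref{alg:dijkstra1:merge} and the neighbour loop in line~\ref{alg:dijksteiner-nb-loop} are always triggered in time; this is precisely what the modifications to lines \ref{alg:dijksteiner-if1} and \ref{alg:dijksteiner-if2} of \dsstar are designed to ensure, so the proof essentially reduces to checking that these two changes suffice to propagate every optimal sub-SMT improvement back to the root.
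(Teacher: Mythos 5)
Your route is genuinely different from the paper's. The paper never fixes a single optimal solution and inducts over its decomposition; instead it maintains a global loop invariant (Lemma~\ref{lemma:loop}) asserting, at every iteration, the existence of a partition $\sset{W}$ of $R\setminus\{r\}$ whose parts are witnessed by \emph{optimal} tuples attached to one SMT, shows the invariant is preserved (Lemma~\ref{lemma:loop-it}), and then proves by a counting argument on join vertices (Lemma~\ref{lemma:loop-queue}) that at every extraction $Q$ contains an optimal tuple; the final contradiction via admissibility, $l(v,J)+h^*(v,R\setminus J)\le l^*(v,J)+l^*(v,R\setminus J)\le \csmt(N,R)$, is the same step you use. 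Your termination argument and your ``$\geq$'' direction coincide with what the paper reuses (the finite strict-decrease argument and conditions R1/R2 in the appendix), except that the number of re-insertions per tuple is bounded by the total edge cost $c_N((V,E))$, not by $\csmt(N,R)+1$. The payoff of your approach is locality: one fixed Dreyfus--Wagner decomposition and a bottom-up induction replace the partition bookkeeping of H1/H2 and the join-vertex counting; the price is that you must carry the decomposition equalities $l^*(u,J_1)+l^*(u,J_2)=l^*(u,J_1\cup J_2)$, $l^*(y_{i+1},J)=l^*(y_i,J)+\sigma(\{y_i,y_{i+1}\})$, and the bound $l^*(v,J)+l^*(v,R\setminus J)\le\csmt(N,R)$ down the decomposition yourself.

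There is, however, a gap in the induction as you state it. The hypothesis ``at some step before $(r,R\setminus\{r\})$ is popped we have $l(v,J)\le l^*(v,J)$'' is too weak to carry out the inductive step: a parent's value is only written in the combine loop (Line~\ref{alg:dijkstra1:merge}) or the neighbour loop (Line~\ref{alg:dijksteiner-nb-loop}) at the moment some tuple is \emph{expanded}, so you need each decomposition tuple to be \emph{popped from $Q$ while holding its optimal value} (possibly as a re-expansion after a post-expansion improvement), with its sibling already in $P$; mere attainment of the value does not propagate anything upward. Moreover, admissibility is needed \emph{inside} that step, not only ``once the invariant is established'': whenever a decomposition tuple's value drops to $l^*$, it re-enters $Q$ (this is what the modified Lines~\ref{alg:dijksteiner-if2} and~\ref{alg:dijksteiner-if1} guarantee) with priority at most $\csmt(N,R)$, which under the contradiction hypothesis is strictly below the priority of $(r,R\setminus\{r\})$, so it is either re-expanded earlier---necessarily with value exactly $l^*$, since $l$ never falls below it---or it still sits in $Q$ at the bad extraction, contradicting minimality of the extracted priority. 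If you strengthen the invariant to ``$(v,J)$ is expanded with $l(v,J)=l^*(v,J)$ before the bad extraction, or lies in $Q$ with that value at it,'' the induction does close (the later of the two children's optimal expansions finds the other child in $P$, since $P$ only grows), and your argument becomes a complete, alternative proof of Theorem~\ref{thm:main}. As written, though, the crucial timing step is asserted rather than proved.
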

\begin{proof}[Proof (Idea)]
Since~$h^*$ is admissible, we have $h^*(r,R \setminus\{r\})=0$.
By construction of~\dsstar and admissibility of~$h^*$, 
every tuple~$(u,I)$ such that $l(u,I)+h^*(u,R\setminus I)<l(r,R\setminus \{r\})$ is expanded.
As a result, after termination, every tuple~$(u,I)$ that ``contributes'' to~$\csmt(N,R)$,
is expanded beforehand.\hfill
\end{proof}

\noindent While for consistent Steiner heuristic functions~$h^*$, one obtains for~DS 
a similar runtime result as for~DW, cf., Proposition~\ref{prop:runtime},
this is not guaranteed for admissible functions~$h^*$ and~\dsstar.
Intuitively, as described above, for non-consistent Steiner 
heuristic functions~$h^*$,
paths to arbitrary tuples~$(v,I)$ are not necessarily cost optimal.

\begin{Theorem}
Given an instance $\mathcal{I}=(N, R)$ of STP, where $N = (V, E, \sigma)$, any root~$r\in R$, and an admissible Steiner heuristic function~$h^*$.
Then, the \dsstar algorithm runs in time $\mathcal{O}(|R| \cdot (|V| + 2^{|R|})^{|V| + |R|})$.
\end{Theorem}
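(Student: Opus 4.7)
The plan is to bound separately (i) the total number of iterations of the main while loop at Line~\ref{alg:dijksteiner-main-loop} and (ii) the work performed per iteration, and then multiply. The new difficulty compared with the consistent case treated in Proposition~\ref{prop:runtime} is that under mere admissibility an expanded tuple $(u,I)\in P$ may be re-added to $Q$ whenever a smaller $l(u,I)$ is later discovered (as already observed in the discussion preceding Theorem~\ref{thm:main}), so the iteration count is no longer bounded by the $|V|\cdot 2^{|R|}$ distinct tuples of $\mathcal{N}(\mathcal{I})$.

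For the per-iteration cost I would observe that the combine loop at Line~\ref{alg:dijkstra1:merge} iterates over at most $2^{|R|}$ subsets of $R'\setminus I$, the neighbor loop at Line~\ref{alg:dijksteiner-nb-loop} iterates over at most $|V|-1$ neighbors of $u$ in $N$, and the priority-queue operations on $Q$ (whose size is at most $|V|\cdot 2^{|R|}$) cost $O(\log|V|+|R|)=O(|R|)$ per update. Thus the cost of one iteration is $O(|R|\cdot(|V|+2^{|R|}))$.

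For the number of iterations I would examine the branching structure of the Steiner search network $\mathcal{N}(\mathcal{I})$. Each tuple $(u,I)$ has out-degree at most $|V|+2^{|R|}$, consisting of at most $|V|-1$ propagation neighbors $(v,I)$ with $\{u,v\}\in E$ and at most $2^{|I|}\leq 2^{|R|}$ combine neighbors $(u,J)$ with $\emptyset\subsetneq J\subsetneq I$. Every re-expansion of $(u,I)$ is triggered by a cost-reducing update chain in $\mathcal{N}(\mathcal{I})$ originating at some initial tuple $(t,\{t\})$ with $t\in R'$. Since every combine step strictly increases the second coordinate, at most $|R|-1$ combine steps can occur along such a chain; and between two consecutive combine steps every strictly cost-reducing propagation segment can be short-circuited so that it visits each vertex of $V$ at most once, contributing at most $|V|-1$ propagation steps in total. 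Hence every relevant update chain has length at most $|V|+|R|$, so the number of tuple expansions is bounded by the number of walks of this length in $\mathcal{N}(\mathcal{I})$, i.e. $O((|V|+2^{|R|})^{|V|+|R|})$. Multiplying by the per-iteration cost, and absorbing the resulting extra polynomial factor into the exponent, yields the claimed $O(|R|\cdot(|V|+2^{|R|})^{|V|+|R|})$.

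The main obstacle is justifying the depth bound of $|V|+|R|$ on a cost-reducing update chain: under admissibility alone such a chain need not be simple in $\mathcal{N}(\mathcal{I})$, and naively its propagation segments might revisit vertices of $V$. The short-circuiting step therefore requires care. I would formalize it by associating with each updated value of $l(u,I)$ a witness tree that records the combines and propagations that produced it, and proving by induction on $|I|$ that this witness tree can be chosen so that combine steps occur at most $|R|-1$ times and each propagation segment between combines has length at most $|V|-1$, using admissibility of $h^*$ together with the fact that replacing a non-simple propagation segment by its simple shortcut can only decrease (and never increase) the resulting $l$-value.
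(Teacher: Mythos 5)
Your high-level counting coincides with the paper's: bound the number of expansions by $(\text{branching factor})^{\text{depth}}$ with branching factor $|V|+2^{|R|}$ and depth $|V|+|R|$, and account separately for the factor $|R|$. The genuine gap is in your justification of the depth bound. Your structural argument yields: at most $|R|-1$ combine steps, since each strictly enlarges the terminal-set coordinate, and each propagation segment between two consecutive combines can be taken simple, hence of length at most $|V|-1$. But that bounds each segment separately, so the chain length you actually obtain is $O(|V|\cdot|R|)$, not $|V|+|R|$; with that depth the expansion count becomes $(|V|+2^{|R|})^{O(|V|\cdot|R|)}$, which is weaker than the claimed bound. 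The assertion that the propagation steps number at most $|V|-1$ \emph{in total} over all segments is precisely the point that needs proof, and the witness-tree/short-circuiting device you sketch does not supply it: short-circuiting makes one segment simple inside its fixed coordinate $I$, but different segments live in different copies $(v,I)$, $(v,I')$ of the same graph vertex, so nothing prevents $v$ from being traversed once per segment. The paper closes this hole with a cost/structure argument instead: under admissibility, \dsstar does not expand tuples whose discovered path cost would exceed $\csmt(N,R)$, and the paths that matter trace sub-trees of an SMT, which has at most $|V|$ vertices and at most $|R|$ join vertices overall; this is where the global bound $|V|+|R|$ on combines plus propagations comes from. You need an argument of this kind (or some other way to cap the \emph{total}, not per-segment, number of propagation steps) before the exponent is established.

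A smaller point: multiplying your iteration bound by your per-iteration cost gives $|R|\cdot(|V|+2^{|R|})^{|V|+|R|+1}$; the extra factor $|V|+2^{|R|}$ is not polynomial and cannot simply be ``absorbed'' without raising the exponent by one. This is a looseness comparable to the paper's own sketch (which obtains the $|R|$ factor by repeating the count once per non-root terminal rather than from per-iteration work), but you should state explicitly how the final expression is assembled. Your observations that update chains are strictly cost-decreasing, that combines increase cardinality, and your per-iteration accounting are all fine; the substantive defect is the depth bound.
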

\begin{proof}[Proof (Sketch)]
Each vertex of the form $(u, I)$ in the Steiner network~$\mathcal{N}(\mathcal{I})$ has by construction at most $|V| + 2^{|R| - |I|}$ neighboring 
tuples in the network. 
Observe that an SMT of~$\mathcal{I}$ consists of at most $|V|$ many vertices and 
has at most $|R|$ join vertices.
Then, the length of a path from any tuple $(u, I)$ to $(r, R\setminus \{r\})$ in~$\mathcal{N}$ that is discovered in \dsstar is at most $|V| + |R|$,
since a longer path would exceed $\csmt(N,R)$, causing algorithm \dsstar to not expand the corresponding tuple. 
We therefore reach the goal $(r, R\setminus\{r\})$ after expanding at most $(|V|+2^{|R|-1})^{|V|+|R|}$ 
vertices. 
In the worst case this is done for each terminal, except the root~$r$. Hence, the bound holds.\hfill
\end{proof}

\noindent In theory, the worst-case runtime may worsen 
significantly in case of mere admissibility, similar to the situation of exponential blow up for the \astar algorithm~\cite{zhang_astar_2009}.
However, in practice the ability to use non-consistent, but admissible heuristics often provides tighter bounds leading to shorter runtimes. 
The same is true for the \astar algorithm, where at first nobody could imagine merely admissible heuristic functions to outperform consistent ones~\cite{martelli_complexity_1977, zhang_astar_2009}. For \dsstar as well as \astar it turns out that ILP based heuristic functions are both not consistent and often well-suited. In the next section, we discuss the Steiner heuristic functions used in our solver \dsstarsolver.

\section{Implementation of \dsstar: \emph{\dsstarsolver}} \label{chap:solvingMain}
We created the solver \emph{\dsstarsolver}, a fully functional STP solver written in C++\footnote{\dsstarsolver is publicly available at \url{https://github.com/ASchidler/steiner_cpp}~\cite{SchidlerHecherFichte20}.}. 
\dsstarsolver consists of two modules: the \emph{solving module}, using \dsstar as presented in Listing~\ref{alg:dijkstra:final}, and the \emph{preprocessing module}, trying to reduce the complexity of the instance, before solving is started.
In the following, we discuss implementation details regarding both these modules.
Within this section, we again assume a given instance~$(N,R)$, where~$N=(V,E,\sigma)$ and a root~$r\in V$.

\subsection{Solving Module}
\dsstar is implemented as previously described with the same pruning technique as originally suggested for DS~\cite{HougardySilvanusVygen16a}. The main improvement is the choice of Steiner heuristic functions, which is crucial for good results, as our experimental results show. We use two different Steiner heuristic functions and will discuss them subsequently: \emph{dual ascent} and \emph{1-tree}.
Given a lower bound~$lb(N,I)$ of~$\csmt(N,I)$ for any~$I \subseteq R$,
we define the \emph{corresponding Steiner heuristic function~$h^*_{lb}$ (using~$lb$)} by~$h^*_{lb}(u,I)\eqdef lb(N,I\cup\{u\})$ for any~$u\in V$, $I\subseteq R$.

\medskip\noindent\textbf{Dual Ascent}\cite[Section~3]{wong_dual_1984}\textbf{.} This admissible, but not consistent, Steiner heuristic function can only be used with \dsstar and not DS. The idea is to use a feasible solution for the dual of the ILP formulation as a lower bound. The algorithm starts with edge costs $\sigma'=\sigma$ and $|R \setminus \{r\}|$ sub-graphs of $(V,E)$, where each sub-graph $C_t$ consists of exactly one non-root terminal $t \in R \setminus \{r\}$. 
In each iteration one sub-graph $C_t$ is selected and extended. 
First, the set $E_t$ is computed: $E_t$ consists of all edges $e \in E$ incident to a node in $C_t$, but not in $C_t$. 
Next, the lowest cost $c' = \min_{e\in E_t} \sigma'(e)$ is determined. 
Then, all edges $e \in E_t$ with $\sigma'(e)=c'$ are added to $C_t$ and the costs of all edges in $E_t$ are reduced by $c'$ in $\sigma'$. 
Eventually, the sub-graphs will form one connected sub-graph $C_r$, containing all terminals including the root. Therefore, $C_r$ is a Steiner tree for the instance. The sum of all $c'$ when adding an edge is a lower bound for $\csmt(N,R)$. Whenever we select a sub-graph $C_t$, we choose the sub-graph that has the minimal number of incident edges~\cite{polzin_tobias_algorithms_2003,PajorUchoaWerneck18a}.
The algorithm runs in $\mathcal{O}(|E| \cdot \min\{\Card{V} \cdot \Card{R}, |E|\})$ \cite{duin_steiners_1993}.

\medskip\noindent\textbf{1-Tree.} Alternatively, \dsstarsolver uses the \emph{1-Tree} lower bound, whose corresponding Steiner heuristic function is \emph{consistent}. This method is used in the original DS implementation~\cite{HougardySilvanusVygen16a}. 

\begin{proposition}[1-tree lower bound]\cite[Lemma~8]{HougardySilvanusVygen16a}\label{thm:1tree}
Given an STP instance $(N, R)$. Let $r \in R$ be any terminal and $R^\prime = R \setminus \{r\}$. Further, let $c$ be the cost of any minimum spanning tree of $D_N(R^\prime)$. Then,
$
	\frac{1}{2}( c + \min_{u, v \in R^\prime: u\neq v\text{ or }|R^\prime| = 1} 
(d_N(r, u) + d_N(r, v))
$
is a cost lower bound of any SMT of~$(N,R)$. 
\end{proposition}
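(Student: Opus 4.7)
The plan is to transform an SMT of $(N,R)$ into a Hamiltonian cycle on $R$ within the distance network $D_N(R)$ whose cost is comparable, then split that cycle at $r$ to extract the two pieces appearing in the claimed bound.

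First, I would fix any SMT $T$ of $(N,R)$, so $c_N(T) = \csmt(N,R)$. The classical double-tree construction produces a multigraph on $V(T)$ in which every vertex has even degree and whose total edge cost is exactly $2\cdot c_N(T)$. Choose an Eulerian circuit of this multigraph that starts and ends at $r$. This circuit visits every terminal in $R$ at least once. Shortcutting the circuit to keep only the first occurrence of each terminal (and closing at $r$) produces a Hamiltonian cycle $C$ on $R$ whose edges are treated as edges of $D_N(R)$. Because each shortcut replaces a subwalk in $N$ from some terminal $u$ to the next distinct terminal $v$ by the $D_N$-edge $\{u,v\}$ of weight $d_N(u,v)$, and because the subwalk has cost at least $d_N(u,v)$, the total cost of $C$ in $D_N(R)$ satisfies $\sigma'(C)\le 2\cdot c_N(T) = 2\,\csmt(N,R)$.

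Next, I would use $C$ to expose the structure asserted by the proposition. Let $\{r,u_1\}$ and $\{r,u_2\}$ be the two edges of $C$ incident to $r$ (with $u_1\neq u_2$ when $|R'|\ge 2$). Removing these two edges from $C$ leaves a Hamiltonian path on $R'$ in $D_N(R')$, which is in particular a spanning tree of $D_N(R')$ and therefore has cost at least $c$. The two removed edges contribute $d_N(r,u_1)+d_N(r,u_2)$, which is at least the quantity $\min_{u,v\in R',\,u\neq v}\bigl(d_N(r,u)+d_N(r,v)\bigr)$. Combining these two inequalities yields
\[
2\,\csmt(N,R)\ \ge\ \sigma'(C)\ \ge\ c + \min_{u,v\in R',\,u\neq v}\bigl(d_N(r,u)+d_N(r,v)\bigr),
\]
from which the claim follows after dividing by $2$.

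Finally, I would dispatch the edge case $|R'|=1$ separately: if $R'=\{u\}$, then the MST of $D_N(R')$ is trivially of cost $c=0$, the minimum in the statement degenerates to $2\,d_N(r,u)$, and any Steiner tree for $(N,R)$ must contain a $u$-$r$-path of cost at least $d_N(r,u)$, which already gives $\csmt(N,R)\ge \tfrac12(0+2d_N(r,u))$.

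The only delicate step is the shortcutting inequality $\sigma'(C)\le 2\,c_N(T)$: it relies on the triangle inequality in $D_N$, which holds because $d_N$ is a shortest-path metric on $N$, and on being careful that each subwalk between consecutive kept terminals in the Eulerian circuit is counted at most once when summing weights. Everything else is a direct consequence of the MST-optimality of $c$ in $D_N(R')$ and the choice of the two cheapest $r$-edges.
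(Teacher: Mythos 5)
Your proof is correct; note that the paper itself gives no proof of this proposition, citing it as Lemma~8 of Hougardy--Silvanus--Vygen, and your argument (double the SMT, take an Eulerian circuit, shortcut to a Hamiltonian cycle on $R$ in the distance network, then split off the two edges at $r$ to leave a spanning path of $D_N(R^\prime)$ compared against the MST cost $c$) is essentially the standard proof behind that citation. The handling of the degenerate case $|R^\prime|=1$ and the justification of the shortcutting step via the shortest-path metric are both sound, so there is nothing to fix.
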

The distance network can be constructed in time $\mathcal{O}(\Card{V} \log(\Card{V}) + \Card{E})$ and the minimum spanning tree for this network in time $\mathcal{O}(\Card{R}^2)$. The lower bound can then be obtained in time $\mathcal{O}(\Card{R})$ \cite{HougardySilvanusVygen16a}.

\dsstarsolver only uses one of the corresponding Steiner tree heuristic functions per instance. Oftentimes, dual ascent computes tighter bounds and \dsstar therefore requires fewer iterations to succeed. 
Unfortunately, the runtime for dual ascent also increases faster with increasing graph size than the runtime of 1-tree. Hence, we use dual ascent for graphs of up to 10.000 edges and use the 1-tree heuristic function for larger graphs.

The root terminal is not chosen arbitrarily. \dsstarsolver tries different terminals as the root with dual ascent. The terminal giving the highest lower bound is then used as the root. 
This strategy delivered superior results, compared to selecting the last terminal as used~originally~\cite{HougardySilvanusVygen16a}.

\subsection{Preprocessing Module}\label{chap:preprocessing}
Preprocessing for STP identifies and removes vertices and edges that are not required for an SMT. These methods usually run in time $\mathcal{O}(\Card{V} \log(\Card{V}))$ and can quickly 
 reduce the complexity of an instance. Many instances would be too hard for the solving module alone, without preprocessing.

We use the standard methods described in literature~\cite{duin_reduction_1989, HwangRichardsWinter92a}. Preprocessing is applied until no more reductions are possible. The order of preprocessing operations is chosen in a way to maximize reuse of calculated information, e.g. distances in the network. One noteworthy preprocessing method is based on the previously discussed dual ascent lower bound.

Dual ascent calculates a lower bound for $\csmt(N,R)$ and can be extended to provide a lower bound on the costs of any SMT containing a specific vertex or edge. Given an upper bound for $\csmt(N,R)$ we can remove any vertices and edges, where this lower bound exceeds the upper bound. We use different methods to obtain good upper bounds and thereby maximize the number of removals.

Upper bounds are obtained using the \emph{repeated shortest path heuristic (RSPH)}. RSPH is a well-established STP heuristic~\cite{takahashi_approximate_1980, de_aragao_implementation_2002}, that can compute Steiner trees in time $\mathcal{O}(|R|\cdot\Card{V}^2)$. 
We can often find a tighter upper bound by running RSPH using only a sub-graph that contains all terminals. In addition to using the full instance, we run RSPH also using the following sub-graphs:
\begin{enumerate}
\item The sub-graph $C_r$ after a dual ascent run~\cite{polzin_tobias_algorithms_2003}.
\item A preprocessed instance after guessing an upper bound~ \cite{polzin_tobias_algorithms_2003}.
\item A sub-graph obtained from combining several Steiner trees~\cite{ribeiro_hybrid_2000, polzin_tobias_algorithms_2003}.
\end{enumerate}

Given a Steiner tree, we can also find a Steiner tree of lower costs by applying local search. This is done by systematically replacing sub-graphs of the tree in an effort to find a tree of smaller size. These sub-graphs may be single vertices or whole paths \cite{uchoa_fast_2012}.

\newcommand{\set}[1]{\texttt{#1}}
\newcommand{\solver}[1]{\textit{#1}}
\section{Experimental Work}
\label{chap:benchmarks}
We conducted a series of experiments using standard benchmark
instances for STP. Instances and results\footnote{See:~\cite{FichteHecherSchidler20}.}, including raw data, are publicly available.
Our experimental work aims for a comparison between
different heuristics  in order to understand 
whether our proposed extensions  are valuable in practice. Further,
we compare the effectiveness of our prototypical solver with
other established implementations.

\medskip
\noindent\textbf{Benchmark Instances.}
We considered a selection of overall 1.623 instances, which originate
from the 11th DIMACS Challenge~\cite{JohnsonEtAl14a} and PACE~2018
Challenge~\cite{BonnetSikora19a}. We group them as done in the
literature~\cite{IwataShigemura19a}.
\begin{enumerate}
\item\set{SteinLib},\\
  which contain generated graphs with random costs (\set{random}),
  artificial instances (\set{artificial}), instances with euclidean
  weights (\set{euclidean}), cross-grid graphs (\set{crossgrid}), grid
  graphs with holes (\set{vlsi}), randomly generated rectilinear
  instances (\set{rectilinear}), wire routing problem based instances
  (\set{group});
\item\set{Cph14}\\ (simplified obstacle-avoiding rectilinear instances);
\item\set{Vienna}\\ (real world instances from telecommunication networks);
\item\set{PACE2018} (PACE~2018 challenge instances).
\end{enumerate}

\medskip
\smallskip\noindent\textbf{Measure, Setup, and Resource Enforcement.}
Our results were gathered on a cluster running Ubuntu 18.04.3 LTS
(kernel 4.15.0-101-generic) and GCC 7.5.0.  Each node
is equipped with two Intel Xeon E5-2640v4 CPUs and 160GB RAM. 
We limited the solvers to 1800 seconds wall clock time and 8GB of RAM per instance. We used \emph{reprobench}\footnote{\url{https://github.com/rkkautsar/reprobench}} to setup the benchmarks and to enforce the resource limits.

\begin{table}
\centering
\begin {tabular}{lHrrrr} 
  \toprule
  Set & Subset & N & \dsstar & \dsstar-da & \solver{HSV}\\
  \midrule
  \set{SteinLib}    & 	& 820 	& \bf{691} 	& 522 	& 557	\\
\set{Cph14}				&		&10	&\bf{10}	&\bf{10}	&\bf{10}\\
\set{Vienna} 			&		&6	&\bf{4}	&3	&3\\
\set{PACE2018}			&		&195				&\bf{185}		&	157		& 157 \\

  \midrule
  $\Sigma$		&		& 1031	&\bf{890}	&692	&727\\
  \bottomrule
\end{tabular} 
\caption[The benchmark results for Steinlib instances.]{%
  Number of solved instances with less than 64 terminals for DS-based solvers.
  \dsstar-da lists the results for \dsstar without dual ascent  heuristic.
  \emph{N} indicates the total number of instances.
}
\label{tab:bench-steinlib-short}
\end{table}
\medskip \smallskip\noindent\textbf{Benchmarked Solvers.}  We tested
three configurations of our solver: \dsstar, which implements our
enhanced algorithm without preprocessing, \dsstarsolver, which in
addition includes preprocessing, and \emph{-da}, which on uses the
1-tree heuristic function.
%
We test the solver \emph{HSV}\footnote{We thank the authors for
  providing us with a copy of the solver.}, which is known as a
successful implementation of DS
algorithm~\cite{HougardySilvanusVygen16a}.
We include the best PACE~2018 solvers
\emph{Pruned}\footnote{\url{https://github.com/wata-orz/steiner_tree}}
(2700bdc/Rust1.36.0)~\cite{IwataShigemura19a} and
SCIP-Jack\footnote{\url{https://scip.zib.de/}} (6.0.2/SoPlex4.0.2). 
\medskip
\noindent\textbf{Experiment 1.}
In order to benchmark the effectiveness of our enhanced algorithm and
a comparison to plain DS, we take \dsstar and the solver \solver{HSV}
into account. Since one might argue that also implementation specific
tricks and algorithm engineering might have a strong influence on the
number of solved instances, we also tested our solver in the
configuration \emph{-da}, which disables the dual ascent heuristic and
assembles a solver that is close to the underlying algorithm of
\solver{HSV}.
Since \solver{HSV} can handle only instances of less than 64
terminals, we restrict the instances accordingly.

\noindent\emph{Result:} Table~\ref{tab:bench-steinlib-short} the results on the
number of solved instances.
\dsstar without dual ascent performs worse than HSV.  \dsstar performs
better on all sets. If we subgroup \set{SteinerLib}, one can observe
that it particularly helps to solve randomly generated graphs
(\set{random}) and wire routing problem based instances (\set{group}).
%
%
%
%

\noindent\emph{Discussion:}
Besides implementation details, \dsstar-da performs worse than
\solver{HSV}. We suspect that the main reason is the use of the
Steiner heuristic functions in \solver{HSV}, which is slightly more
sophisticated than the 1-tree heuristic
function~\cite{HougardySilvanusVygen16a}.
%
%
%
The results show that using merely admissible Steiner heuristic functions can significantly improve the number of solved instances.
%

\medskip\noindent\textbf{Experiment 2.}
In order to contrast our algorithm and implementation with approaches
and their implementations, we take various state-of-the-art solvers
in the field into account.

\noindent\emph{Results:} In Table~\ref{tab:bench-config}, we report the
number of solved instances for the implemented techniques.
Depending on the instance set, we observe a varying number of solved
instances.
Overall, \solver{SCIP} solves the most instances. \dsstarsolver solves
237 instances less and \solver{Pruned} another 39 instances less.
If we subgroup \set{SteinLib}, \dsstarsolver and \solver{Pruned} solve
more instances than \solver{SCIP} on 
\set{vlsi}. \solver{Pruned} solves more instances than \solver{SCIP}
on \set{group}.
\solver{SCIP} solves almost 40 and 80 instances more on the sets
\set{rectlinear} and \set{random}, respectively, and similar many or a
few more on the remaining sets.
Figure~\ref{fig:scatter} shows a runtime comparison between
\dsstarsolver and SCIP based on the instance's number of terminals. We
observe that for lower numbers \dsstarsolver is usually faster than
SCIP.

\begin{table}
\footnotesize
\centering
\begin {tabular}{lHrrrrr}
  \toprule
   Set & Subset 		& N 	& \dsstarsolver 	& \dsstarsolver{}-da 	& \solver{Pruned} 	& \solver{SCIP} \\
  \midrule
  \set{SteinLib}
   & 	& 1186 	& 928 	& 796 	& 874 	& \bf{1055}	 \\
  \set{Cph14}	&& 21	& 15	&12& 16	& \bf{17}	\\
  \set{Vienna}		&& 216	& 11	&5& 26	& \bf{120}	\\
   \set{PACE2018}	&		& 200	& \bf{{189}}	&178& \bf{189}	& 178	\\
  \midrule
   $\Sigma$ &  &1623&1143&991&1105&\bf{1370}\\
  \bottomrule
\end {tabular} 
\caption{%
  Number of solved instances for the considered solvers. %
  $N$ indicates the overall number of instances. %
  %
  %
}
\label{tab:bench-config}
\end{table} 
\begin{figure}[t]
  \centering
    \includegraphics[scale=0.90]{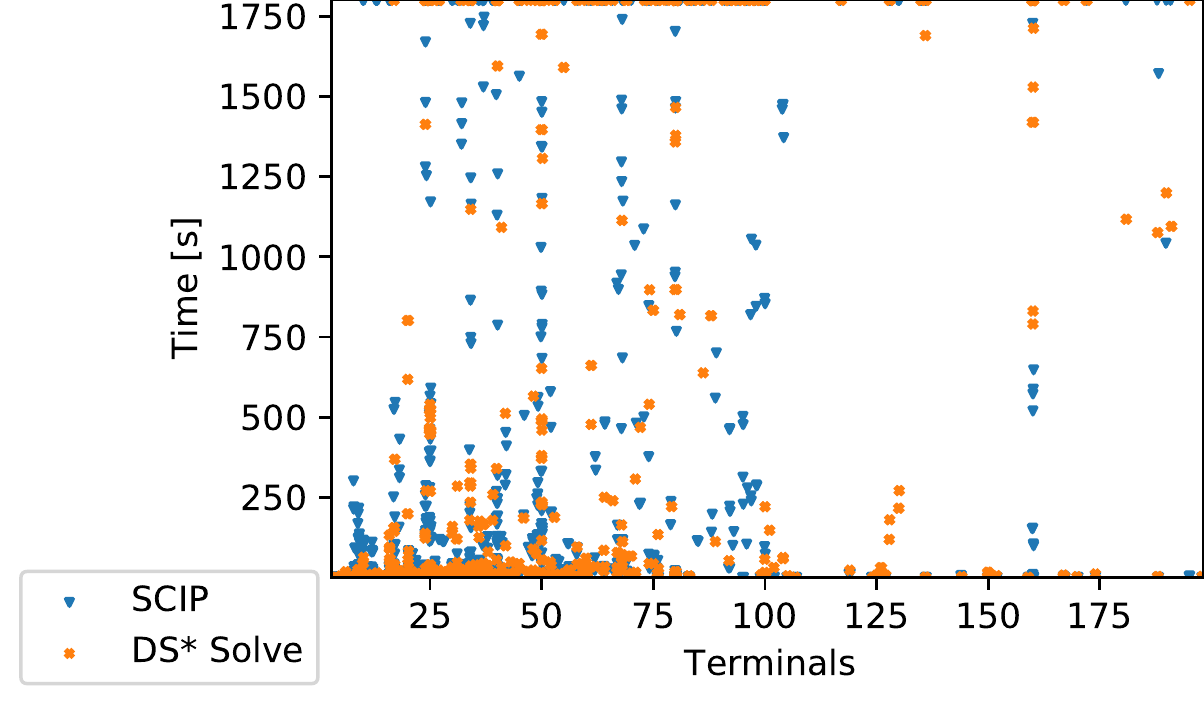}%
  \caption{Runtime of \dsstarsolver vs \solver{SCIP}. The x-axis labels the number of terminals of the instance and y-axis captures the runtime in seconds.}
  \label{fig:scatter}
\end{figure}

\noindent\emph{Discussion:}
Similar as in the previous experiment, we can see that there is a big
difference between using consistency and admissibility, namely 152
solved instances.
If we compare \dsstarsolver to \solver{Pruned}, we observe that
overall the former solves more instances. Since \solver{Pruned}
performs better on some groups of the set \set{SteinLib} sets, a
combination of both approaches might produce a better overall solver.
%
%
%
%
While \solver{SCIP} solves the most instances mainly due to the high
number of solved instances in the sets \set{rectlinear} and
\set{random}, \dsstarsolver solves instances with a low number of
terminals very fast and often much faster than \solver{SCIP}.
%
%
Hence, we expect good performance as part of a portfolio solver.

\section{Conclusion and Future Work}
In this paper, we established the concept of admissible Steiner heuristic functions for the Steiner tree problem.
We focused on instances with few terminals,
and lifted the so-called Dijkstra-Steiner (DS) algorithm from consistent heuristic functions to admissibility, resulting in the \dsstar algorithm.
Intuitively, admissibility of heuristic functions only requires a weak condition for lower bounds. More precisely, an admissible heuristic function is guaranteed to never over-approximate the actual costs,
and is indeed strictly weaker than consistency.
Admissible heuristic functions enable lower bound computation based on LP techniques, as for example the dual ascent method.
Our solver \dsstarsolver
 combines the usage of admissible heuristic functions during solving,
efficient preprocessing techniques, and methods for obtaining strong upper bounds.
Experiments indicate that admissible heuristic functions have a strong effect on the solving performance.

An interesting question for future work is the integration of the dual
ascent technique into other approaches and solvers.



\clearpage
\bibliographystyle{plain}
\bibliography{steiner_arxiv}
\clearpage
\begin{appendix}

\section{Additional Examples}

\subsection{MST of our example network.}

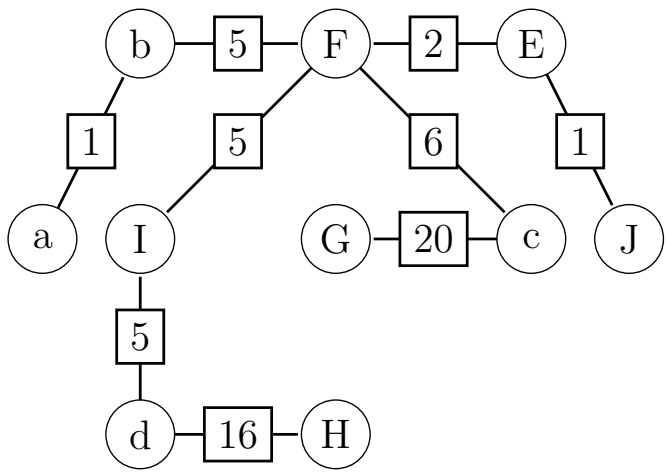
\begin{figure}[t]
  \centering
  \resizebox{.65\linewidth}{!} {%
    \begin{tikzpicture}[shorten >=1pt]
     \SetGraphUnit{2}
     \GraphInit[vstyle=normal]
     \tikzset{LabelStyle/.style =   {draw,
                                     }}
 	\tikzset{VertexStyle/.style = {shape          = circle,
     							draw,
                                 minimum size   = 20 pt,
                                 inner sep      = 2pt,
                                 outer sep      = 0pt
                                 }}
     \Vertex{F}
     \SOWE(F){I} 
     \EA(I){G}
     \SO(I){d}
     \EA(d){H}
     \EA(G){c}
    \WE(F){b}
     \EA(F){E}
    {
    \SetGraphUnit{1}
     \WE(I){a}
    \EA(c){J}
    }

     \Edge[label=1](a)(b)
     \Edge[label=2](E)(F)
     \Edge[label=5](b)(F)
     \Edge[label=6](F)(c)
     \Edge[label=20](c)(G)
     \Edge[label=16](d)(H)
     \Edge[label=5](d)(I)
     \Edge[label=5](F)(I)
     \Edge[label=1](E)(J)
\end{tikzpicture}%
  }
  \caption{An MST of network~$N$ in Figure~\protect\ref{fig:network}}
  \label{fig:graph-mst}
\end{figure}

\begin{Example}
Recall instance~$\mathcal{I}=(N,R)$ from Example~\ref{ex:instance}.
Further, Figure~\ref{fig:graph-mst} shows the MST for network~$N$ of Figure~\ref{fig:network}.
\end{Example}

\subsection{Example of the DW algorithm.}\label{sec:exdwa}
\begin{table*}[t]
\centering
\begin{tabular}{|l||r|r|r||rr|rr|rr||rr|}
\hline
& $\{a\}$	& $\{b\}$	& $\{d\}$	& 
\multicolumn{2}{c|}{$\{a,b\}$} &
\multicolumn{2}{c|}{$\{a,d\}$} &
\multicolumn{2}{c|}{$\{b,d\}$}& 
\multicolumn{2}{c|}{$\{a,b,d\}$}\\
\hline
Vertex & $l^*$ & $l^*$ & $l^*$ & $l$ & $l^*$ & $l$ & $l^*$ & $l$ & $l^*$ & $l$ & $l^*$\\
\hline
a & 0	& 1		& 16	& 1	 &1		& 16 &16	& 17 &16	& 16& \\
b & 1	& 0		& 15	& 1  &1		& 16 &16	& 15 &15	& 16& \\
c & 12	& 11	& 16	& 23 &17	& 28 &22	& 27 &26	& 33&\emph{22} \\
d & 16	& 15	& 0		& 31 &16	& 16 &16	& 15 &15	& 31& \\
E & 8	& 7		& 12	& 15 &8	    & 20 &18	& 19 &17	& 20&\\
F & 6	& 5		& 10	& 11 &6	    & 16 &16	& 15 &15	& 16&\\
G & 32	& 31	& 36	& 63 &32	& 68 &42	& 67 &41	& 68&\\
H & 32	& 31	& 16	& 63 &32	& 48 &42	& 47 &41	& 48&\\
I & 11	& 10	& 5		& 21 &11	& 16 &16	& 15 &15	& 16&\\
J & 9	& 8		& 13	& 17 &9	    & 22 &19	& 21 &18	& 22&\\
\hline
\end{tabular}
\caption{The full \emph{Dreyfus-Wagner} computation for our running example.}
\label{tab:dw-full}
\end{table*}

\begin{Example}

Table~\ref{tab:dw-full} shows the result of a full DW run for instance $\mathcal{I}=(N,R)$, where~$N=(V,E,\sigma)$ as given in Example~\ref{ex:instance}, where $c$ is used as a root. 
The table shows vertically the vertices~$v\in V$, whereas horizontally subsets $I\subseteq 2^{R\setminus\{c\}}$ are depicted. 
Depending on the column, the cells either show value $l(v, I)$ or $l^*(v, I)$, respectively. 
The subsets are ordered by increasing cardinality.
As the computation is quite tedious, we focus on specific ones, namely on the row for the vertex $c$. 
The computation thereby proceeds column by column from left to right.

For subsets of cardinality one, values of~$l^*(c,\{a\})$, $l^*(c,\{b\})$, $l^*(c,\{d\})$ are simply the distances between $c$ and $a$, $b$ and $d$, respectively.
Therefore, the corresponding values in the table are $12$, $11$ and $16$, respectively. 
For the value of $l(c, \{a,d\})$ we have to consider the values of all disjoint subsets that comprise $\{a, d\}$. 
There is only one such decomposition, $(\{a\}, \{d\})$, and the value of $l(c,\{a,d\})$ is therefore $l^*(c, \{a\}) + l^*(c, \{d\}) = 12+16 = 28$.

For the computing values of~$l^*$ for set $\{a, d\}$, we have to consider the values of~$l$ for all other vertices and add the distance to $c$. 
We therefore compute $l(v, \{a, d\}) + d_N(v, c)$ for all vertices $v \in V$. 
The corresponding values are $a: 16 + 12 = 28$, $b: 16+11 = 27$, $d: 16 + 16 = 32$, $E: 20 + 8 = 28$ and $F: 16 + 6 = 22$. 
The values in relation to any other vertex would be higher, so we ignore them. We keep the minimum of $22$ as value for $l^*(c,\{a,d\})$.

Next, we compute the value of $l$ for $\{a, b, d\}$. 
We therefore have to consider the subset tuples $(\{a, b\}, \{d\})$, $(\{a\}, \{b, d\})$ and $(\{a, d\}, \{b\})$. The respective values are $17 + 16 = 33$, $12 + 26 = 38$ and $22 + 11 = 33$. We again retain the minimum value $33$.

The final tuple we need, is the value of~$l^*$ for $\{a, b, d\}$. 
We again use the $l$ values of all other vertices. 
We again need only vertices up to $F$ as $l(F, \{a, b, d\}) + d_N(c, F)$ is the minimum value. 
The computations are $a: 16 + 12 = 28$, $b: 16 + 11 = 27$, $d: 31 + 16 = 47$, $E: 20 + 8 = 28$ and $F: 16 + 6 = 22$. 
We now have found our solution: $22=l^*(c,\{a,b,d\})$.
%
\end{Example}

\section{Further algorithmic details}
%
%
%
\subsection{Dual Ascent for STP.}\label{app:dual}

\begin{algorithm2e}[t]
\KwData{An instance~$(N,R)$ of STP, where $N=(V,E,\sigma)$, $R \subseteq V$ is a set of terminals, and $r \in R$ (``root'').}\KwResult{A cost lower bound~$\tilde{w}$ of any SMT of~$(N,R)$.}
\caption{The dual ascent algorithm for STP, cf., \cite[Chapter~3.1]{PajorUchoaWerneck18a}}\label{alg:da}
$\tilde{\sigma}((u,v)) \gets \sigma(\{u,v\})$ \quad for each $\{u,v\}\in E$ \;
$\tilde{w} \gets 0$ \;
$Q \gets R\setminus\{r\}$ \;

\While{$|Q| > 0$}{
	Choose $z^\prime$ in $Q$ \; \label{alg:da:choice}
    $C \gets cut(z^\prime)$ \;
    \uIf{$|(Q \cup \{r\}) \cap C| > 1$}{
    	$Q \gets Q \setminus \{z^\prime\}$\;
    }
    \Else{
    	$F \gets \{(u,v) \mid \{u,v\}\in E, v_i \notin C, v_j \in C\}$\;
    	$c^\prime \gets \min_{(u,v) \in F} \tilde{\sigma}((u,v))$ \;
        $\tilde{w} \gets \tilde{w} + c^\prime$ \;
        \ForEach{$(u,v) \in F$} {
        	$\tilde{\sigma}((u,v)) \gets \tilde{\sigma}((u,v)) - c^\prime$\;
        }
    }
}
\Return{$\tilde{w}$}
\end{algorithm2e}
The algorithm is shown in Listing~\ref{alg:da} and takes as argument an instance~$\mathcal{I}=(N,R)$ of STP, where $N=(V,E,\sigma)$, and~$r\in R$.
In the algorithm, $Q$ stores the active terminals, $\tilde{\sigma}$ the updated costs of (directed) edges, and $\tilde{w}$ keeps track of the lower bound. 
Function $cut$ computes for a given terminal~$z$ the corresponding subgraph by regarding every edge with $0$ costs as part of the subgraph. 
The algorithm runs in time $\mathcal{O}(|E| \cdot \min\{\Card{V} \cdot \Card{R}, |E|\})$ \cite{duin_steiners_1993}.

\subsection{Pruning for DS.}  \label{apdx:pruning}
Next, we discuss the pruning strategy for a given instance $(N, R)$, where~$N=(V,E)$, and~$r\in R$, as used in Lines~\ref{alg:dijkstra:prune2} and~\ref{alg:dijkstra:prune1} of Listing~\ref{alg:dijkstra:final}. We limit ourselves to the definition and refer the reader to the literature~\cite{HougardySilvanusVygen16a} for further details. 
Pruning works by comparing tentative costs $l(v, J)$ for a tuple~$(v,J)$ against an upper bound $U(J)$, which is maintained for pruning. 
Whenever $l(v, J)$ exceeds $U(J)$ the $prune$ and $prune\_combine$ functions calls return \emph{true}, since exceeding this upper bound implies that $(v, J)$ is not a sub-SMT for the instance and is therefore disregarded by the algorithm.

Whenever $prune$ is called with arguments $v$ and $J$, it updates $U$. 
To this end, we define the \emph{set-distance} $sd: 2^V \times 2^V \rightarrow \mathbb{N}$ as $sd(A, B) \eqdef \min_{x \in A, y \in B} d_N(x,y)$.
 We then update $U(J) \leftarrow \min\{U(J), l(v, J) + \min\{sd(J, R \setminus J), sd(\{v\}, R \setminus J\}\}$. 
 Additionally, we maintain a set $S(J)$. Whenever $prune$ changes $U(J)$, we have then exactly one $z \in R \setminus J$ that was the reason for finding the minimum value using $sd$ in~$U(J)$. 
 We then set $S(J) \leftarrow \{z\}$.
 
Function $prune\_combine$ is an extension of $prune$, that tries to combine upper bounds, before proceeding as described above. 
Whenever the method is called with arguments $J_1$ and $J_2$, it checks if $U(J_1)$ and $U(J_2)$ are defined. 
If this is the case and either $S(J_1) \cap J_2 = \emptyset$ or $S(J_2) \cap J_1 = \emptyset$, then the two upper bounds are combined. 
This is done by setting $U(J_1 \cup J_2) \leftarrow U(J_1) + U(J_2)$ and $S(J_1 \cup J_2) \leftarrow (S(J_1) \cup S(J_2)) \setminus (J_1 \cup J_2)$.

\section{Correctness Proof of \dsstar for Admissibility}\label{chap:proof}
In this section, we consider again an instance~$\mathcal{I}=(N,R)$ of STP, 
where~$N=(V,E,\sigma)$, and $r\in R$.
Further, let~$h^*$ be an admissible, but not necessarily consistent Steiner heuristic function.
We prove that algorithm \dsstar as presented above in Listing~\ref{alg:dijkstra:final} is still correct for~$h^*$.
To simplify the proof, we show this without pruning, i.e., we assume that there is no pruning 
and therefore the if-conditionals
in Lines~\ref{alg:dijkstra:prune2} and~\ref{alg:dijkstra:prune1} of Listing~\ref{alg:dijkstra:final} are always true.
The proof of correctness including pruning can be established by showing that the invariants still hold. 
Intuitively, pruning uses tentative costs $l(u, I)$ of tuples~$(u,I)$ to compute an upper bound, if the value is non-optimal, which then mainly increases practical efficiency by still preserving validity.

We reuse the following results from the correctness proof~\cite{HougardySilvanusVygen16a} of DS, that do not depend on the consistency of~$h^*$.
\begin{enumerate}
    \item[R1] For each $(u, I) \in P \cup Q$, we have:
        \begin{enumerate}
            \item[R1.1] $b(u, I) \subseteq P$ and $\computeSMT(u, I)$ returns the set of edges of a connected graph $T$ such that all vertices $\{u\} \cup I$ are in $T$ and $c_N(T) \leq l(u, I)$, and
            \item[R1.2] $I \cup \{u\}$ = $\bigcup_{(v, J)\in b(u, I)} J \cup \{u\}$.
        \end{enumerate}
    \item[R2] For each $(u, I) \in (V \times 2^{[R\setminus\{r\}]}) \setminus P$, we have:
        \begin{enumerate}
            \item[R2.1] $l(u, I) \geq \csmt(N, \{u\} \cup I)$, and
            \item[R2.2] if $l(u, I) = \csmt(N, \{u\} \cup I)$, then $(u, I) \in Q$.
        \end{enumerate}
\end{enumerate}

\noindent
We also introduce a few definitions. We denote by $\retraceTree(u, I)$ the graph defined by the set $\computeSMT(u, I)$ of edges.
Furthermore we define a tuple $(u, I)$ to be \emph{optimal}, 
if $l(u, I) = \csmt(N, I \cup \{u\})$ and $\retraceTree(u, I)$ is a subgraph of at least one $T \in \smt(\mathcal{I})$. 
Note that the tentative costs of an optimal tuple are never ``overwritten'', since any new tuple cannot have lower weight by \emph{R1.1} and \emph{R2.1}. 

We define the transformation $tr(u, T_1, T_2)$ for two trees $T_1=(V_1,E_1,\cdot), T_2=(V_2,E_2,\cdot)$, where $T_1$ is a subgraph of $T_2$ and $u \in V_1$. 
Let $V' = V_2 \setminus (V_1 \setminus \{u\})$ and $E' = (E_2 \setminus E_1)$. 
The result of the transformation~$tr(u, T_1, T_2)$ is the graph $(V', E')$. 
Intuitively, the result of $tr$ is the graph obtained after removing from $T_2$ the edges in $T_1$. 

Furthermore, we define $tr_v(u, T_1, T_2)$ for trees~$T_1=(V_1,E_1,\cdot), T_2=(V_2,E_2,\cdot)$ s.t.\ $T_1$ is a subgraph of~$T_2$ and~$u\in V_1, r\in V_2$, as follows.
Let therefore $T' = tr(u, T_1, T_2)$. 
Since~$T'$ is acyclic, there exists a unique path $S_{u, v}$ from $u$ to $v$.
Let $T'' = tr(u, S_{u, v}, T') = (V'', E'')$ and $C=(V_C, E_C)$ be the largest subgraph of $T''$ containing $u$. 
The result of the transformation $tr_v(u, T_1, T_2)$ is the graph $(V_C, E'' \cap (V_C \times V_C))$. 
After the transformation, only the connected subgraph containing $u$, without $T_1$, remains of $T_2$.

\begin{lemma}[Loop Invariant]\label{lemma:loop}
The following invariants hold at the beginning of every iteration of the main loop in Line~\ref{alg:dijksteiner-main-loop} of Listing~\ref{alg:dijkstra:final}.
There exists a partition $\sset{W}$ of set $R\setminus\{r\}$ of vertices such that:
\begin{itemize}
\item[H1] There exists $T \in \smt(\mathcal{I})$, such that
    \begin{itemize}
        \item[H1.1] for each $W \in \sset{W}$, there exists $(u, W) \in P \cup Q$ such that $\retraceTree(u, W)$ is a subgraph of $T$ and $(u, W)$ is optimal, and
        \item[H1.2] for each $(u, W) \in P \cup Q$, such that $\retraceTree(u, W)$ is a subgraph of $T$, there exists no $W' \in \sset{W}$ with $W \neq W'$ such that $(u, W \cup W')$ is optimal, $\retraceTree(u, W')$ is a subgraph of $T$ and $(u, W \cup W') \in P \cup Q$.
    \end{itemize}
    
\item[H2] For each $W \in \sset{W}$ we have
    \begin{itemize}
        \item[H2.1] there exists a tuple $(u, W) \in Q$ such that $(u, W)$ is optimal, or
        \item[H2.2] there exists a tuple $(u, W) \in P$ such that $(u, W)$ is optimal. 
	Furthermore, there exists $T \in \smt(\mathcal{I})$ such that $\retraceTree(u, W)$ is a subgraph of $T$ and $\Card{R \cap  V'} > 0$, where $(V',E')=tr_r(u, \retraceTree(u, W), T)$. 
    \end{itemize}
\end{itemize}
\end{lemma}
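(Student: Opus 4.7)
The proof plan is induction on the number~$k$ of iterations of the main loop (Line~\ref{alg:dijksteiner-main-loop}) that have been completed so far, with the partition $\sset{W}$ and witnessing SMT $T$ chosen freshly at the start of each iteration rather than carried forward rigidly.

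For the base case~($k = 0$), before the first iteration we have $P = \emptyset$ and $Q = \{(u, \{u\}) \mid u \in R \setminus \{r\}\}$. I would take $\sset{W}$ to be the partition of $R \setminus \{r\}$ into singletons and let $T$ be any SMT of $\mathcal{I}$. Each tuple $(u, \{u\}) \in Q$ has $b(u, \{u\}) = \emptyset$, so $\retraceTree(u, \{u\})$ is simply the single vertex $u$; it is a subgraph of $T$ and optimal since $l(u, \{u\}) = 0 = \csmt(N, \{u\})$. H1.1 and H2.1 are then immediate, and H1.2 holds vacuously because $P \cup Q$ only contains tuples with singleton index, so no combination $W \cup W'$ of two distinct elements of $\sset{W}$ appears there.

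For the induction step, I would assume witnesses $\sset{W}$ and $T$ at the start of iteration~$k$, and construct new witnesses $\sset{W}'$ and $T'$ for iteration~$k+1$. The iteration selects $(u^*, I^*)$, moves it from $Q$ to $P$, then runs the merge loop updating $(u^*, I^* \cup J)$ for each $(u^*, J) \in P$, and the neighbor loop updating $(v, I^*)$ for each edge $\{u^*, v\}$. The main case split is on whether $I^* \in \sset{W}$. If $I^* \notin \sset{W}$, I keep $\sset{W}' = \sset{W}$ and $T' = T$; the representative tuples for each $W \in \sset{W}$ are untouched, so H1.1 and H2.1/H2.2 are preserved. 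If $I^* = W \in \sset{W}$ and the chosen witness was $(u^*, W)$, moving it to $P$ forces the transition from H2.1 to H2.2 for $W$, so I must establish $\Card{R \cap V'} > 0$ for $(V', E') = tr_r(u^*, \retraceTree(u^*, W), T)$. This follows because $W \subsetneq R \setminus \{r\}$ (otherwise the while-condition already fails and no subsequent iteration begins), hence some terminal $t \in R \setminus (W \cup \{u^*\})$ exists, and by the tree structure of $T$, this $t$ lies in the component of $tr_r$ that contains $r$.

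The main obstacle will be maintaining H1.2 after the merge loop creates new optimal tuples $(u^*, I^* \cup J)$ whose retrace tree is a subgraph of some SMT, since this may force a merger of partition elements. I would address this by iteratively refining $\sset{W}'$: whenever H1.2 is violated by a combined tuple $(u^*, W_1 \cup W_2) \in P \cup Q$ with $W_1, W_2 \in \sset{W}'$ that is optimal and witnessed by some SMT $T''$, I merge $W_1$ and $W_2$ in the partition and switch $T'$ to $T''$. This refinement terminates because the cardinality of $\sset{W}'$ strictly decreases at each step. The existence of a suitable $T''$ at each merger draws on the sub-SMT decomposition property~\cite{DreyfusWagner72a}: any SMT decomposes at join vertices into sub-SMTs, so two compatible sub-SMTs rooted at a common vertex can always be glued into a larger SMT. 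Admissibility of $h^*$, together with the carried-over invariants R1 and R2, ensures that the selection rule in Line~\ref{alg:dijkstra:heuristic} does not prematurely expand tuples whose true costs exceed $\csmt(N, R)$, which is what keeps the refinement consistent with the expansion order.
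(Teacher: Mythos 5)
Your base case and the overall induction scaffold match the paper, but two load-bearing steps in your induction are not sound as written. First, the H2.1$\to$H2.2 transition: when the expanded tuple is the witness $(u^*,W)$, you claim $\Card{R\cap V'}>0$ for $(V',E')=tr_r(u^*,\retraceTree(u^*,W),T)$ follows because some terminal outside $W\cup\{u^*\}$ exists and ``lies in the component that contains $r$.'' That is both a misreading of $tr_r$ (it keeps the component containing $u^*$ after deleting the retrace tree \emph{and} the $u^*$--$r$ path, so a terminal on the $r$-side is irrelevant) and, more importantly, the condition can genuinely fail for $(u^*,W)$: if $u^*$ is an internal vertex of $T$ of degree two, nothing hangs off $u^*$ besides the $W$-subtree and the path towards $r$, so $\phi(u^*)=\emptyset$. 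In that situation H2 must be rescued by a \emph{different} tuple, and this is exactly what the paper's Lemma~\ref{lemma:loop-it} provides: a proof by contradiction that walks along the $u^*$--$r$ path in $T$, showing (using minimality of $T$ to bound degrees by two, optimal propagation costs, and R2.2) that otherwise every path vertex $x_\ell$ would have $(x_\ell,I)$ optimal, expanded, outside $Q$, and $\phi(x_\ell)=\emptyset$, contradicting $\Card{\phi(r)}>0$. Your proposal has no substitute for this argument, and without it H2 is simply not re-established in the critical case.

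Second, your iterative refinement for H1.2 switches the witnessing SMT from $T'$ to some $T''$ containing $\retraceTree(u^*,W_1\cup W_2)$ each time two classes are merged. Termination is fine, but correctness is not: H1.1 must then hold for \emph{all} remaining classes with respect to $T''$, and their representatives were only known to be subgraphs of the old tree; the appeal to ``two compatible sub-SMTs rooted at a common vertex can always be glued into a larger SMT'' does not bridge this, since being part of \emph{some} SMT is precisely what must be certified tree-by-tree. The paper avoids tree-switching altogether: it keeps the $T$ from the hypothesis and argues by cases --- if the expanded $I$ is in no admissible partition ($I\notin\sset{W}$), any newly optimal $(u,I\cup J)$ cannot violate H1.2 because one of $\retraceTree(u,I)$, $\retraceTree(u,J)$ fails to be a subgraph of $T$; if $I\in\sset{W}$ and $J\in\sset{W}$, it coarsens the partition to $(\sset{W}\setminus\{I,J\})\cup\{I\cup J\}$ and notes the merged tuple enters $Q$ (giving H2.1), all relative to the unchanged $T$. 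You should also cover the easy case where the expanded tuple is not optimal (the paper's Case~1); as it stands your case split silently assumes optimality of $(u^*,I^*)$.
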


In the following, we refer to the invariant conditions by the corresponding label.
Before we prove the lemma, we prove some results of this invariant. 
To this end, we let for given~$T\in\smt(\mathcal{I})$ and tuple~$(u,I)$, $\phi(u) \vcentcolon=R \cap V'$, where $(V', E')\eqdef tr_r(u, \retraceTree(u, I), T)$. Note that~$\phi(u)$ is well-defined whenever, $\retraceTree(u, I)$ is a subgraph of~$T$.
\begin{lemma}\label{lemma:loop-it}
Assume that invariant~\emph{H1} and~\emph{H2} holds at the beginning of the current iteration. 
Then, let $\sset{W}$ be a partitioning of $R\setminus\{r\}$ as described in Lemma~\ref{lemma:loop}. Further, let $(u, I)$ be the tuple that is chosen from~$Q$ in the current iteration in Line~\ref{alg:dijkstra:heuristic} of Listing~\ref{alg:dijkstra:final}. 
If $(u, I)$ is optimal and $I \in \sset{W}$, then invariant \emph{H2} holds at the end of the current iteration as well.
\end{lemma}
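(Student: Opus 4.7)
The plan is to analyze a single iteration of the main loop and track how each partition class $W \in \sset{W}$ is affected, fixing $\sset{W}$ from the hypothesis and verifying H2 class-by-class at the end of the iteration.

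First I would catalog what the iteration changes: after removing $(u,I)$ from $Q$ and inserting it into $P$, the merge loop only creates or updates tuples of the form $(u, I\cup J)$ for $J$ with $(u,J)\in P$, and the expansion loop only creates or updates tuples of the form $(v,I)$ for $\{u,v\}\in E$. Since $I\in\sset{W}$ and $\sset{W}$ is a partition of $R\setminus\{r\}$, any $W\in\sset{W}$ with $W\neq I$ is disjoint from $I$; in particular $W\neq I\cup J$ for any such $J$, and $W\neq I$. Hence no tuple of the form $(v,W)$ with $W\neq I$ is created or modified during the iteration. For such $W$, the witness that satisfied H2.1 (a $(v,W)\in Q$) or H2.2 (a $(v,W)\in P$ together with its SMT) at the start of the iteration remains intact, and the same condition witnesses H2 at the end.

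The real content lies in the class $W=I$, where the previous witness was $(u,I)\in Q$ via H2.1; after the iteration $(u,I)$ sits in $P$, so I would establish H2.2 directly for this tuple. Optimality of $(u,I)$ is preserved by R2.1 (tentative costs never drop below $\csmt(N,\{u\}\cup I)$), and H1.1 at the start of the iteration already supplies some $T\in\smt(\mathcal{I})$ with $\retraceTree(u,I)\subseteq T$. What remains is to produce an SMT $T$ for which $|R\cap V_C|>0$, where $(V_C,E_C)=tr_r(u,\retraceTree(u,I),T)$. If $u\in R$, the vertex $u$ itself is retained in $V_C$ by construction of $tr_r$, so $u\in R\cap V_C$ and the condition is immediate.

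The main obstacle will be the case $u\notin R$. Here I would use the structural property of SMTs that every non-terminal vertex has degree at least two, and at least three when acting as a join vertex, so $u$ must have neighbors in $T$ outside $\retraceTree(u,I)$. Combined with the observation that the loop has not terminated (so $(u,I)\neq(r,R\setminus\{r\})$ and unexpanded terminals remain beyond $I$), and with H1.1 placing sub-SMTs for the other classes $W'\in\sset{W}\setminus\{I\}$ inside the same $T$, I would argue that $T$ can be chosen so that at least one such terminal survives in the connected component of $u$ after removing both $\retraceTree(u,I)\setminus\{u\}$ and the path $S_{u,r}$. Formalizing this re-choice of $T$, and verifying that the desired terminal is not wiped out by either of the two removals, is the technically delicate step.
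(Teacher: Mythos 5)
Your treatment of the classes $W\neq I$ and the observation that $u\in V_C$ always survives $tr_r$ (so the case $u\in R$ is immediate) are fine, but the core of your plan has a genuine gap: you commit to establishing \emph{H2.2 for the specific tuple $(u,I)$}, and that claim is false in general, so no amount of ``re-choosing $T$'' can make the delicate step go through. Invariant H2 only demands that \emph{some} tuple $(v,I)$ for the class $I$ witnesses H2.1 or H2.2, and after the iteration the witness may well be a different tuple. Concretely, take the three-vertex path with edges $\{a,x\}$ and $\{x,r\}$, terminals $R=\{a,r\}$, root $r$, and $x\notin R$; at the iteration expanding the optimal tuple $(x,\{a\})$ we have $u=x$, $I=\{a\}\in\sset{W}$, the unique SMT $T$ is the whole path, $\retraceTree(x,\{a\})$ is the edge $\{a,x\}$, and $tr_r(x,\retraceTree(x,\{a\}),T)$ deletes both that edge and the path $S_{x,r}$, leaving only $\{x\}$, so $|\phi(x)|=0$ for \emph{every} admissible $T$ and H2.2 fails for $(x,\{a\})$. (Your degree argument does give $u$ a neighbor in $T$ outside $\retraceTree(u,I)$, but that neighbor may lie on $S_{u,r}$, which is removed as well—exactly what happens here.) The lemma is nonetheless true in this example because expanding $(x,\{a\})$ makes the neighboring tuple $(r,\{a\})$ optimal and inserts it into $Q$, i.e., H2.1 holds with a different vertex; also note, as a minor point, that the pre-iteration witness for class $I$ need not have been $(u,I)$ itself.

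This is precisely why the paper argues existentially and by contradiction rather than verifying H2.2 for $(u,I)$: assuming that \emph{no} tuple $(v,I)$ satisfies H2 after the iteration, it performs an induction along the unique path $S_{u,r}$ from $u$ to $r$ in an SMT $T\supseteq\retraceTree(u,I)$, showing that every vertex $x_\ell$ on this path yields an optimal tuple $(x_\ell,I)\in P\setminus Q$ with $|\phi(x_\ell)|=0$; the key step is that $|\phi(x_\ell)|=0$ forces $x_\ell$ to have degree at most two in $T$ (a third branch would carry no terminal, contradicting minimality of $T$), hence the propagation step makes $(x_{\ell+1},I)$ optimal, and R2.2 places it in $Q$ or $P$, where the standing assumption pushes it into $P$ with $|\phi(x_{\ell+1})|=0$. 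Arriving at $r$ with $|\phi(r)|>0$ gives the contradiction. So the missing idea in your proposal is this weakening to an existential witness for the class $I$ together with the path induction; as written, your final step is not merely technically delicate but unprovable.
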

\begin{proof}
In the following, we show that after the iteration there always exists a tuple $(v, I)$ such that \emph{H2.1} or \emph{H2.2} holds. 
To this end, let $T \in \smt(\mathcal{I})$ be any SMT containing $\retraceTree(u, I)$ as a subgraph. Due to optimality of~$(u,I)$, such a graph exists.  Then, let $S_{u,r}$ be the unique path from $u$ to $r$ in $T$.

We prove the lemma by contradiction. We therefore assume that no tuple $(v, I)$ exists, such that \emph{H2} holds. 
Then, for all optimal tuples $(v^\prime, I)\in P$ we have that, $(v^\prime, I) \notin Q$ and $|\phi(v^\prime)| = 0$.
Next, we aim to arrive at a contradiction by induction on $\ell$, where $x_\ell$ denotes the $\ell$-th vertex in $S_{u,r}$.  
\begin{itemize}
    \item \emph{Induction Hypothesis}: For every vertex $v'$ of $S_{u,r}$ it holds that, $(v^\prime, I)$ is optimal, $(v^\prime, I) \in P, (v^\prime, I) \not\in Q$, and $|\phi(v^\prime)| = 0$.
    \item \emph{Base Case ($x_1$)}: $(u, I)$ is by definition optimal. In the current iteration $(u, I)$ is removed from $Q$ and added to $P$. 
    Furthermore, by assumption we already have that $|\phi(u)|=0$. 
    \item \emph{Induction Step ($x_{\ell+1}$)}: 
    For vertex $x_{\ell+1}$ it holds by hypothesis that, $\Card{\phi(x_\ell)}=0$,
    $(x_\ell, I) \notin Q$,
    $(x_\ell, I) \in P$ and $(x_\ell, I)$ is optimal.
    As a result, $(x_\ell, I)$ has been expanded with optimal costs. 
    Then, $\Card{\phi(x_\ell)}=0$ implies, that $x_\ell$ has degree at most two in $T$. 
    If it had degree 3 or higher, one of the subgraphs would contain no terminal, contradicting that $T$ is minimal.
    Due to the optimality of $(x_\ell, I)$ and since $x_\ell$ has degree 1 or 2 in~$T$, the optimal costs for $(x_{\ell+1}, I)$ are $l(x_\ell, I) + c_N(\edge{x_\ell}{x_{\ell+1}})$. 
    Therefore, after the expansion of $(x_\ell, I)$, the tuple $(x_{\ell+1}, I)$ was optimal. 
    By \emph{R2.2}, $(x_{\ell+1}, I)$ is therefore either in $Q$ or $P$. 
    Then, by assumption $(x_{\ell+1}, I) \in P$ and also by assumption $|\phi(x_{\ell+1})| = 0$.
\end{itemize}
Since $r \in S_{u,r}$ and $|\phi(r)| > 0$, we have a contradiction.
\end{proof}

We now have all the tools to prove Lemma~\ref{lemma:loop}:
\begin{proof}[of Lemma~\ref{lemma:loop}]
We show that the invariant holds at the beginning of the first iteration. 
Every tuple in $Q$ is of the form $(z, \{z\})$, where $z \in R\setminus\{r\}$. 
For each~$z\in R\setminus \{r\}$, these sets~$\{z\}$ induce a partition and $\retraceTree(z, \{z\})$ is a subgraph of any SMT of~$(N,R)$.
As a result, \emph{H1} and~\emph{H2} sustains.

We assume that we are at the beginning of an arbitrary iteration, and assume that the invariant holds for the previous iteration. Next, we remove a tuple $(u, I)$ from $Q$. Towards showing the invariant for the current iteration, the proof proceeds by a case distinction on $(u, I)$.

\emph{Case 1}: $(u, I)$ is not optimal. In this iteration, no optimal value can be ``overwritten'' in $l$, as the weight of any tuple processed in this iteration cannot be lower than an optimal value. 
As a result, the invariant still holds at the end of the iteration.

\emph{Case 2}: $(u, I)$ is optimal, but there exists no partitioning $\sset{W}$, such that $I \in \sset{W}$ and \emph{H1} and \emph{H2} hold for $\sset{W}$. However, by the invariant of the previous iteration, a partition~$\sset{W}$ exists such that~$I\not\in\sset{W}$. 
We show in the following, that neither loop in Line~\ref{alg:dijksteiner-nb-loop}, nor loop in Line~\ref{alg:dijksteiner-set-loop} invalidate the invariant.

Observe, that the loop in Line~\ref{alg:dijksteiner-nb-loop} cannot overwrite tentative cost values of any tuples used in the invariant. 
In particular, no optimal value in $l$ can be overwritten, and no changes to $l$ and $b$ are performed in case of equality. 

Assume that the loop in Line~\ref{alg:dijksteiner-set-loop} creates an optimal tuple $(u, I \cup J)$. 
This could invalidate the invariant only, if there exist $I^\prime, J^\prime \in \sset{W}$ such that $I \cup J = I^\prime \cup J^\prime$. Let $T$ be the SMT as defined in \emph{H1}. 
Then, $I \notin \sset{W}$ and therefore, either $I \not\subseteq I^\prime$ or $J \not\subseteq J^\prime$. 
This implies that either $\retraceTree(u, I)$ or $\retraceTree(u, J)$ is not a subraph of $T$. 
Therefore, \emph{H1.2} still holds, as well as the other invariants.

\emph{Case 3}: $(u, I)$ is optimal and there exists a partitioning $\sset{W}$ as described in the lemma. We only consider optimal tuples that are created in $Q$ or $P$ during the iteration. All other tuples do not affect any of the invariants. 

If no new optimal tuples in $Q$ are created, then, since invariant~\emph{H2} was true at the end of the previous iteration by Lemma~\ref{lemma:loop-it}, invariant \emph{H2} is still preserved. 
As $(u, I)$ is added to $P$, \emph{H1.1} holds. And as no new tuples are created, \emph{H1.2} holds as well.

For every optimal tuple created in the loop at Line~\ref{alg:dijksteiner-nb-loop} (propagation), obviously \emph{H2.1} and therefore \emph{H2} is preserved.
Since $(u, I)$ is added to $P$, \emph{H1.1} is also preserved. 
Finally, since all tuples are created using set $I$, \emph{H1.2} holds as well.

For every optimal tuple processed in the loop at Line~\ref{alg:dijksteiner-set-loop} (combination). Tuples created consist of set $I \cup J$. 
We proceed with a case distinction on $J$:
\begin{itemize}[itemsep=4pt]
\item[] \emph{Case 3a}: $J \notin \sset{W}$; \emph{H1} holds after the iteration as $(u, I) \in P$. \emph{H1.1} holds since $(u, I) \in P$ and \emph{H1.2} holds because $J \notin \sset{W}$. 
Since $(u, I \cup J)$ is optimal, there exists $T \in \smt(\mathcal{I})$ such that $\retraceTree(u, I \cup J), \retraceTree(u, I)$ and $\retraceTree(u, J)$ are subgraphs of~$T$. 
Since $\retraceTree(u, I)$ and $\retraceTree(u, J)$ are subgraphs of $\retraceTree(u, I \cup J)$, and each contains at least one terminal (since $I,J\neq \emptyset$), \emph{H2.2} holds.

\item[] \emph{Case 3b}: $J \in \sset{W}$; We argue that $\sset{W}^\prime = (\sset{W} \setminus \{I, J\}) \cup \{I \cup J\}$ is a partition preserving the invariant. 
Let $T$ be the SMT as described in \emph{H1} for $\sset{W}$. 
\emph{H1.1} holds since either $\retraceTree(u, I \cup J)$ is a subgraph of $T$, or one of $\retraceTree(u, I)$, $\retraceTree(u, J)$ is no subgraph of $T$. \emph{H1.2} sustains since it holds for the tuples~$(u,I)$, $(u,J)$ and therefore also for the tuple~$(u,I\cup J)$. 
Since at the beginning of the iteration $(u, I \cup J) \notin P \cup Q$ by \emph{H1.2}, $(u, I \cup J)$ is added to $Q$ and therefore \emph{2.1} and consequently \emph{H2} holds.%
\end{itemize}
\end{proof}

\begin{lemma}\label{lemma:loop-queue}
Whenever an element from $Q$ is chosen in the loop in Line~\ref{alg:dijkstra:heuristic} of Listing~\ref{alg:dijkstra:final}, there exists $(u, I) \in Q$ that is optimal.
\end{lemma}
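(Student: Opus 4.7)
The plan is to leverage Lemma~\ref{lemma:loop}: at the start of the current iteration a partition $\sset{W}$ of $R\setminus\{r\}$ exists that satisfies H1 and H2. If some $W \in \sset{W}$ satisfies H2.1, then an optimal tuple $(u, W) \in Q$ is directly provided and the claim holds. Accordingly, the nontrivial task is to rule out the alternative, namely that every $W \in \sset{W}$ satisfies only H2.2, which I aim to do by contradiction using the structure of a witnessing SMT $T$ from H1.

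So, assume every $W \in \sset{W}$ satisfies only H2.2, and fix $T \in \smt(\mathcal{I})$ from H1.1 so that for each $W$ the optimal tuple $(u_W, W) \in P$ given by H2.2 has $T_W \eqdef \retraceTree(u_W, W)$ a subgraph of $T$ and $\Card{\phi(u_W)} > 0$, where $\phi(u_W) = R \cap V'$ for $(V',E') = tr_r(u_W, T_W, T)$. The first step is to show that the join vertices $\{u_W : W \in \sset{W}\}$ are pairwise distinct. Suppose $u_W = u_{W'}$ for some $W \neq W'$. Since $W$ and $W'$ are disjoint and both $(u_W, W), (u_W, W')$ lie in $P$, then in the iteration in which the later of the two was expanded the combination loop at Line~\ref{alg:dijksteiner-set-loop} encounters the earlier tuple and inserts an entry for $(u_W, W \cup W')$ into $P \cup Q$ (since $l(u_W, W \cup W')$ was previously $\infty$, as otherwise $(u_W, W\cup W')$ would already witness a violation). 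The inserted value equals $l(u_W, W) + l(u_W, W') = \csmt(N, W \cup W' \cup \{u_W\})$, which is optimal because the tree $T_W \cup T_{W'}$ joined at $u_W$ must be an SMT for $(N, W\cup W' \cup \{u_W\})$: any cheaper Steiner tree for these terminals could be substituted into $T$ in place of $T_W \cup T_{W'}$, contradicting $T \in \smt(\mathcal{I})$. So $(u_W, W\cup W')$ is an optimal element of $P \cup Q$ whose retrace tree is a subgraph of $T$, directly contradicting H1.2.

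With distinctness in hand, the second step exploits the tree structure. Root $T$ at $r$ and choose a deepest join vertex $u_W$, i.e., one whose subtree in $T$ below $u_W$ contains no other $u_{W'}$; such a choice exists because the set of join vertices is finite. Since $\Card{\phi(u_W)} > 0$, there is a terminal $z$ in the connected component of $u_W$ in $T$ after removing $T_W$ (keeping $u_W$) and the $u_W$-to-$r$ path, and this component lies entirely within the subtree below $u_W$. Because $\sset{W}$ partitions $R\setminus\{r\}$, we have $z \in W'$ for some $W' \in \sset{W}$, and $T_{W'}$ is a connected subgraph of $T$ containing both $z$ and $u_{W'}$. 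If $T_{W'}$ lies wholly in the subtree below $u_W$, then $u_{W'}$ lies there too, contradicting the choice of $u_W$. The remaining case is that $T_{W'}$ passes through $u_W$; restricting $T_{W'}$ to the piece below $u_W$ yields an optimal sub-SMT rooted at $u_W$ on a subset of $W'$ containing $z$, and combining with $T_W$ at $u_W$ triggers an instance of the same combination-loop argument used in the distinctness step, again contradicting H1.2.

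The main obstacle is the distinctness step: one must carefully trace through iteration ordering to confirm that the combination loop inserts the optimal merged tuple, that its optimality really follows from $T$ being an SMT, and that the tuple remains visible in $P \cup Q$ at the beginning of the current iteration. The structural step is comparatively elementary once distinctness is established, modulo the edge case where $T_{W'}$ crosses $u_W$, which is absorbed by a second application of the same combination mechanism.
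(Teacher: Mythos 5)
Your proposal follows the same overall skeleton as the paper's proof (argue by contradiction from the invariant of Lemma~\ref{lemma:loop}, locate a vertex that carries optimal tuples for two distinct classes of the partition $\sset{W}$, and then observe that the combination loop in Line~\ref{alg:dijksteiner-set-loop} would have produced an optimal merged tuple in $P\cup Q$, contradicting \emph{H1.2}), but it has a gap right at the start: you ``fix $T$ from H1.1 so that for each $W$ the optimal tuple given by H2.2 has $\retraceTree(u_W,W)$ a subgraph of $T$ and $|\phi(u_W)|>0$''. The invariant does not provide this. \emph{H2.2} only asserts, separately for each $W$, that \emph{some} SMT $T$ with these properties exists, while \emph{H1.1} provides a single common $T$ but without the $\phi$-condition; nothing lets you assume both hold simultaneously for one tree. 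The paper closes exactly this hole with its Claim~(1): under the standing assumption that $Q$ contains no optimal tuple, it re-runs the induction of Lemma~\ref{lemma:loop-it} for the specific tuple supplied by \emph{H1.1}, obtaining for every $W$ an optimal $(u,W)\in P$ with $|\phi(u)|>0$ whose retrace tree lies in the one tree $T$ of \emph{H1}. Your argument needs this bridging step both for the distinctness claim (contradicting \emph{H1.2} requires both retrace trees to sit inside the same $T$) and for rooting a single $T$ so that ``deepest join vertex'' is even meaningful.

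The second, more serious problem is in the structural step. By your own setup the case in which $T_{W'}$ passes through $u_W$ is not an edge case but the generic one: $z$ lies strictly below $u_W$ while $u_{W'}$ does not (by distinctness and the deepest choice), so the connected subgraph $T_{W'}$ must contain $u_W$. Your handling of this case does not work: the terminals of $W'$ lying below $u_W$ form a proper subset $W''\subsetneq W'$, which is \emph{not} an element of the partition $\sset{W}$, so a combination of $(u_W,W)$ with $(u_W,W'')$ cannot contradict \emph{H1.2}, which only quantifies over classes of $\sset{W}$; moreover nothing guarantees that the algorithm ever created an optimal tuple $(u_W,W'')$, or any tuple for $W''$ at all. (A smaller unsoundness of the same flavor: in the distinctness step, ``otherwise $(u_W,W\cup W')$ would already witness a violation'' is not justified, since a finite, even cost-optimal, value of $l(u_W,W\cup W')$ is not ``optimal'' in the paper's sense unless its retrace tree is a subgraph of an SMT of $\mathcal{I}$, and in that situation the combination performs no update.) The paper avoids this entire difficulty by replacing your two structural steps with a counting argument: the $|\sset{W}|$ subtrees of $T$ obtained from Claim~(1) are joined by at most $|\sset{W}|-1$ join vertices of $T$, so some vertex $u$ has $|A(u)|>1$, i.e., carries optimal tuples for two distinct classes, after which the combination argument yields the contradiction with \emph{H1.2}.
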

\begin{proof}
Let $\sset{W}$ be a partition as described in Lemma~\ref{lemma:loop}. We first show the following Claim (1) that given $T$ as defined in \emph{H1},
we have that for every $W \in \sset{W}$, there exists an optimal tuple $(u, W) \in P$ such that $|\phi(u)| > 0$ and $\retraceTree(u, W)$ is a subgraph of $T$. 
Let therefore $W \in \sset{W}$ be an arbitrary element and $(u, W)$ be an optimal tuple such that $\retraceTree(u, W)$ is a subgraph of $T$. By \emph{H1}, such a tuple exists. 
Then, we establish \emph{H2.2} for tuple~$(u,W)$, by using the same proof as in Lemma~\ref{lemma:loop-it}, but for specific tuple~$(u,W)$.
This concludes Claim (1). 

Next, we prove the lemma by contradiction. 
We therefore assume that no tuple $(u, I) \in Q$ exists that is optimal. 
From Claim (1) above, we follow that for every $W \in \sset{W}$, there exists a tuple, satisfying \emph{H2.2}, which is a subgraph of $T$.
As each of the tuples defines a subgraph in $T$, we have $|\sset{W}|$ many subgraphs. 
There can be at most $|\sset{W}|-1$ many join vertices joining these subgraphs, since~$T$ is a tree.
We define for any $u \in V$ the set $A(u) = \{W \in \sset{W} \mid (u, W) \textrm{ is optimal}\}$ of sets in $\sset{W}$ constituting optimal tuples. 
Since we have $|\sset{W}|$ many subgraphs connected by at most $|\sset{W}| - 1$ join vertices, there exists $u \in V$ such that $|A(u)| > 1$ by basic combinatorics. 
Since all tuples $(u, W)$ with $W \in A(u)$ are optimal, the tuple $(u, \bigcup_{W\in A(u)} W)$ would have been added to $Q$ with optimal value in Line~\ref{alg:dijksteiner-add2}. This contradicts \emph{H1.2}.
\end{proof}

\noindent
We are in the situation to prove our main result, Theorem~\ref{thm:main}.

\begin{restatetheorem}[thm:main]
\begin{Theorem}
Given an instance~$(N,R)$ of STP, where~$N=(V,E,\sigma)$, and a root~$r\in R$. 
Then, Algorithm~\dsstar, cf., Listing~\ref{alg:dijkstra:final}, (1) terminates if $h^*$ is admissible. 
Further, after termination, (2) $l(r, R \setminus\{r\})$ is the cost of an SMT for the instance $(N, R)$, i.e., $l(r, R \setminus\{r\})=l^*(r, R\setminus\{r\}=\csmt(N,R)$. 
\end{Theorem}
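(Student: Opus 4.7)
The plan is to prove correctness via a loop invariant that maintains, throughout execution, a partition $\sset{W}$ of $R\setminus\{r\}$ whose blocks correspond to optimal tuples that together piece up some fixed SMT $T$ of the instance. Because admissibility only guarantees a lower bound, an already expanded tuple might later be reachable with a smaller $l$-value, so the standard Dijkstra argument that expanded tuples carry optimal tentative costs no longer applies. This is why \dsstar explicitly considers tuples in $P$ during expansion, and why a partition-style invariant (rather than a per-tuple optimality claim) is the right object.

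First I would fix $T \in \smt(\mathcal{I})$ and formulate two invariants, paralleling the H1/H2 of the appendix: (H1) each block $W \in \sset{W}$ is witnessed by an optimal tuple $(u,W) \in P \cup Q$ whose $\retraceTree(u,W)$ is a subgraph of $T$, with the minimality clause that no two distinct blocks already admit a joint optimal tuple in $P \cup Q$; (H2) for each block, the witnessing tuple is either still in $Q$, or in $P$ but with the property that $r$ lies in the ``unfinished'' connected component obtained by cutting $T$ along the retraced tree. The base case is immediate from the singleton initialization. Preservation is then established by a case split on the selected tuple $(u,I)$: if it is non-optimal, no optimal cell of $l$ can be overwritten (by the previously quoted property R2.1); if it is optimal with $I \notin \sset{W}$, neither the propagation nor the combination loop can violate H1.2 because merging two blocks in $\sset{W}$ would require both halves to be subgraphs of $T$; if it is optimal with $I \in \sset{W}$, one uses an auxiliary path argument along the unique $u$--$r$ path in $T$, invoking R2.2 at each step, to re-establish H2 for the updated partition.

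Next I would prove the key operational consequence: whenever the main loop chooses an element from $Q$, there is in fact an optimal tuple available in $Q$. The argument is a counting step combined with the minimality clause H1.2: if $\sset{W}$ has $k$ blocks and every witnessing tuple lived in $P$ only, then $T$ would carry $k$ branches joined by at most $k-1$ join vertices, so some vertex $u$ must simultaneously witness at least two blocks; but then a prior iteration of the combination loop would have produced an optimal tuple for their union, contradicting H1.2. Admissibility of $h^*$ then ensures that such an optimal tuple, whose key $l(u,I)+h^*(u,R\setminus I)$ is bounded by $\csmt(N,R)$, is eventually selected. Applied to the block $R\setminus\{r\}$ once the partition has collapsed to a single block at $r$, this yields $l(r,R\setminus\{r\}) = \csmt(N,R)$. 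Termination is then immediate, since the search space $V \times 2^R$ is finite and each queued tuple is re-inserted only when its tentative cost strictly decreases, bounding the total number of iterations.

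The main obstacle will be the careful bookkeeping in the preservation argument, especially the case where the combination loop produces a new optimal tuple for $I \cup J$ with $J \in \sset{W}$: one must exhibit a concrete updated partition $\sset{W}' = (\sset{W}\setminus\{I,J\}) \cup \{I \cup J\}$ and verify H1.2 for it, which hinges on the observation that either $\retraceTree(u, I \cup J)$ is itself a subgraph of $T$, or one of $\retraceTree(u,I)$, $\retraceTree(u,J)$ is not. A secondary subtlety is pruning, which I would, following the appendix, sidestep by first proving the statement without it and then noting that pruning only restricts expansion of tuples whose tentative cost is already non-optimal, so the invariants are unaffected.
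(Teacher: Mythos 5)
Your proposal follows essentially the same route as the paper's appendix proof: the H1/H2 partition loop invariant with the path argument along the unique $u$--$r$ path in $T$, the counting lemma guaranteeing an optimal tuple in $Q$ (at most $|\sset{W}|-1$ join vertices forcing a combinable pair, contradicting H1.2), the final step using admissibility to bound the key of optimal tuples by $\csmt(N,R)$, and termination via finiteness plus strictly decreasing tentative costs, with pruning deferred exactly as in the paper. No substantive difference to report.
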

\end{restatetheorem}
\begin{proof}
For showing termination (1), consider that any value $l(u,I)$ that is set is clearly bounded above by the sum of all edges. 
Assuming~$n\eqdef \Card{V}$, the Steiner network has $2^n \cdot n$ tuples. 
Since creating a new tuple $(u, I)$ in $Q$ requires that the found cost is lower than $l(u, I)$, therefore only $2^n \cdot n \cdot c_N((V, E))$ tuples can be created. 
Since this number is finite, the algorithm terminates.

For establishing correctness (2), we have to show that whenever $(r, R\setminus \{r\}) \in P$, then $l(r, R\setminus\{r\}) = \csmt(N, R)$. 
Towards a proof by contradiction, we assume that $(r, R\setminus\{r\}) \in P$ and $l(r, R\setminus\{r\})$ is not equal to the optimal weight. 
Since Steiner heuristic function $h^*$ is admissible, we have $h^*(v, R \setminus I) \leq l^*(v, R \setminus I) \textrm{ for all }  v \in V, \{r\} \subseteq I \subseteq R$.
Let $w = \csmt(N, R)$. Due to the admissibility of~$h^*$, it holds that $w' \eqdef l(r, R\setminus\{r\}) + h^*(r, \{r\}) = l(r, R\setminus\{r\})$. Since~$(r, R\setminus\{r\})$ is not optimal by assumption, $w' > w$. 
Since tuple~$(r, R\setminus\{r\})$ was expanded, it holds that at this iteration, no tuple $(u, I) \in Q$ such that $l(u, I) + h^*(u, R \setminus I) \leq w$ existed. 
Any optimal tuple $(v, J)$ would have $l(v, J) + h^*(v, R \setminus J) \leq l^*(v, J) + l^*(v, R \setminus J) = w$ due to optimality and the admissibility of the Steiner heuristic function. 
Therefore, there existed no optimal tuple in $Q$, contradicting Lemma~\ref{lemma:loop-queue}.
\end{proof}

\section{Details about Preprocessing in \dsstarsolver}
\label{apdx:preprocessing}
In this section, we give a description of all the preprocessing techniques mentioned in Section~\ref{chap:preprocessing}. 
We first introduce some concepts used in preprocessing and then discuss the different preprocessing operations. 
We assume in the following an instance $\mathcal{I} = (N, R)$, where $N = (V, E, \sigma)$, and~$G=(V,E)$.

\paragraph*{New Edges.}
It is common that a reduction introduces an edge $e$ with edge costs $c_e$. 
Whenever $e$ is already part of $N$ we then retain the minimum of the costs, i.e., $\sigma(e) \leftarrow \min\{c_e, \sigma(e)\}$. Otherwise, we set~$\sigma(e)\eqdef c_e$.

\paragraph*{Contraction.}
An edge $\{u,v\}$ is contracted by performing the following steps:
\begin{enumerate}
\item For each edge $\{v, v^\prime\} \in \delta_G(v) \setminus \{u, v\}$, add edge $\{u, v^\prime\}$ to $N$, with edge costs $\sigma(\{v, v^\prime\})$.
\item If $v \in R$ then $R \leftarrow (R \setminus \{v\}) \cup \{u\}$.
\item Remove vertex $v$ and edges $\delta_G(v)$ from $N$.
\end{enumerate}

\paragraph*{Bottleneck Steiner distance.}
%
Given a path $P$ between two vertices $u$ and $v$. 
Then, this path can be transformed into one or more \emph{elementary paths} by splitting it at the intermediate terminals along~$P$. 
The maximum costs~$c_N(P')$ among each~$P'$ of these elementary paths is the \emph{Steiner distance}.

The \emph{bottleneck Steiner distance} between vertices $u$ and $v$, denoted by $s_\mathcal{I}(u, v)$, is the minimum \emph{Steiner distance} among all paths connecting $u$ and $v$.
The \emph{restricted bottleneck Steiner distance} $\overline{s}_\mathcal{I}(u, v)$ is the minimum Steiner distance among all paths connecting $u$ and $v$ without using the edge $\{u, v\}$. 
Observe that if $\{u, v\} \notin E$ then $\overline{s}_\mathcal{I}(u, v) = s_\mathcal{I}(u, v)$  \cite{duin_reduction_1989}.

The values for $s_\mathcal{I}$ and $\overline{s}_\mathcal{I}$ can be computed exactly \cite{HwangRichardsWinter92a} or approximated \cite{duin_steiners_1993, polzin_tobias_algorithms_2003}. 
As computing the exact values might not pay off for large instances, 
we oftentimes use different approximations.
%

\subsection{Preprocessing Operations.}
The \dsstarsolver solver uses a collection of preprocessing operations,
as listed below in three categories.
\begin{enumerate}
\item \emph{Simple operations} are fast and can therefore be run, whenever the instance is simplified. 
\begin{enumerate}
	\item \emph{non-terminals of degree $\leq 1$} \cite[Chapter~2.1.1]{HwangRichardsWinter92a}
	\item \emph{non-terminals of degree~2} \cite[Chapter~2.2.2]{HwangRichardsWinter92a}
	\item \emph{terminals of degree 1} \cite[Chapter~2.2.1]{HwangRichardsWinter92a}
	\item \emph{Include minimum terminal edge} \cite[Chapter~2.2.2]{HwangRichardsWinter92a}
	\end{enumerate}
\item \emph{Exclusions}, which try to remove a sub-graph by showing that there exists an SMT without the sub-graph.
	\begin{enumerate}
	\item \emph{Long Edges} \cite[Lemma~20]{polzin_tobias_algorithms_2003}
	\item \emph{Steiner Distance} \cite[Chapter~3]{duin_reduction_1989}
	\item \emph{Non-Terminals of degree $k$} \cite[Chapter~4]{duin_reduction_1989}
	\item \emph{Dual ascent bounds} \cite[Chapter~3.4.2]{polzin_tobias_algorithms_2003}
	\end{enumerate}
\item \emph{Inclusions} try to merge a sub-graph with an other sub-graph by showing that there exists an SMT containing this sub-graph.
	\begin{enumerate}
	\item \emph{Short Links} \cite[Lemma~23]{polzin_tobias_algorithms_2003}
	\item \emph{Nearest Vertex} \cite[Chapter~2.2.3]{HwangRichardsWinter92a}
	\end{enumerate}
\end{enumerate}
The preprocessing operations are run in the order presented above, where the simple operations are also applied in-between.
For each operation, we define a threshold in terms of a vertex or edge ratio. Whenever an operation fails to remove the number of vertices or edges defined by its threshold, it is deactivated. As soon as no active operations are left, the instance is passed on to the solving algorithm. The thresholds avoid expensive computations with negligible results.

We discuss the different preprocessing operations used in our solver.
Every preprocessing operation consists of a lemma indicating a ``test'' of applicability presented, and a transformation. 
Whenever the test identifies a vertex or edge, the transformation can be performed. 
After obtaining an SMT for the resulting preprocessed instance, these transformations are later reversed on the SMT. 
This yields a solution for the original instance.

\paragraph*{Remove Non-Terminals of degree 0 or 1.}
Finding non-terminal leaf vertices, which can be removed from $N$,
can be done extremely fast. 

\begin{lemma}\cite[Chapter~2.1.1]{HwangRichardsWinter92a}
No vertex $u \in V, u \notin R$ of degree 1 or 0 can be in a Steiner minimal tree.
\end{lemma}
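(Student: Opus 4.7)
The plan is to handle the two degree cases separately and argue each by a short contradiction, exploiting the fact that $R \neq \emptyset$ and edge costs are in $\mathbb{N}$, hence strictly positive.

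First I would dispatch the degree-0 case directly from the definition of a Steiner tree. Suppose towards a contradiction that some $T = (V', E', r) \in \smt(\mathcal{I})$ contains a non-terminal $u$ with $\delta_G(u) = \emptyset$. Since $E' \subseteq E$, no edge of $T$ is incident to $u$, so $u$ is an isolated vertex in $T$. A tree is connected, so this forces $V' = \{u\}$; but $R \subseteq V'$ and $R$ is non-empty, so $u \in R$, contradicting $u \notin R$.

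For the degree-1 case, I would use the standard ``prune the useless leaf'' argument. Suppose $u \notin R$ has $\delta_G(u) = \{\{u,v\}\}$ and appears in some $T = (V', E', r) \in \smt(\mathcal{I})$. Since $T$ is a connected subgraph of $G$ containing $u$, and the only edge of $G$ incident to $u$ is $\{u,v\}$, we must have $\{u,v\} \in E'$ and $u$ is a leaf of $T$. Now form $T' \defeq (V' \setminus \{u\}, E' \setminus \{\{u,v\}\}, r)$. Removing a leaf from a tree yields a tree, so $T'$ is still a rooted tree; since $u \notin R$, we have $R \subseteq V' \setminus \{u\}$, and $r \in R$ is unchanged, so $T' \in \mathcal{T}(\mathcal{I})$. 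But $c_N(T') = c_N(T) - \sigma(\{u,v\}) < c_N(T)$ because $\sigma(\{u,v\}) \in \mathbb{N}$ is positive, contradicting $T \in \smt(\mathcal{I})$.

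No step here is really an obstacle; the only detail worth stating carefully is that $\sigma$ maps into $\mathbb{N}$ (the positive integers, as fixed in the preliminaries), so every edge strictly contributes to the cost, which is what enables the strict inequality $c_N(T') < c_N(T)$ and rules out the degenerate possibility that deleting a zero-cost pendant edge preserves optimality. Everything else is just bookkeeping on the definition of a Steiner tree.
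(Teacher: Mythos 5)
Your proof is correct: the degree-0 case collapses to the degenerate observation that an isolated vertex of a connected tree forces $V'=\{u\}$ and hence $u\in R$, and the degree-1 case is the standard leaf-pruning argument, with strict positivity of $\sigma$ (since $\mathbb{N}$ here is the positive integers) giving the strict cost decrease that rules out \emph{any} SMT containing $u$ rather than merely guaranteeing some SMT avoids it. The paper itself gives no proof of this lemma --- it is cited directly from Hwang, Richards, and Winter --- and your argument is exactly the textbook one; the only bookkeeping you elide (that $\{u,v\}\in E'$ in the degree-1 case requires $|V'|\geq 2$, which holds because $R\subseteq V'$, $R\neq\emptyset$, and $u\notin R$) is handled by the same observation you already made for degree 0.
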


\paragraph*{Connect Neighbors of Non-Terminals of degree 2.}\label{chap:theory-reduction-ntd2}
This operation makes use of an important property of non-terminals of degree two. Whenever any such vertex is in an SMT, both its neighbors have to be as well. Non-terminals of degree two can thereby be replaced by an edge connecting the two neighbors, as defined in the lemma.

\begin{lemma}\cite[Chapter~2.2.2]{HwangRichardsWinter92a}
Given a vertex $u \in V, u \notin R$ of degree two. Let $\{u, v\}, \{u, v^\prime\}$ be the two incident edges. 
Let $N^\prime$ be the network obtained by removing $u$ and adding edge $\{v^\prime, v\}$ with costs $\sigma(\{u, v\}) + \sigma(\{u, v^\prime\})$. Then, $\csmt(N, R) = \csmt(N^\prime, R)$. 
If $N$ already contains $\{v, v^\prime\}$ with $\sigma(\{u, v\}) + \sigma(\{u, v^\prime\}) \geq \sigma(\{v, v^\prime\})$, then $u$ can be removed.
\end{lemma}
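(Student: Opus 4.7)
The plan is to prove the two claims of the lemma in turn by exhibiting cost-preserving transformations between Steiner trees of $(N,R)$ and of $(N',R)$. The core observation is structural: since $u$ has degree two in $N$ and is not a terminal, any SMT of $(N,R)$ that uses $u$ at all must use both edges incident to $u$. Indeed, a non-terminal leaf in an SMT can always be pruned without violating terminal coverage, so $u$ would be pruned if it had degree one; and since its only other possibility is degree two, both incident edges $\{u,v\}$ and $\{u,v'\}$ must appear.

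First I would show $\csmt(N',R)\le\csmt(N,R)$. Take any $T\in\smt((N,R))$ with root $r$. If $u\notin T$, then $T$ uses no edge of $\delta_G(u)$, so $T$ itself is a subgraph of $N'$ and a Steiner tree for $(N',R)$ of the same cost. Otherwise, by the structural observation above, both edges $\{u,v\}$ and $\{u,v'\}$ lie in $T$. Form $T'$ by removing $u$ together with these two edges and inserting the new edge $\{v,v'\}$ of cost $\sigma(\{u,v\})+\sigma(\{u,v'\})$. The tree property and terminal coverage are preserved, and $c_{N'}(T')=c_N(T)$.

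Next I would show $\csmt(N,R)\le\csmt(N',R)$ by the reverse transformation. Let $T'\in\smt((N',R))$. If $T'$ does not use the edge $\{v,v'\}$, then $T'$ is already a subgraph of $N$. If it does, subdivide that edge by inserting the vertex $u$ and the two original edges $\{u,v\},\{u,v'\}$; the resulting tree $T$ is a Steiner tree for $(N,R)$ with $c_N(T)=c_{N'}(T')$ because the new edge cost equals the sum of the two subdivided edges. Combining both inequalities gives $\csmt(N,R)=\csmt(N',R)$.

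For the second claim, suppose $\{v,v'\}\in E$ with $\sigma(\{v,v'\})\le\sigma(\{u,v\})+\sigma(\{u,v'\})$. Given any SMT $T$ of $(N,R)$, either $u\notin T$ and we simply remove $u$ without changing $T$, or, by the structural observation, both $\{u,v\}$ and $\{u,v'\}$ are in $T$; replacing this $v$--$u$--$v'$ subpath by the single edge $\{v,v'\}$ yields a Steiner tree of no greater cost not containing $u$, so an SMT exists in the graph with $u$ deleted. The main obstacle, and the only nontrivial point of the argument, is justifying that an SMT must use both edges at $u$ whenever it uses $u$ at all; once this leaf-pruning lemma is in place, the rest is routine edge bookkeeping together with the convention on $\sigma'$ for edges that may already be present.
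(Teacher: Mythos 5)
Your proof is correct and follows the standard argument: the paper does not prove this lemma itself but cites it from the literature, and the cost-preserving tree surgery you describe (replace the path $v$--$u$--$v'$ by the edge $\{v,v'\}$ and vice versa, after observing that an SMT containing $u$ must use both incident edges because positive edge costs forbid non-terminal leaves) is exactly the intended justification. The only points worth stating explicitly are that strict positivity of $\sigma$ underlies the leaf-pruning step, and that when $\{v,v'\}$ already exists in $N$ with cost at most the sum, an optimal tree of $N'$ using that edge is already a subgraph of $N$ --- which is precisely what the lemma's second claim and the paper's ``New Edges'' minimum convention are there to handle.
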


\paragraph*{Include Edge of Terminals of degree 1.}
This simple operation allows us to quickly identify edges that must be in any SMT and can therefore be contracted. This similarly also applies to edges of cost 0.

\begin{lemma} \cite[Chapter~2.2.1]{HwangRichardsWinter92a}
For any terminal $z \in R$ of degree one, the incident edge $\{z, u\}$ is in every SMT.
\end{lemma}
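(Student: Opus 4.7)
The plan is to prove this directly from the definition of a Steiner tree, without needing any exchange argument or optimization reasoning, because degree-one terminals force the incident edge structurally.

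First I would recall that, by the definition of a Steiner tree $T = (V', E', r)$ for $\mathcal{I} = (N, R)$, we require $R \subseteq V'$ and $T$ to be connected. In particular, the terminal $z \in R$ must lie in $V'$, and since $T$ is a tree on more than one vertex whenever $|R| \geq 2$, $z$ must have at least one incident edge in $E'$. The case $|R| = 1$ is degenerate: if $R = \{z\}$ then the SMT is simply the single vertex $z$ with empty edge set and no edge of $N$ is in any SMT, so the lemma should be read under the standing assumption $|R| \geq 2$ (as is implicit in the rest of the paper, where root $r \in R$ and at least one non-root terminal is considered).

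Next I would invoke the hypothesis that $z$ has degree one in the underlying graph $(V, E)$ of $N$, meaning $\delta_{(V,E)}(z) = \{\{z,u\}\}$ for the unique neighbor $u$. Since $E' \subseteq E$, the only edge in $E'$ that can be incident to $z$ is $\{z, u\}$. Combined with the observation from the previous step that $z$ has at least one incident edge in $E'$, this forces $\{z, u\} \in E'$. Since the argument applies to any Steiner tree for $\mathcal{I}$, in particular it applies to every SMT.

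The main obstacle is essentially nonexistent: the statement is structural rather than optimization-theoretic. The only subtlety is the edge case $|R|=1$, which I would address explicitly in the writeup (or by stating the standing assumption); one might also note the complementary observation that contracting the edge $\{z,u\}$, transferring the terminal mark from $z$ to $u$ if $u \notin R$, yields an equivalent instance $(N', R')$ with $\csmt(N,R) = \csmt(N', R') + \sigma(\{z,u\})$, which justifies the corresponding transformation in the preprocessing module.
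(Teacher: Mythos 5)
Your proof is correct. The paper itself does not prove this lemma---it is cited from the literature as a preprocessing test---and your argument is the standard structural one: every Steiner tree must contain $z$ (since $R \subseteq V'$) and, for $|R| \geq 2$, must connect $z$ to at least one other vertex, while $\{z,u\}$ is the only edge of $E$ incident to $z$, so it lies in every Steiner tree and in particular in every SMT. Your explicit treatment of the degenerate case $|R|=1$ and the remark that contracting $\{z,u\}$ yields an equivalent instance (which is what the preprocessing module actually exploits) are both appropriate and consistent with how the operation is used in the paper.
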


\paragraph*{Include Minimum Terminal Edge.}
The following lemma is a generalization of the previous lemma and allows for more edges to be contracted. 

\begin{lemma} \cite[Chapter~2.2.2]{HwangRichardsWinter92a}
If the nearest vertex incident to a terminal $z$ is also a terminal $z'$, the edge $\{z, z^\prime\}$ is in at least one SMT.
\end{lemma}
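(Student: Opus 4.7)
The plan is an exchange argument. Take any SMT $T$ for the instance $(N,R)$; if the edge $\{z,z'\}$ already belongs to $T$, we are done, so assume otherwise. Since both $z$ and $z'$ are terminals, they must both be vertices of $T$, and because $T$ is a tree, there is a unique path $P$ in $T$ from $z$ to $z'$. Let $\{z,u\}$ denote the first edge of $P$ leaving $z$; note that $u \neq z'$ by the assumption that $\{z,z'\}\notin T$, so in particular $P$ has at least two edges.

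Next I would use the hypothesis that $z'$ is the nearest vertex incident to $z$. Unpacking this, it means $\{z,z'\}\in E$ and $\sigma(\{z,z'\}) \leq \sigma(\{z,w\})$ for every neighbor $w$ of $z$ in $N$. In particular, $\sigma(\{z,z'\}) \leq \sigma(\{z,u\})$. I would then construct $T'$ by removing $\{z,u\}$ from $T$ and inserting $\{z,z'\}$. Removing $\{z,u\}$ splits $T$ into two components; $z$ lies in one, and $u$ (together with the rest of $P$, and hence $z'$) lies in the other. Adding $\{z,z'\}$ therefore reconnects the two components, so $T'$ is again a tree containing the same vertex set as $T$, and hence contains $R$.

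Finally I would compare costs: $c_N(T') = c_N(T) - \sigma(\{z,u\}) + \sigma(\{z,z'\}) \leq c_N(T)$. Since $T$ is an SMT, $T'$ has cost exactly $c_N(T)$ and is thus itself an SMT, and by construction it contains the edge $\{z,z'\}$. This completes the argument.

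The only subtle point to get right is the connectivity check after the swap: one has to verify that $u$ and $z'$ end up in the same component after removing $\{z,u\}$, which follows because the sub-path of $P$ from $u$ to $z'$ avoids the edge $\{z,u\}$. Everything else is bookkeeping; the cost inequality is immediate from the ``nearest neighbor'' hypothesis. I do not expect any genuine obstacle beyond being explicit about this component argument.
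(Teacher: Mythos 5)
Your exchange argument is correct: since $\{z,u\}$ is a tree edge it lies in $E$, so the nearest-vertex hypothesis gives $\sigma(\{z,z'\})\leq\sigma(\{z,u\})$, and your component check justifies that the swap yields another Steiner tree of no greater cost, hence an SMT containing $\{z,z'\}$. The paper itself only cites this lemma from Hwang--Richards--Winter without proof, and your argument is exactly the standard one underlying that reference.
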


\paragraph*{Steiner Distance.}
This operation makes use of the previously introduced \emph{bottleneck Steiner distance} to identify removable edges. We perform it with different approximation methods for $s_{\mathcal{I}}$ and $\overline{s}_{\mathcal{I}}$.

\begin{lemma}\cite[Chapter~3]{duin_reduction_1989}
Any edge $\{u, v\}$ with $\sigma(\{u, v\}) > s_{\mathcal{I}}(u, v)$ can be removed from $N$. The same holds for any edge $\{u, v\}$ such that $\sigma(\{u, v\}) \geq \overline{s}_{\mathcal{I}}(u, v)$ \cite{polzin_tobias_algorithms_2003}.
\end{lemma}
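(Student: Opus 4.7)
The plan is to prove both claims by exchange arguments: assume for contradiction that some SMT $T$ contains the edge $e=\{u,v\}$ under the stated hypothesis, and either derive a strictly cheaper Steiner tree (contradicting optimality, for part 1) or construct an equally cheap Steiner tree that avoids $e$ (witnessing removability, for part 2). This is the standard template for bottleneck Steiner distance reductions and it only requires the definitions of $s_\mathcal{I}$ and $\overline{s}_\mathcal{I}$ together with the minimality of $T$.

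First, for part (1), assume $\sigma(e) > s_\mathcal{I}(u,v)$ and let $T$ be an SMT containing $e$. Let $P$ be a $u$-$v$ path in $N$ realizing $s_\mathcal{I}(u,v)$, split at the intermediate terminals $z_1,\ldots,z_{m-1}$ into elementary paths $P_1,\ldots,P_m$, so each $c_N(P_j) \leq s_\mathcal{I}(u,v) < \sigma(e)$. Removing $e$ from $T$ splits it into components $T_1 \ni u$ and $T_2 \ni v$; every terminal of $R$ lies in $T_1 \cup T_2$, in particular each $z_i$ lies in exactly one of the two components. Walk through the sequence of elementary-path endpoints $u=x_0, z_1,\ldots,z_{m-1}, v=x_m$: because $x_0 \in T_1$ and $x_m \in T_2$, there is some index $j$ where $x_{j-1} \in T_1$ and $x_j \in T_2$. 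The elementary path $P_j$ between them has cost strictly less than $\sigma(e)$, so the subgraph $T_1 \cup T_2 \cup P_j$ is connected, contains all of $R$, and has total edge weight at most $c_N(T) - \sigma(e) + c_N(P_j) < c_N(T)$; any spanning tree of it is a Steiner tree cheaper than $T$, contradicting minimality.

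For part (2) with $\sigma(e) \geq \overline{s}_\mathcal{I}(u,v)$, I would repeat the exact same construction but take $P$ to be a witness for $\overline{s}_\mathcal{I}(u,v)$, which by definition does not use $e$. The transition elementary path $P_j$ then has cost at most $\overline{s}_\mathcal{I}(u,v) \leq \sigma(e)$, so the resulting Steiner tree has cost at most $c_N(T)$ and does not contain $e$. Hence there is an SMT avoiding $e$, so $e$ may be deleted from $N$ without changing $\csmt(N,R)$. The strict versus non-strict distinction between the two parts is exactly mirrored by the strict inequality $<$ in the first argument giving a contradiction, versus the non-strict $\leq$ in the second producing an alternative optimum.

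The main obstacle is making the ``transition'' step fully rigorous and handling the corner cases. Concretely I would need to carefully treat: (i) the case when $u$ or $v$ is itself a terminal (so it appears as a boundary rather than an intermediate terminal of $P$, but the sequence $x_0,\ldots,x_m$ still has the right alternation); (ii) the possibility that $P$ shares vertices or edges with $T$, in which case $T_1 \cup T_2 \cup P_j$ need not be a tree, so one must pass to a spanning tree of the connected graph and check that all terminals are retained; and (iii) the guarantee that the elementary path $P_j$, even if it re-enters $T_1$ after leaving it, still provides a $T_1$-to-$T_2$ connection, which follows because $P_j$ is a path and its endpoints lie in different components. The remaining accounting is a direct edge-weight computation and does not pose a real difficulty.
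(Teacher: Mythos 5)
The paper states this lemma as a cited result (Duin; Polzin) and gives no proof of its own, so there is nothing internal to compare against. Your exchange argument --- removing $e$ from a supposed SMT, locating the elementary path $P_j$ of the bottleneck witness whose endpoints fall in the two components, and reconnecting at cost $c_N(P_j)$, with the strict inequality giving a contradiction for $s_{\mathcal{I}}$ and the non-strict case using an $e$-avoiding witness for $\overline{s}_{\mathcal{I}}$ to exhibit an alternative SMT --- is exactly the standard proof from the cited sources and is correct, including the corner cases you flag.
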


\paragraph*{Remove Long Edges.}
This operation complements the previous one by considering removable edges that may have been missed.

\begin{lemma}\cite[Lemma~20]{polzin_tobias_algorithms_2003}
Let $M$ be the MST for the distance network $D_N(R)$. Furthermore, let $cmax$ be the maximum cost among all edges in $M$. Any edge $\{u,v\}$ with $\sigma(\{u, v\}) > cmax$ can be removed from $N$.
\end{lemma}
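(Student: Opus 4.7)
The plan is to prove the lemma by contradiction: I assume some SMT $T$ of $\mathcal{I}=(N,R)$ uses the edge $e=\{u,v\}$ with $\sigma(e)>cmax$ and then construct a strictly cheaper Steiner tree, contradicting the minimality of $T$.

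First, I would analyze the structure obtained by deleting $e$ from $T$. Since $T$ is a tree, removing $e$ produces exactly two subtrees $T_1\ni u$ and $T_2\ni v$. Minimality of $T$ forces each $T_i$ to contain at least one terminal: otherwise the terminal-free subtree together with $e$ could be pruned while still spanning $R$, contradicting $T\in\smt(\mathcal{I})$. Let $R_i \eqdef R\cap V(T_i)$, so $R_1$ and $R_2$ form a nontrivial partition of $R$.

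Next I would use $M$ to find a cheap reconnecting path. Since $M$ is a spanning tree of $D_N(R)$ (whose vertex set is $R$) and $R_1,R_2$ is a nontrivial partition of that vertex set, $M$ must contain at least one edge $\{r_1,r_2\}$ with $r_1\in R_1$ and $r_2\in R_2$: this is the standard cut property applied to the connected graph $M$. By the definition of $D_N(R)$ this edge has weight $d_N(r_1,r_2)\le cmax<\sigma(e)$. Let $P$ be a shortest $r_1$-$r_2$ path in $N$, so $c_N(P)=d_N(r_1,r_2)\le cmax$. Now consider the subgraph $T'$ of $N$ whose edge set is $E(T_1)\cup E(T_2)\cup E(P)$. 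Since $r_1\in V(T_1)$ and $r_2\in V(T_2)$ are joined by $P$, $T'$ is connected and contains every terminal. Its total cost satisfies
\[
c_N(T')\;\le\;c_N(T_1)+c_N(T_2)+c_N(P)\;=\;c_N(T)-\sigma(e)+c_N(P)\;<\;c_N(T),
\]
because $\sigma(e)>cmax\ge c_N(P)$. Any spanning tree of $T'$ rooted at $r\in R\subseteq V(T')$ is a Steiner tree for $\mathcal{I}$ whose cost is at most $c_N(T')<c_N(T)$, contradicting $T\in\smt(\mathcal{I})$. Hence no SMT contains $e$, so removing $e$ from $N$ leaves $\csmt(N,R)$ unchanged (the reverse inequality is trivial since deleting an edge cannot decrease Steiner cost).

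The main obstacle is the clean bookkeeping: I have to verify that both sides of the cut $R_1,R_2$ are nonempty (this is where minimality of $T$, rather than mere connectivity, is used), and that the cut-crossing MST edge exists. A second, minor technicality is that $P$ may share vertices or edges with $T_1\cup T_2$; this only makes the edge-cost bound $c_N(T')\le c_N(T_1)+c_N(T_2)+c_N(P)$ loose rather than violating it, so the strict inequality still goes through. Everything else reduces to the basic observations that $D_N(R)$ records shortest-path distances and that spanning trees realize the cut property.
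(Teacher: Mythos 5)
Your proof is correct, and it matches the intended argument: the paper itself does not prove this lemma but imports it from the cited source (Polzin, Lemma~20), where the proof is precisely this exchange argument --- delete the long edge from a hypothetical SMT containing it, note that minimality forces a terminal on each side of the resulting split, and reconnect through a shortest terminal-to-terminal path certified by an $M$-edge crossing the induced cut of $R$, of cost at most $cmax < \sigma(\{u,v\})$. One cosmetic remark: what you use is not the MST cut property proper, but only the weaker facts that any spanning tree of $D_N(R)$ must cross every nontrivial bipartition of $R$ and that $cmax$ bounds every edge of $M$; your bookkeeping on the nonemptiness of both terminal sides and on possible overlap between $P$ and $T_1 \cup T_2$ is exactly what is needed, so the strict inequality and the contradiction go through.
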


\paragraph*{Remove Non-Terminals of degree~$k$.}
The goal of this operation is to identify non-terminals, that cannot have a degree higher than two in an SMT. 
To this end, given a set~$S\subseteq V$ of vertices, we define the \emph{Steiner distance network} $D^\prime(V) \defeq (V, V \times V, s^\prime)$, where $s^\prime(\{u, v\}) \eqdef s_{\mathcal{I}}(u, v)$. 
Further, let $M^\prime(V)$ be the MST of $D^\prime(V)$,
and $c'_{M^\prime}(V)\eqdef c_{D^\prime(V)}({M^\prime}(V))$ be the sum of edge costs of $M^\prime(V)$.

\begin{lemma}\cite[Chapter~4]{duin_reduction_1989}
For $u \in V \setminus R$ with $|\delta_G(u)| \geq 3$ exists an SMT where $u$ has maximum degree two if: For every $A \subseteq \delta_G(u)$ with $|A| \geq 3$, it holds that
\[
\sum_{\{u, v\} \in A} \sigma(\{u, v\}) \geq c'_{M^\prime}(\{v \mid \{u, v\} \in A \})
\]
\end{lemma}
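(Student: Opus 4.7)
The plan is to argue by contradiction via a standard exchange/replacement argument. Suppose no SMT exists in which $u$ has degree at most two; pick any SMT $T$ and set $A \defeq \delta_T(u) \subseteq \delta_G(u)$. Since $u$ has degree at least three in $T$, $|A|\geq 3$, so the hypothesis of the lemma applies to this particular $A$. Let $B \defeq \{v : \{u,v\}\in A\}$ be the neighbors of $u$ in $T$. The goal is to construct a new Steiner tree $T''$ of cost $c_N(T'')\le c_N(T)$ in which $u$ has degree strictly less than $|A|$ (in fact zero or one), which would contradict the choice of $T$ as an SMT with $u$ of minimum-achievable degree $\geq 3$. (Iterating, this yields an SMT with $u$ of degree at most two.)

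First I would delete $u$ from $T$, obtaining $k=|A|$ connected subtrees $T_1,\dots,T_k$ with $v_i\in T_i$. A standard minimality argument shows that each $T_i$ must contain at least one terminal, for otherwise $T_i\cup\{\{u,v_i\}\}$ could be pruned from $T$, contradicting that $T$ is an SMT. Next I would invoke the Steiner-distance MST $M'(B)$ on the neighborhood $B$: for every edge $\{v_i,v_j\}$ of $M'(B)$, fix a realizing path $P_{ij}$ in $N$ whose maximum elementary-subpath cost equals $s_{\mathcal{I}}(v_i,v_j)$. Form the graph $T' \defeq \bigcup_i T_i \cup \bigcup_{\{v_i,v_j\}\in M'(B)} P_{ij}$. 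By construction $T'$ is connected and contains every terminal of $\mathcal{I}$.

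The key step is to extract from $T'$ a Steiner tree $T''$ satisfying
\[
c_N(T'') \;\le\; \sum_{i=1}^{k} c_N(T_i) \;+\; c'_{M'}(B).
\]
The mechanism is cycle-breaking exploiting the structure of elementary paths. Each $P_{ij}$ decomposes at its intermediate terminals into elementary subpaths whose costs are bounded above by $s_{\mathcal{I}}(v_i,v_j)$, and crucially every such intermediate terminal already lies in some $T_\ell$, because terminals belong to $T$ and $T=\{u\}\cup\bigcup_\ell T_\ell$. Treating the subtrees as super-nodes and the elementary subpaths as weighted edges connecting them, I would apply a Kruskal-style selection to pick, for each $P_{ij}$, the cheapest subset of its elementary subpaths needed to merge the components it touches; the remaining elementary subpaths close cycles and can be discarded without disconnecting any terminal. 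A careful induction on the order in which the paths of $M'(B)$ are processed shows that the total surviving cost contributed by $P_{ij}$ is at most its bottleneck value $s_{\mathcal{I}}(v_i,v_j)$, whence summing over $\{v_i,v_j\}\in M'(B)$ yields the bound above.

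Once this bound is established, the hypothesis applied to $A=\delta_T(u)$ gives $c'_{M'}(B)\le \sum_{\{u,v\}\in A}\sigma(\{u,v\})$, so $c_N(T'')\le c_N(T)$, and $T''$ is a Steiner tree in which $u$ is not incident to any edge of the reconnection (it may still appear inside some $T_i$ but with degree at most two), contradicting our standing assumption. The main obstacle is clearly the cycle-breaking bookkeeping in the middle step: the actual length of a realizing path $P_{ij}$ can exceed $s_{\mathcal{I}}(v_i,v_j)$ considerably, and one must verify that the cycles created by intermediate terminals lying in the existing subtrees always permit us to delete exactly enough edge cost to collapse the contribution of $P_{ij}$ to at most $s_{\mathcal{I}}(v_i,v_j)$. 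All other steps are routine manipulations of Steiner trees and MST properties.
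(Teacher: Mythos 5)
The paper itself gives no proof of this lemma (it is cited from Duin's thesis, Chapter~4), so your proposal can only be measured against the standard argument from that literature, which it follows in outline: delete $u$ from an SMT $T$, note that each resulting component $T_i$ contains a terminal, and reconnect the components along the edges of $M'(B)$ via paths realizing the bottleneck Steiner distances. The genuine gap is exactly the step you yourself flag as ``the main obstacle'' and then merely assert: the bound $c_N(T'') \le \sum_i c_N(T_i) + c'_{M'}(B)$. Your mechanism --- ``pick, for each $P_{ij}$, the cheapest subset of its elementary subpaths needed to merge the components it touches'' --- does not deliver it: each elementary subpath of $P_{ij}$ is individually bounded by $s_{\mathcal{I}}(v_i,v_j)$, but if two or three of them survive for a single MST edge, that edge contributes up to $2s_{\mathcal{I}}(v_i,v_j)$ or $3s_{\mathcal{I}}(v_i,v_j)$, and nothing in a generic Kruskal-style selection rules this out. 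What makes the inequality true is that one can always keep \emph{at most one} elementary subpath per edge of $M'(B)$: process a leaf $w_i$ of $M'(B)$ with incident edge $\{w_i,w_j\}$, walk along the realizing path from $w_i$, and keep the \emph{first} elementary subpath whose far endpoint lies outside the super-component currently containing $w_i$ (all subpath endpoints are terminals or $w_i,w_j$, hence lie in super-components); this merges $w_i$'s component with another at cost at most $s_{\mathcal{I}}(w_i,w_j)$, after which $w_i$ can be dropped as a representative and one recurses on $M'(B)$ minus the leaf. Without this (or an equivalent) selection rule, the central inequality --- and hence the lemma --- is unproven.

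A secondary problem concerns the degree of $u$ in $T''$. Your parenthetical ``it may still appear inside some $T_i$'' is impossible, since the $T_i$ are the components of $T-u$; the real danger is that $u$ reappears as an \emph{interior} vertex of kept elementary subpaths (the realizing paths may pass through $u$, which is a non-terminal and so never splits an elementary subpath). If two kept subpaths run through $u$, its degree in $T''$ is at least four, so neither your claim that $u$ ends with degree zero or one, nor the termination of the iteration behind ``minimum-achievable degree $\geq 3$'', follows as stated. This needs an explicit additional argument (e.g., a strictly decreasing potential for the iteration, or rerouting/cycle-breaking that preferentially removes edges incident to $u$); as written, the final contradiction is not yet obtained.
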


Given such a vertex $u$ and its neighbors $V_A = \{v \mid \{u, v\} \in \delta_G(u) \}$, we can remove $u$ and replace it by edges connecting its neighbors. 
We add edges $\{(v, v^\prime) \in V_A \times V_A \mid v \neq v^\prime\}$ with costs $\sigma(\{v, v^\prime\}) \eqdef  \sigma(\{u, v\}) + \sigma(\{u, v^\prime\})$ for $v, v^\prime \in V_A, v \neq v^\prime$.

\paragraph*{Dual ascent bounds.}
We already discussed the dual ascent algorithm in Appendix~\ref{app:dual}. Besides a lower bound~$\tilde{w}$, the algorithm also maintains reduced arc costs $\tilde{\sigma}$. We discuss how to use these results to establish local lower bounds. Let $r$ denote the terminal that is designated as root, and let~$R'\eqdef R\setminus\{r\}$.

\begin{lemma}\cite[Chapter~3.4.2]{polzin_tobias_algorithms_2003}
Let $\tilde{w}$ be the lower bound and $\tilde{c}$ be the reduced edge costs after a dual ascent run. Furthermore, let $\tilde{d}$ be the distance function using the reduced edge costs and directed paths. Given a vertex $u \in V$, 
\[
\tilde{w} + \tilde{d}(r, u) + \min_{z \in R^\prime} \tilde{d}(u, z) 
\]
is a lower bound on any Steiner tree containing $u$. 

A similar bound can be derived for any edge $\{u, v\} \in  E$. Let
\begin{align*} 
c_u &= \tilde{d}(r, u) + \tilde{\sigma}((u, v)) + \min_{z \in R^\prime}  \tilde{d}(v, z)\\
c_v &= \tilde{d}(r, v) + \tilde{\sigma}((v, u)) + \min_{z \in R^\prime}  \tilde{d}(u, z)
\end{align*}
$\tilde{w} + \min\{c_u, c_v\}$ is a lower bound for any Steiner tree containing $\{u, v\}$.
\end{lemma}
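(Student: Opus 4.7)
The plan is to reason in the directed (arborescence) formulation that underlies dual ascent. Recall that dual ascent solves a dual of the Steiner arborescence LP in the bidirected graph (each undirected edge $\{u,v\}$ replaced by two arcs $(u,v),(v,u)$ of cost $\sigma(\{u,v\})$): it maintains dual multipliers $y_C \geq 0$ indexed by Steiner cuts $C$ (i.e., $r\notin C$, $C\cap R'\neq\emptyset$) satisfying $\sum_{C:(x,y)\in\delta^-(C)} y_C \leq \sigma(\{x,y\})$, with $\tilde{w}=\sum_C y_C$ and reduced costs $\tilde{\sigma}((x,y)) = \sigma(\{x,y\}) - \sum_{C:(x,y)\in\delta^-(C)} y_C \geq 0$. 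The first step I would isolate is the fundamental inequality: for any Steiner arborescence $A$ rooted at $r$ that spans $R$, $c(A) \geq \tilde{w} + \tilde{c}(A)$, where $\tilde{c}(A)=\sum_{a\in A}\tilde{\sigma}(a)$. This follows by writing $c(A)=\sum_{a\in A}\tilde{\sigma}(a)+\sum_C y_C|A\cap\delta^-(C)|$ and using that every Steiner cut is crossed by $A$ at least once.

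Next I would reduce the undirected statement to the directed one. Given any Steiner tree $T$ for $(N,R)$ containing the vertex $u$, I may assume all leaves of $T$ are terminals (otherwise prune leaves, lowering the cost). Orient $T$ away from $r$ to obtain a Steiner arborescence $A$ of the same cost. In $A$ there is a unique directed path $P_1$ from $r$ to $u$, of reduced-cost value $\tilde{c}(P_1)\geq \tilde{d}(r,u)$. Moreover, since every leaf of $A$ is a terminal, there is a directed path $P_2$ in $A$ from $u$ to some terminal $z\in R'$ (if $u\in R'$ itself, take $P_2$ empty and $z=u$; if $u=r$, the vertex case is trivial). Crucially, $P_1$ and $P_2$ are arc-disjoint because $A$ is an arborescence and they share only the vertex $u$. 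Hence $\tilde{c}(A)\geq \tilde{c}(P_1)+\tilde{c}(P_2)\geq \tilde{d}(r,u)+\min_{z'\in R'}\tilde{d}(u,z')$, and combining with the fundamental inequality yields $c(T)=c(A)\geq \tilde{w}+\tilde{d}(r,u)+\min_{z'\in R'}\tilde{d}(u,z')$.

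For the edge bound I would apply the same oriented decomposition to a Steiner tree $T$ containing $\{u,v\}$, again assuming leaves are terminals. After orientation, exactly one of the two arcs $(u,v)$ or $(v,u)$ appears in $A$. In the first case, $A$ contains a directed path from $r$ to $u$, the arc $(u,v)$, and a directed path from $v$ to some terminal $z\in R'$; arc-disjointness of these three sub-walks follows from the arborescence structure, giving $\tilde{c}(A)\geq \tilde{d}(r,u)+\tilde{\sigma}((u,v))+\min_{z\in R'}\tilde{d}(v,z)=c_u$. Symmetrically, the other orientation gives $\tilde{c}(A)\geq c_v$. Therefore $c(T)\geq \tilde{w}+\min\{c_u,c_v\}$, proving the edge lower bound.

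The main obstacle is the clean handling of the orientation/arc-disjointness argument and the mild pruning step that lets me assume leaves are terminals; once those are in place the computation is a direct consequence of the dual-ascent invariants, with nonnegativity of reduced costs ensuring that restricting the sum $\tilde{c}(A)$ to the selected sub-paths only weakens the inequality. A subtle point to verify carefully is that the directed shortest-path distances $\tilde{d}(r,u)$, $\tilde{d}(u,z)$, $\tilde{d}(v,z)$ are genuinely lower bounds for $\tilde{c}(P_1),\tilde{c}(P_2)$ on the specific paths inside $A$, which is immediate from the definition of $\tilde{d}$ as the minimum over all directed paths.
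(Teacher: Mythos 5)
Your overall route is the right one, and it is in fact the only route available for comparison: the paper does not prove this lemma at all, it imports it from Polzin's thesis, so your argument has to stand on its own. Its core does. The fundamental inequality $c(A) \geq \tilde{w} + \tilde{c}(A)$ for every Steiner arborescence $A$, derived from feasibility of the cut multipliers maintained by dual ascent, is exactly the right invariant, and your decomposition — orient $T$ away from $r$, extract the path $P_1$ from $r$ to $u$, possibly the arc $(u,v)$, and a path $P_2$ descending to a terminal leaf, all pairwise arc-disjoint by the arborescence structure — together with nonnegativity of the reduced costs gives the claimed bounds. The boundary cases you flag ($u=r$, $u\in R^\prime$) are handled correctly, and the terminal reached by $P_2$ lies strictly below $u$ (resp.\ $v$) in the arborescence, hence is distinct from $r$ and belongs to $R^\prime$, as needed.

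The one genuine gap is your opening WLOG, ``I may assume all leaves of $T$ are terminals (otherwise prune leaves, lowering the cost).'' Pruning can delete $u$ itself (or the edge $\{u,v\}$), and then the inequality you prove for the pruned tree says nothing about the quantity $\tilde{d}(r,u)+\min_z\tilde{d}(u,z)$ relative to the original $T$; the reduction is not cost-monotone in the direction you need. Indeed, under this paper's definition of a Steiner tree (non-terminal leaves allowed) the literal statement is false: take $V=\{r,z,u\}$, $R=\{r,z\}$, $\sigma(\{r,z\})=\sigma(\{r,u\})=1$, $\sigma(\{u,z\})=10$. Dual ascent with root $r$ yields $\tilde{w}=1$ with $\tilde{\sigma}((r,z))=0$, so $\tilde{d}(r,u)=1$ and $\tilde{d}(u,z)=1$ (via the zero-cost arc $(r,z)$), giving the claimed bound $3$; yet the Steiner tree with edges $\{r,z\},\{r,u\}$ contains $u$ and costs only $2$. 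What your argument actually establishes — correctly — is the bound for Steiner trees all of whose leaves are terminals, which is the intended reading and is all the preprocessing application requires: since every SMT has only terminal leaves (edge costs are positive), a bound exceeding a known upper bound certifies that no SMT contains $u$ (resp.\ $\{u,v\}$), so the vertex or edge may be removed. State that restriction explicitly in place of the pruning WLOG and your proof is complete.
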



Any vertex and edge, where this lower bound exceeds an upper bound can be removed from $N$. 
The upper bound can be computed using RSPH.

\paragraph*{Short Links.}
This operation uses the concept of \emph{Voronoi regions}. A Voronoi partitioning for $R$, separates the vertices of $N$ into $|R|$ disjoint sets, the Voronoi regions. For each terminal $z \in R$ we define the neighborhood $B(z)$, as those vertices $u \in V$  closer to $z$ than to any other terminal \cite{polzin_tobias_algorithms_2003}. Ties are assumed to be broken randomly. More precisely
\[
u \in B(z) \Rightarrow d_N(u, z) \leq d_N(u, z^\prime) \mbox{ for all } z^\prime \in R
\]
We say $base(u) = z$ iff $u \in B(z)$. Links are edges that connect two Voronoi regions, i.e., an edge $\{u, v\} \in E$ is a link iff $base(u) \neq base(v)$\cite{polzin_tobias_algorithms_2003}.

Any link identified by the following lemma, is a contractible edge.

\begin{lemma}\cite[Lemma~23]{polzin_tobias_algorithms_2003}
Given a terminal $z$ and the two shortest links of its Voronoi region $\{u, v\}, \{u^\prime, v^\prime\}$ such that $base(u) = base(u^\prime) = z$. $\{u, v\}$ belongs to at least one SMT if \[\sigma(\{u^\prime, v^\prime\}) \geq d_N(z, u) + \sigma(\{u, v\}) + d_N(v, base(v))
\]
\end{lemma}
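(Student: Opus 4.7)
The plan is a classical edge-swap argument. Take an arbitrary $T \in \smt(\mathcal{I})$; if $\{u, v\} \in T$ the statement is immediate, so I will assume $\{u, v\} \notin T$ and exhibit another SMT $T'$ of no greater cost that does contain $\{u, v\}$.

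First, I would single out the edge of $T$ to delete. Let $P_T$ be the unique $z$-to-$base(v)$ path in $T$; since $z \in B(z)$ while $base(v) \neq z$, this path must cross the cut $(B(z), V \setminus B(z))$ at least once. Pick any such crossing edge $\ell = \{x, y\}$ on $P_T$ with $x \in B(z)$ and $y \notin B(z)$. Then $\ell$ is a link of $B(z)$, and $\ell \neq \{u, v\}$ by the assumption $\{u, v\} \notin T$. Because $\{u, v\}$ and $\{u', v'\}$ are the two cheapest links of $B(z)$, we get $\sigma(\ell) \geq \sigma(\{u', v'\})$, and the hypothesis of the lemma then gives
\[
\sigma(\ell) \;\geq\; d_N(z, u) + \sigma(\{u, v\}) + d_N(v, base(v)).
\]

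Next, I would perform the swap. Let $P_{zu}$ be a shortest $z$-$u$ path in $N$ and $P_{v,base(v)}$ a shortest $v$-$base(v)$ path. A standard property of Voronoi regions---any shortest path from a terminal $z^\star$ to a vertex of $B(z^\star)$ remains inside $B(z^\star)$---yields $P_{zu} \subseteq B(z)$ and $P_{v,base(v)} \subseteq B(base(v))$, so these two paths cross the cut only through the single new edge $\{u, v\}$. Define
\[
G^\star \;\eqdef\; (T \setminus \{\ell\}) \cup P_{zu} \cup \{\{u, v\}\} \cup P_{v, base(v)}.
\]
Deleting $\ell$ splits $T$ into two subtrees, one containing $z$ (and $x$) and the other containing $y$ (and $base(v)$); the newly added walk $z \leadsto u \to v \leadsto base(v)$ reconnects them, so $G^\star$ is connected and spans every terminal.

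Finally, I would bound the cost and extract $T'$. By the inequality above,
\[
c_N(G^\star) \;\leq\; c_N(T) - \sigma(\ell) + d_N(z, u) + \sigma(\{u, v\}) + d_N(v, base(v)) \;\leq\; \csmt(N, R).
\]
Any minimum-cost connected subgraph of $G^\star$ that contains all terminals together with the edge $\{u, v\}$ is a Steiner tree $T'$ with $c_N(T') \leq \csmt(N, R)$, so $T'$ is an SMT containing $\{u, v\}$. The main obstacle will be the Voronoi-geometry bookkeeping: one must justify that the two inserted shortest paths remain within their respective regions (so that they introduce no additional cut-crossings that could spoil $\sigma(\ell) \geq \sigma(\{u', v'\})$), and one must pick $\ell$ on $P_T$ so that its removal genuinely separates $z$ from $base(v)$. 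Once those two points are settled, the cost accounting above is routine.
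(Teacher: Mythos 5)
The paper itself gives no proof of this lemma (it is cited verbatim from Polzin's thesis), so your proposal can only be measured against the standard argument — and it is exactly that argument, carried out correctly: placing the deleted link $\ell$ on the tree path between $z$ and $base(v)$ is the one genuinely delicate choice, since it simultaneously guarantees $\ell\in T$, hence $\ell\neq\{u,v\}$ and $\sigma(\ell)\geq\sigma(\{u',v'\})$, and that removing $\ell$ separates $z$ from $base(v)$ so the inserted walk $z\leadsto u\to v\leadsto base(v)$ restores connectivity; a spanning tree of $G^\star$ chosen to contain $\{u,v\}$ (every edge of a connected graph lies in some spanning tree, and positive edge costs make its cost at most $c_N(G^\star)\leq\csmt(N,R)$) then finishes. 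One remark: the Voronoi-geometry bookkeeping you single out as the main obstacle is not actually needed — whether $P_{zu}$ and $P_{v,base(v)}$ stay inside their regions, or cross the cut additional times, is irrelevant, because the inequality $\sigma(\ell)\geq\sigma(\{u',v'\})$ concerns only the link of $T$ that you delete, and the cost estimate uses only $c_N(P_{zu})=d_N(z,u)$, $c_N(P_{v,base(v)})=d_N(v,base(v))$, and the fact that the cost of a union of edge sets is at most the sum of their costs; similarly, your parenthetical claim that $x$ ends up in $z$'s component and $y$ in $base(v)$'s component can fail when the tree path re-enters $B(z)$, but nothing in the argument ever uses it — only that $z$ and $base(v)$ land in different components, which holds for any edge of the path.
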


\paragraph*{Nearest Vertex.}
This operation tries to find contractible edges incident to terminals. 
The idea is the following. If the shortest edge is short enough compared to the other edges, it is in an SMT.

\begin{lemma}\cite[Chapter~2.2.3]{HwangRichardsWinter92a}
Let $z$ be a terminal with degree at least 2. Let $e^\prime = \edge{z}{u}$, $e^{\prime\prime} = \edge{z}{v}$ be a shortest and second shortest edge incident to $z$. There exists at least one SMT containing $e^\prime$, if there exists a terminal $z^\prime$ such that $z^\prime \neq z$ and
\[
\sigma(e^{\prime\prime}) \geq \sigma(e^\prime) + d_N(u, z^\prime)
\]
\end{lemma}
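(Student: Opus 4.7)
The plan is to prove the lemma by an exchange argument. Starting from an arbitrary SMT $T$ that does not contain $e' = \edge{z}{u}$, I will construct a Steiner tree $T'$ containing $e'$ with $c_N(T') \leq c_N(T)$; since $T$ is an SMT, $T'$ must then itself be an SMT. Since $z$ is a terminal and a second terminal $z' \neq z$ exists, $T$ contains at least one edge incident to $z$, and any such edge is distinct from $e'$ (as $e' \notin T$) and therefore has cost at least $\sigma(e'')$ by the definition of $e''$ as the second shortest edge incident to $z$.

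First, consider the easier sub-case $u \in T$. Then $T + e'$ contains a unique cycle, consisting of $e'$ together with the unique $z$-$u$ path in $T$. Let $e^*$ be the first edge of that path starting from $z$. The graph $T' \eqdef T + e' - e^*$ is a tree spanning the vertex set of $T$, so it contains all of $R$, and its cost is $c_N(T) + \sigma(e') - \sigma(e^*) \leq c_N(T)$ since $\sigma(e') \leq \sigma(e^*)$. Notably, this sub-case does not use the hypothesis on $\sigma(e'')$ at all.

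The substantive sub-case is $u \notin T$, and here the hypothesis $\sigma(e'') \geq \sigma(e') + d_N(u, z')$ enters essentially. Let $P^*$ be a shortest $u$-$z'$ path in $N$, of cost $d_N(u, z')$, and let $y$ be the first vertex on $P^*$ (starting from $u$) that lies in $T$; such $y$ exists because $z' \in R$ is in $T$. Let $P'$ be the sub-path of $P^*$ from $u$ to $y$; its cost is at most $d_N(u, z')$, and by the choice of $y$ its internal vertices are outside $T$. Consequently, $T + e' + P'$ is connected and has a unique cycle, namely $z \to u$ via $e'$, then $u \to y$ along $P'$, and finally $y \to z$ along the unique $y$-$z$ path in $T$. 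Let $e^*$ be the first edge of that $z$-$y$ path starting at $z$; then $e^* \in T$, $e^* \neq e'$, so $\sigma(e^*) \geq \sigma(e'')$. Setting $T' \eqdef T + e' + P' - e^*$ yields a tree containing $R$ whose cost is at most
\[
c_N(T) + \sigma(e') + d_N(u, z') - \sigma(e^*) \;\leq\; c_N(T) + \sigma(e') + d_N(u, z') - \sigma(e'') \;\leq\; c_N(T),
\]
where the final inequality is exactly the hypothesis.

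The main obstacle I anticipate lies in the structural verification for the $u \notin T$ sub-case: I must confirm that $T + e' + P'$ really has a unique cycle and that the edge $e^*$ removed simultaneously lies on that cycle, is incident to $z$, and is distinct from $e'$. All three requirements follow from picking $y$ as the \emph{first} vertex of $P^*$ in $T$, which forces $P'$ to meet $T$ only at its endpoint $y$ and therefore contributes no additional cycles. Once this structural fact is in place, the cost calculation is routine, and the hypothesis $\sigma(e'') \geq \sigma(e') + d_N(u, z')$ is seen to be tight in precisely the sense that it makes the final arithmetic inequality close.
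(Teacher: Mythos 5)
The paper itself offers no proof of this lemma (it is quoted from Hwang, Richards, and Winter), so your proposal stands on its own. Your exchange argument is the right idea, and the sub-case $u \in T$ is correct. But in the main sub-case $u \notin T$ there is a genuine gap: you define $e^*$ as the first edge of the tree path from $z$ to $y$, which presupposes $y \neq z$. Nothing in your choice of $y$ (the first vertex of $P^*$ lying in $T$) rules out $y = z$: since $z$ is a terminal we have $z \in T$, and a priori a shortest $u$--$z'$ path could meet $T$ first at $z$ --- indeed its first edge could be $e'$ itself. In that degenerate case the unique cycle of $T + e' + P'$ is $e' \cup P'$ and contains no edge of $T$ whatsoever, so there is no edge incident to $z$ of cost at least $\sigma(e'')$ available to delete; removing any cycle edge either removes $e'$ (which you must keep) or merely undoes part of $P'$, leaving cost strictly above $c_N(T)$. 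Your closing claim that ``all three requirements follow from picking $y$ as the first vertex of $P^*$ in $T$'' is therefore false as written --- the choice of $y$ alone does not deliver the existence of $e^*$.

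The gap is patchable, and the patch uses the hypothesis a second time: under the stated condition, no shortest $u$--$z'$ path can pass through $z$ at all. If one did, $z$ would be an internal vertex of it (as $z \neq u$ and $z \neq z'$), so the path would contain two distinct edges incident to $z$, at least one of which, say $f$, differs from $e'$ and hence satisfies $\sigma(f) \geq \sigma(e'')$; then $d_N(u, z') \geq \sigma(f) \geq \sigma(e'') \geq \sigma(e') + d_N(u, z')$, forcing $\sigma(e') \leq 0$ and contradicting the positivity of the edge costs ($\sigma: E \to \mathbb{N}$). Hence $y \neq z$ always holds and your construction then goes through exactly as you wrote it. Alternatively, you can restructure so the corner case never arises: delete the first edge $e^*$ of the tree path from $z$ to $z'$ (it is incident to $z$, distinct from $e'$ since $e' \notin T$, hence of cost at least $\sigma(e'')$), which splits $T$ into a component containing $z$ and one containing $z'$; reconnect them by adding $e'$ together with an \emph{entire} shortest $u$--$z'$ path $P^*$, and extract from the resulting connected graph a spanning tree containing $e'$. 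The cost bound $c_N(T) - \sigma(e'') + \sigma(e') + d_N(u, z') \leq c_N(T)$ is the same, and no disjointness of $P^*$ from $T$ is needed.
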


In case the lemma fails to identify certain edges, we can extend it. 
If we can show that the condition holds for any edge that connects $v$, this is equal to showing it for $e^{\prime\prime}$:
\begin{lemma} \cite[Lemma~8]{Rehfeldt15a} 
Given the definitions of the previous lemma, there exists at least one SMT containing $e^\prime$ if:
\[
\sigma(e) \geq \sigma(e^\prime) + d_N(u, z^\prime) \quad \mbox{for all } e \in \delta_G(z) \cup \delta_G(v) \setminus \{e^\prime, e^{\prime\prime}\}
\]
\end{lemma}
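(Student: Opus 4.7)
The plan is an exchange argument: given any SMT $T$ of $\mathcal{I}$, I will either observe that $e' \in T$ already, or modify $T$ into a Steiner tree $T^*$ of no larger cost that contains $e'$. Since $T$ is optimal, $T^*$ is then also an SMT.

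First, since $z \in R$ and $z' \in R$, both $z$ and $z'$ lie in $V(T)$. Let $Q$ denote the unique path from $z$ to $z'$ inside $T$, and let $e_z$ be its first edge (the one incident to $z$). Assume $e' \notin T$; then $e_z \neq e'$, and since $e'$ and $e''$ are the shortest and second shortest edges incident to $z$, every edge incident to $z$ other than $e'$ has cost at least $\sigma(e'')$, so in particular $\sigma(e_z) \geq \sigma(e'')$. Let $P$ be a shortest $u$\hy$z'$ path in $N$, so $c_N(P) = d_N(u, z')$, and define the multigraph $H \eqdef (T \cup \{e'\} \cup P) \setminus \{e_z\}$.

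Next, I would verify that $H$ is a connected subgraph of $N$ containing every terminal. Removing $e_z$ from $T$ splits $T$ into subtrees $T_1 \ni z$ and $T_2$; by the choice of $e_z$ as the first edge on $Q$, we have $z' \in T_2$. The edge $e'$ attaches $u$ to $T_1$, and the path $P$ runs from $u$ to $z' \in T_2$, so $T_1$ and $T_2$ are reconnected through $z \to u \to \cdots \to z'$. Every vertex of $V(T)$, and hence every terminal, lies in $V(H)$. Finally, extend $\{e'\}$ (an acyclic subgraph) to a spanning tree $T^*$ of $H$; such a $T^*$ exists and contains $e'$ by construction, and it is a Steiner tree of $\mathcal{I}$ rooted anywhere in $R$.

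For the cost bound I compute
\[
c_N(T^*) \leq c_N(H) = c_N(T) + \sigma(e') + d_N(u, z') - \sigma(e_z) \leq c_N(T) + \sigma(e') + d_N(u, z') - \sigma(e''),
\]
and the hypothesis $\sigma(e'') \geq \sigma(e') + d_N(u, z')$ makes the right\hy hand side at most $c_N(T)$. Because $T$ is an SMT, $T^*$ is one as well, and it contains $e'$.

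The only delicate point I expect is the connectivity claim for $H$, which rests on choosing $e_z$ as the \emph{first} edge of $Q$ so that $z'$ lands in the component $T_2$ not containing $z$; this is what guarantees that $e'$ together with $P$ bridges the gap created by deleting $e_z$. A minor bookkeeping issue is that $P$ may share vertices or edges with $T$ (including the case $u \in V(T)$), but treating $H$ as a multiset of edges and passing to a spanning tree of the underlying graph at the end absorbs any redundant cycles without increasing cost.
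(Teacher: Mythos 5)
Your exchange machinery is sound as far as it goes (delete the first edge $e_z$ of the tree path $Q$ from $z$ to $z'$, add $e'$ together with a shortest path from $u$ to $z'$, extract a spanning tree, compare costs), and since the paper gives no proof of this lemma at all --- it is quoted from Rehfeldt --- there is no in-paper argument to compare against. However, your proof does not establish the stated lemma: in the cost estimate you invoke ``the hypothesis $\sigma(e'') \geq \sigma(e') + d_N(u, z')$'', which is the hypothesis of the \emph{previous} lemma, not of this one. The present lemma deliberately excludes $e''$ from the quantified set $\delta_G(z) \cup \delta_G(v) \setminus \{e', e''\}$, and it exists precisely for instances where $\sigma(e'') < \sigma(e') + d_N(u, z')$, as the paper's lead-in (``in case the lemma fails to identify certain edges, we can extend it'') makes explicit. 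So your chain $\sigma(e_z) \geq \sigma(e'') \geq \sigma(e') + d_N(u, z')$ is unjustified at its second step, and your argument never touches the actual hypothesis --- note that $\delta_G(v)$ appears nowhere in your proof, which should have been a warning sign. You also misidentified the delicate point: it is not the connectivity of $H$, which is fine, but the case $e_z = e''$.

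The missing idea is a case split on $e_z$. If $e_z \neq e''$, then $e_z \in \delta_G(z) \setminus \{e', e''\}$ and the hypothesis applies to $e_z$ directly, giving $\sigma(e_z) \geq \sigma(e') + d_N(u, z')$ with no detour through $\sigma(e'')$. If $e_z = e''$, your bound fails, and one must instead delete the \emph{second} edge $g$ of $Q$, the one incident to $v$: since $e'$ is not incident to $v$, we have $g \in \delta_G(v) \setminus \{e', e''\}$, so the hypothesis yields $\sigma(g) \geq \sigma(e') + d_N(u, z')$, and deleting $g$ still separates $z'$ from $z$, so your reconnection via $e'$ and the shortest $u$--$z'$ path goes through unchanged. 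Finally, be aware of the degenerate sub-case in which $Q$ consists of $e''$ alone, i.e., $z' = v$: then there is no second edge to which the hypothesis applies, and the statement as transcribed is actually violated there. For example, take $V = \{z, u, v\}$, $R = \{z, v\}$, $z' = v$, with $\sigma(\{z,u\}) = 1$, $\sigma(\{z,v\}) = 2$, $\sigma(\{u,v\}) = 10$: the condition holds (the only quantified edge satisfies $10 \geq 1 + 3$), yet the unique SMT is $\{e''\}$, which does not contain $e'$. Any complete proof must therefore treat or exclude this corner case, and no repair of your argument can avoid engaging with the $\delta_G(v)$ part of the hypothesis.
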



\end{appendix}
\end{document}